\documentclass{article}
\usepackage{bbm}
\usepackage{caption}
\usepackage{rotating}
\usepackage[maxfloats=100]{morefloats}
\usepackage{listings}
\usepackage{booktabs}
\usepackage{xcolor}
\usepackage[title]{appendix}
\usepackage{makecell} 
\usepackage{multirow}

\usepackage[symbol]{footmisc}

\usepackage{longtable}
\usepackage{subcaption}
\usepackage[numbers]{natbib}
\usepackage{geometry}

\usepackage{verbatim}
\usepackage{xcolor}
\usepackage{graphicx}
\usepackage{enumitem}
\usepackage{comment}
\usepackage{subfiles}

\usepackage{todonotes}
\usepackage{tikz-qtree,tikz-qtree-compat}
\usepackage{scrextend}
\usepackage{framed}
\usepackage{placeins}
\usepackage{setspace}
 \usepackage[ruled,vlined]{algorithm2e}
\usepackage{mathtools,collcell,eqparbox}

\usepackage{multicol}
\usepackage{pgfplots}
\usepackage{tikz}
\usepackage{listings}
\usepackage{courier}
\usepackage{url}
\usepackage{color}
\usepackage[title]{appendix}
\usepackage{authblk}

\usepackage{amssymb}
\usepackage{amsthm}
\usepackage{ dsfont }
\usepackage{amsmath}

\usepackage[pagebackref = true]{hyperref}
\hypersetup{
  colorlinks = true,
  linkcolor = teal,
  anchorcolor = teal,
  citecolor = teal,
  filecolor = teal,
  urlcolor = teal
  }

\pgfplotsset{compat=1.15}
\numberwithin{equation}{section}

\newcounter{BMatrix}

\newcommand{\setmaxwd}[1]{%
 \eqmakebox[BM-\theBMatrix][\BMalign]{$#1$}%
}
\MHInternalSyntaxOn

\MHInternalSyntaxOff
\makeatother

\newtheorem{theorem}{Theorem}
\newtheorem{assumption}{Assumption}

\newtheorem{proposition}{Proposition}[section]

\newtheorem{lemma}{Lemma}

\theoremstyle{definition}
\newtheorem{definition}{Definition}

\theoremstyle{remark}
\newtheorem{remark}{Remark}
\newtheorem*{remark*}{Remark}

\DeclareMathOperator*{\esssup}{ess sup}
\DeclareMathOperator*{\essinf}{ess inf}

\newcommand{\1}{\mathds{1}}

\newcommand{\A}{\mathbb{A}}
\newcommand{\W}{\mathbb{W}}
\newcommand{\R}{\mathbb{R}}

\newcommand{\X}{\mathbb{X}}

\newcommand{\Z}{\mathbb{Z}}

\newcommand{\E}{\mathbb{E}}

\newcommand{\veps}{\varepsilon}

\newcommand{\spn}{\mathrm{span}}

\newcommand{\da}{\downarrow}
\newcommand{\ra}{\rightarrow}

\newcommand{\cd}{\cdot}
\newcommand{\ds}{\dots}

\newcommand{\mrm}[1]{\mathrm{#1}}

\newcommand{\diam}{\mathrm{diam}}

\newcommand{\PiH}{\Pi_{\mathrm{H}}}

\newcommand{\cA}{\mathcal{A}}

\newcommand{\cF}{\mathcal{F}}

\newcommand{\cH}{\mathcal{H}}

\newcommand{\cK}{\mathcal{K}}

\newcommand{\cM}{\mathcal{M}}
\newcommand{\cN}{\mathcal{N}}

\newcommand{\cP}{\mathcal{P}}

\newcommand{\cT}{\mathcal{T}}

\newcommand{\cW}{\mathcal{W}}
\newcommand{\cX}{\mathcal{X}}

\newcommand{\cZ}{\mathcal{Z}}

\newcommand{\leb}{\mathrm{Leb}}

\newcommand{\TV}[1]{\left\|#1\right\|_\mathrm{TV}}

\newcommand{\set}[1]{\left\{{#1}\right\}}

\newcommand{\norm}[1]{\left\|#1\right\|}

\newcommand{\norminf}[1]{\left\|#1\right\|_{\infty}}
\newcommand{\normTV}[1]{\left\|#1\right\|_{\mathrm{TV}}}
\newcommand{\abs}[1]{\left|#1\right|}
\newcommand{\sqbk}[1]{\left[ #1 \right]}
\newcommand{\sqbkcond}[2]{\left[ #1 \middle| #2 \right]}
\newcommand{\angbk}[1]{\left\langle #1 \right\rangle}
\newcommand{\crbk}[1]{\left( #1 \right)}

\newcommand{\bmx}[1]{\begin{bmatrix} #1 \end{bmatrix}}

\newcommand{\bd}[1]{\mathbf{#1}}

\newcommand{\argmax}[1]{\underset{#1}{\operatorname{arg} \operatorname{max}}\;}

\title{Q-Measure-Learning for Continuous State RL: \\Efficient Implementation and Convergence }
\author[1]{Shengbo Wang}
\affil[1]{Daniel J. Epstein Department of Industrial and Systems Engineering\\
University of Southern California}

\date{February 2026}

\begin{document}
\maketitle
\begin{abstract}
We study reinforcement learning in infinite-horizon discounted Markov decision processes with continuous state spaces, where data are generated online from a single trajectory under a Markovian behavior policy. To avoid maintaining an infinite-dimensional, function-valued estimate, we propose the novel \emph{Q-Measure-Learning}, which learns a signed empirical measure supported on visited state-action pairs and reconstructs an action-value estimate via kernel integration. The method jointly estimates the stationary distribution of the behavior chain and the Q-measure through coupled stochastic approximation, leading to an efficient weight-based implementation with $O(n)$ memory and $O(n)$ computation cost per iteration. Under uniform ergodicity of the behavior chain, we prove almost sure sup-norm convergence of the induced Q-function to the fixed point of a kernel-smoothed Bellman operator. We also bound the approximation error between this limit and the optimal $Q^*$ as a function of the kernel bandwidth. To assess the performance of our proposed algorithm, we conduct RL experiments in a two-item inventory control setting.
\end{abstract}

\section{Introduction}\label{sec:intro}
We study reinforcement learning (RL) for sequential decision-making arising in engineering settings---such as inventory and revenue management, finance, and learning-based control of physical and robotic systems---where the system state space is most naturally continuous. Even when the primitive state is discrete (e.g., queue lengths in service and manufacturing systems), it is often convenient and effective to embed it into a real vector space, thereby inducing a useful notion of local smoothness for generalization. Motivated by this, we consider the RL of infinite-horizon discounted Markov decision processes (MDPs) with continuous state space $\X\subset\R^{d_\X}$ and continuous or finite action space $\A$. We focus on the single-trajectory setting in which the data stream $\set{(R_{t},X_t,A_t):t\geq 1}$ is generated online by a Markovian behavior policy $\pi_b$. Under standard regularity assumptions, the optimal action-value function $Q^*$ is the unique fixed point of the Bellman optimality operator, which is a contraction in the sup norm; this property underlies the celebrated Q-learning algorithm \cite{watkins1992q}. In continuous state space settings, however, $Q^*$ is an infinite-dimensional object, so tabular Q-learning is not directly applicable without discretization or function approximation. Establishing efficient algorithms and convergence guarantees is therefore subtle, particularly when data is generated by a single trajectory.

At the same time, we observe that simulation under a behavior policy induces a Markov chain $\set{Z_n:n\ge 0}$ on $\Z=\X\times\A$, and many quantities of interest can be consistently estimated by integrals with respect to the empirical measure of the visited locations. This motivates an alternative design principle: rather than approximating $Q^*$ directly in function space, we track an empirical process---the empirical Q-measure---and recover an estimate of $Q^*$ via an integral transform.

Concretely, suppose there exists a finite signed measure $\nu^*$ on $\Z$ and a (hypothetical) \textit{known} smoothing kernel $K:\Z\times\Z\to \R_{>0}$ such that
\begin{equation}\label{eqn:intro_kernel_q}
Q^*(z)\approx q^*(z):=\int_{\Z}K(z,u)\nu^*(du).
\end{equation}
Moreover, suppose $\nu^*$ can be approximated by a reweighted empirical process $\nu_n = \sum_{k=1}^n W_{n,k}\delta_{Z_{n-1}}.$ Then, approximating $Q^*$ reduces to designing an algorithm that iteratively learns $\nu^*$ by updating the weights $\set{W_{n,k}:k=1,\ds,n}$.

Of course, the choice of $K$ is non-trivial, and enabling an efficient implementation requires novel engineering design. We observe that it is hard to specify a \textit{fully known} kernel $K$ and a measure $\nu^*$ that satisfy the above representation. In our setting, however, there is a natural choice of $K$ whose normalization depends only on the stationary distribution $\mu_b$ of the behavior chain $Z$. Since $\mu_b$ can be consistently estimated from the trajectory via its empirical measure, our algorithm updates estimators of $\mu_b$ and $\nu^*$ jointly using stochastic approximation, and then recovers an estimate of $Q^*$ through kernel integration \eqref{eqn:intro_kernel_q}.

This design yields two immediate advantages. First, it enables the use of convergence theory for empirical processes associated with ergodic Markov chains to establish convergence of the empirical Q-measure and, consequently, of the induced Q-function estimate. Second, it leads to memory-efficient implementations: at iteration $n$, it suffices to maintain the historical locations $\set{Z_0,\ds,Z_n}$ together with the associated weights $\set{W_{n,0},\ds,W_{n,n}}$. A further benefit emerges from the efficient algorithmic structure developed in Section~\ref{section:alg_and_implementation}. We design the updates so that, at iteration $n$, the weight vector $\set{W_{n,0},\ds,W_{n,n}}$ can be updated in $O(n)$ time, resulting in an overall procedure that uses $O(n)$ memory and incurs $O(n^2)$ total computation after $n$ iterations.

In this paper, our contributions are organized and presented as follows: 
\begin{itemize}[leftmargin=1.5em]
\item We introduce \emph{Q-Measure-Learning}, an online algorithm that updates a signed measure $\nu_n$ and an empirical reference measure $\mu_n$, reconstructing $q_n$ through a normalized kernel integral.
\item In Section~\ref{section:alg_and_implementation}, we provide an efficient weight-based implementation whose per-iteration computation and memory cost are $O(n)$ at iteration $n$.
\item In Section~\ref{section:convergence}, under uniform ergodicity of the behavior chain, we prove almost sure convergence $q_n\ra q^*$ in sup norm via a Banach-space ODE method, where $q^*$ is the unique fixed-point of a kernel-smoothed Bellman operator.
\item In Section~\ref{section:approx_err}, we quantify the approximation error $\|Q^*-q^*\|$ and show that it can be made arbitrarily small by tuning the choice of the smoothing kernel.
\end{itemize}

\subsection{Related work}

Classical convergence theory for TD methods and Q-learning is well established in finite MDPs, where stochastic approximation and ODE arguments yield almost sure convergence under diminishing stepsizes and sufficient exploration \citep{watkins1992q,jaakkola1993convergence,tsitsiklis1994asynchronous,borkar2000ode}. Recent work establishes finite-sample guarantees for Q-learning, both under access to a generative model and in the single-trajectory setting \citep{wainwright2019stochastic,chen2022finite,li2024ql}. In continuous spaces, a standard approach is to restrict to a function class and analyze the induced approximate/projected Bellman dynamics; early analyses show that convergence can be sensitive to the approximation architecture \citep{tsitsiklis1997td}, while non-expansive/averaging approximations provide a principled route to stability \citep{gordon1995stable,stachurski2008continuous}. Recent work gives finite-iteration, instance-dependent rates for Banach-space stochastic approximation (applicable to Q-learning) \citep{mou2022optimal}, but the resulting methods typically require computing infinite-dimensional objects.

Batch RL methods combine Bellman backups with supervised learning, with representative finite-sample analyses for fitted value iteration and related schemes \citep{ernst2005tree,munos2008fitted}. Least-squares methods provide an alternative batch viewpoint in linear architectures: LSTD solves projected Bellman equations via normal equations for policy evaluation \citep{bradtke1996lstd,boyan2002lstd}, and extensions to action-values and policy iteration include LSQL/LSPI-style algorithms \citep{lagoudakis2002lsmethods,lagoudakis2003lspi}.

Kernel methods smooth rewards/transitions and induce smoothed Bellman operators. KBRL constructs a sample-based empirical MDP via kernel smoothing and then solves Bellman equations on the induced representative set \citep{ormoneit2002kbrl,ormoneit2002avg}; this offline, model-based approach typically has a computation cost that does not scale well with transition data size \citep{jong2006kernel,barreto2016practical}. Related kernelized approaches, such as Gaussian-process RL and kernelized value approximation, also rely on global kernel representations and can incur substantial matrix computation \citep{engel2005gp,taylor2009kernelized}. 

Beyond ``global learning'' approaches, complementary pointwise methods develop multilevel Monte Carlo schemes for Bellman equations that estimate $Q^*$ at a specified state--action pair without constructing a global approximation of $Q^*$ \citep{beck2025nonlinear,meunier2025efficientlearningentropyregularizedmarkov}. These works provide polynomial sample-complexity guarantees in generative-model settings.

In contrast, our method combines the low per-iteration cost and online simplicity of Q-learning with the stability and convergence guarantees typical of kernel-smoothed empirical MDP approaches: it performs a single TD-style update per step while retaining an almost sure sup-norm convergence guarantee to a stationary-normalized, kernel-smoothed Bellman fixed point.
\section{Notations and Assumptions}

Let $\X\subset\R^{d_\X}$ be nonempty and compact, and let $\A$ be either a nonempty compact subset of $\R^{d_\A}$ or a finite set (with the $0$--$1$ metric). While our convergence results extend to compact Polish spaces, the approximation analysis in Section~\ref{section:approx_err} would then require universal kernels on such spaces. To keep the exposition focused on RL, we restrict to Euclidean state spaces. Let $\cX,\cA$ be the Borel $\sigma$-algebra on $\X$ and $\A$ respectively. Define $(\Z,\cZ):=(\X\times\A,\cX\times\cA)$. Let $\cP(\cZ)$ and $\cM(\cZ)$ denote the set of probability measures and signed finite measures on $(\Z,\cZ)$, respectively, and let $C(\Z)$ denote the set of continuous functions $q:\Z\to\R$. We equip $\cM(\cZ)$ and $C(\Z)$ with the total variation norm $\TV{\cd}$ and the sup norm $\norm{\cd}$, respectively.

Fix $\gamma\in(0,1)$. Let $P(\cd| x,a)$ be a Borel controlled transition kernel. Let $\pi_b(\cd|  x)$ be a Markovian behavior policy. Let $(\Omega,\cF,P)$ be a probability space where $\cF$ contains all $P$ null sets supporting a controlled Markov chain $\set{Z_n:n\geq 0}$ so that under $P$
\[
Z_n := (X_n,A_n),\qquad
A_n\sim \pi_b(\cd|  X_n),\qquad
X_{n+1}\sim P(\cd|  X_n,A_n).
\]
Let $P_b$ denote the transition kernel of the Markov chain $\set{Z_n:n\geq 0}$. 

We also consider a sequence of i.i.d. random functions $\set{F_n(\omega,\cd)\in C(\Z):n\ge1}$ on $(\Omega,\cF,P)$ that are independent of $\set{X_n,A_n:n\geq 0}$.  Throughout the paper, we consider the filtration $$\cF_n = \sigma(X_0,A_0,\set{F_k,X_k,A_k:1\leq k\leq n}).$$

\begin{assumption}[Bounded and continuous reward]\label{assump:rwd}
For all $n\ge 0$, $\norm{F_n}\le 1$ a.s. Define the reward $R_{n+1} = F_{n+1}(X_n,A_n,X_{n+1})$ and $r(x,a) = E\sqbk{F_{1}(x,a,X_1)}$. Then $r\in C(\Z)$. 
\end{assumption}

\begin{assumption}[Continuous dynamics]\label{assump:dynamics}
Assume that there exists a probability space $(\W,\cW,\psi)$ where $\W\subset\R^{d_\W}$ is equipped with the Borel $\sigma$-field and a continuous function $f:\Z\times \W\ra \X$ such that for all $B\in\cX$ and $z\in\Z$,
\[
P(B|z)=\int_\W \1\set{f(z,w)\in B}\psi(dw).
\]
\end{assumption}

Under these assumptions, we consider the RL problem of maximizing the infinite-horizon $\gamma$-discounted expected reward:
$$
\sup_{\pi\in\PiH}E^\pi\sqbkcond{\sum_{k=0}^\infty\gamma^k R_{k+1}}{X_0=x},
$$
where $\PiH$ is the class of history-dependent randomized policies, and the expectation $E^{\pi}$ is induced by the policy $\pi$ and the transition kernel $P$.

To achieve RL in this setting, we introduce the Bellman equation for the Q-function. In the classical stochastic control literature, the Bellman optimality operator $\cT$ is defined for $Q\in C(\Z)$ by
\[
\cT[Q](z)
:=
r(z)
+\gamma \int_{\X}\sup_{a'\in\A} Q(x',a')P(dx'| z).
\]
Under Assumptions~\ref{assump:rwd}--\ref{assump:dynamics} and compactness of $\A$, the operator $\cT$ is well-defined and maps $C(\Z)$ into itself--a standard consequence of the maximum theorem and the Feller property; see, e.g., \citet[Assumption~2.1]{stachurski2008continuous}. Since in our setting the reward is bounded by 1, we define the clipped Bellman operator $\overline\cT:C(\Z)\to C(\Z)$ by
\begin{equation}\label{eqn:def_clipped_Bellman}
\overline\cT[Q](x,a)
:= r(x,a)+\gamma\int_{\X}\max_{a'\in\A}\Pi\!\big(Q(x',a')\big)P(dx'| x,a),
\qquad (x,a)\in\Z,
\end{equation}
where $\Pi(x):= \max\set{\min\set{x,\frac{1}{1-\gamma}} ,-\frac{1}{1-\gamma}}$. 

Since both $\cT$ and $\overline\cT$ are a $\gamma$-contractions on $C(\Z)$ and $\norm{r}\le 1$, its unique fixed point $Q^*$ satisfies $\norm{Q^*}\le \frac{1}{1-\gamma}$, hence $\Pi(Q^*)=Q^*$ and therefore $\cT$ and $\overline\cT$ share the same unique fixed point.

Moreover, it is well known that under Assumptions \ref{assump:rwd} and \ref{assump:dynamics}, an optimal deterministic Markovian policy can be obtained by taking greedy action with respect to $Q^*$ (see, e.g. \citet{stachurski2008continuous}). Consequently, learning $Q^*$ from data generated by the behavior policy is of central interest.

\subsection{Q-Measure}
Let $\kappa:\Z\times\Z\ra\R_{>0}$ be a continuous kernel that is lower bounded by $\kappa_\wedge >0$. We will specify $\kappa$ in Section \ref{section:gaussian_kernel}. For any probability measure $\mu\in\cP(\cZ)$, we define the linear operator $\cK_\mu:C(\Z)\ra C(\Z)$ by
\[
\cK_{\mu}[q](z) := \frac{\int_\Z\kappa(z,u)q(u)\mu(du)}{\int_\Z\kappa(z,u)\mu(du)}.
\]
Moreover, for $\mu\in\cP(\cZ)$, we define the linear operator $\Phi_\mu:\cM(\cZ)\ra C(\Z)$ by 
$$\Phi_\mu[\nu](z) := \frac{\int_\Z \kappa(z,u)\nu(du)}{\int_\Z \kappa(z,u)\mu(du)}$$
for all $z\in \Z$. We note that the mapping $\nu\ra \Phi_\mu[\nu]$ will serve as the reconstruction map \eqref{eqn:intro_kernel_q} that produces an estimate of $Q^*$ from $\nu$.

\begin{lemma}\label{lemma:fixed_point_qstar_bound}
For any probability measure $\mu$ on $\cZ$ and any $f,g\in C(\Z)$,
\[
\norm {\cK_\mu [f] - \cK_\mu [g]} \le \norm {f-g}.
\]
Therefore, the smoothed and clipped Bellman operator $\overline\cT_\mu := \cK_{\mu}\circ\overline\cT:C(\Z)\ra C(\Z)$ is a $\gamma$-contraction in $\norm{\cd}$. Hence, it admits a unique fixed point $q_\mu^*\in C(\Z)$ such that $\norm{q^*_\mu}\leq \frac{1}{1-\gamma} $. 
\end{lemma}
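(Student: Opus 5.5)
The plan is to prove the three claims in order: nonexpansiveness of $\cK_\mu$, contraction of $\overline\cT_\mu$, and the bound on the fixed point.

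\textbf{Step 1: $\cK_\mu$ is a nonexpansive positive averaging operator.} Fix $z\in\Z$ and define the probability measure
\[
\rho_z(du):=\frac{\kappa(z,u)\mu(du)}{\int_\Z\kappa(z,v)\mu(dv)}.
\]
It is well defined because $\kappa\ge\kappa_\wedge>0$ gives a denominator of at least $\kappa_\wedge$. Also, $\kappa$ is continuous on the compact set $\Z\times\Z$, so it is bounded and all integrals are finite. By linearity, $\cK_\mu[f](z)-\cK_\mu[g](z)=\int_\Z (f-g)(u)\rho_z(du)$, and therefore
\[
\abs{\cK_\mu[f](z)-\cK_\mu[g](z)}\le\norm{f-g}.
\]
Taking the supremum over $z$ gives the first claim.

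I should also check that $\cK_\mu$ maps $C(\Z)$ into $C(\Z)$, since this is implicit in the statement. Because $\kappa$ is uniformly continuous on the compact set $\Z\times\Z$, both the numerator $z\mapsto\int\kappa(z,u)q(u)\mu(du)$ and the denominator are continuous in $z$. This follows from uniform continuity, or from dominated convergence using the bound $\norm{\kappa}\norm{q}$. The denominator is bounded below by $\kappa_\wedge$, so the quotient is continuous. I expect this continuity check to be the only point needing care; everything else is routine.

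\textbf{Step 2: $\overline\cT_\mu$ is a $\gamma$-contraction.} As stated in the preceding section, $\overline\cT$ maps $C(\Z)$ into itself and is a $\gamma$-contraction. To confirm this directly, note that $\Pi$ is $1$-Lipschitz, $\max_{a'}$ is nonexpansive in the sup norm, and integrating against $P(dx'|z)$ preserves the bound. Hence $\norm{\overline\cT[Q]-\overline\cT[Q']}\le\gamma\norm{Q-Q'}$. Combining this with Step 1 yields
\[
\norm{\overline\cT_\mu[Q]-\overline\cT_\mu[Q']}\le\norm{\overline\cT[Q]-\overline\cT[Q']}\le\gamma\norm{Q-Q'}.
\]
Since $C(\Z)$ with the sup norm is a Banach space ($\Z$ is compact), the Banach fixed point theorem gives a unique fixed point $q^*_\mu\in C(\Z)$.

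\textbf{Step 3: the bound on $q^*_\mu$.} Clipping gives $\abs{\Pi(\cdot)}\le\frac1{1-\gamma}$, and Assumption~\ref{assump:rwd} gives $\norm r\le1$. Therefore, for every $Q$,
\[
\norm{\overline\cT[Q]}\le 1+\frac{\gamma}{1-\gamma}=\frac1{1-\gamma}.
\]
Because each $\rho_z$ is a probability measure, $\norm{\cK_\mu[h]}\le\norm h$, so $\norm{\overline\cT_\mu[Q]}\le\frac1{1-\gamma}$ for every $Q$. Applying this with $Q=q^*_\mu$ gives $\norm{q^*_\mu}=\norm{\overline\cT_\mu[q^*_\mu]}\le\frac1{1-\gamma}$.
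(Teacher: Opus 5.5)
Your proof is correct and follows the paper's argument: you bound $\cK_\mu[f](z)-\cK_\mu[g](z)$ by averaging $|f-g|$ against the normalized kernel weights, compose with the $\gamma$-contraction of $\overline\cT$, and apply the Banach fixed-point theorem on $(C(\Z),\norm{\cd})$. Two of your steps go beyond the paper's proof, which asserts these facts without checking them: the continuity check that $\cK_\mu$ maps $C(\Z)$ into itself, and the explicit derivation of $\norm{q^*_\mu}\le\frac{1}{1-\gamma}$ from $\norm{\overline\cT[Q]}\le\frac{1}{1-\gamma}$ and the averaging property of $\cK_\mu$.
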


\begin{proof}
Fix $z\in\cZ$. Since $\kappa > 0$, we have
\begin{align*}
\abs{\cK_\mu [f](z)-\cK_\mu [g](z)}
\leq
\frac{\int \abs{f(u)-g(u)}\kappa(z,u)\mu(du)}{\abs{\int \kappa(z,u)\mu(du)}}\le
\norm {f-g}.
\end{align*}
Taking $\sup_{z\in\cZ}$ yields the first claim.

Since $\overline\cT$ is a $\gamma$-contraction in $\norm{\cd}$, we have that for all $q,q'\in C(\Z)$,
\begin{align*}\norm{\overline\cT_\mu[q] - \overline\cT_\mu[q']} &= \norm{\cK_\mu[\overline\cT[q]] - \cK_\mu[\overline\cT[q']]} \\
&\leq \norm{\overline\cT[q] - \overline\cT[q']}\\
&\leq \gamma \norm{q-q'};
\end{align*}
i.e. $\overline\cT_\mu$ is a $\gamma$-contraction in $\norm{\cd}$. Since $(C(\Z),\norm{\cd})$ is a Banach space, the Banach fixed-point theorem implies the existence and uniqueness of the fixed point $q_\mu^*$.
\end{proof}

\begin{definition}[The Q-Measure] We define the Q-measure induced by the reference measure $\mu\in\cP(\cZ)$ as 
\begin{equation}\label{eqn:def_Q_measure}
    \nu_\mu^*(A):=\int_{A} \overline\cT [q^*_\mu](z)\mu(dz)
\end{equation}
for all $A\in\cZ$. Note that $\nu_\mu^*\in\cM(\cZ)$ is a finite signed measure.
\end{definition}

The key observation that motivates the construction of our algorithm is that we can represent $q^*_\mu$ using $\nu_\mu^*$
\begin{equation}\label{eqn:nustar_give_qstar}
    \begin{aligned}
        q^*_\mu(z)= \cK_\mu[\overline\cT[q^*_\mu]](z)= \frac{\int_\Z\kappa(z,u)\nu_\mu^*(du)}{\int_{\Z} \kappa(z,u)\mu(du)}= \Phi_\mu[\nu_\mu^*].
    \end{aligned}
\end{equation}
Therefore, if we can learn $\nu_\mu^*$, then we can recover $q_\mu^*$ by applying $\Phi_\mu$; i.e. smooth and normalize $\nu_\mu^*$ by $\kappa$. 

Moreover, when $\cK_\mu$ is close to the identity operator, the fixed point $q^*_\mu$ of
$\overline\cT_\mu=\cK_\mu\circ \overline\cT$ should be close to the fixed point $Q^*$ of $\overline\cT$.
We quantify this approximation error in Section~\ref{section:approx_err}.

\section{Algorithm and Efficient Weight-Based Implementation}\label{section:alg_and_implementation}

The design of our algorithm exploits the ergodicity of the Markov chain $Z$ induced by the behavior policy. In particular, we impose the following assumption.

\begin{assumption}[Uniform ergodicity of the behavior chain]\label{assump:unif_ergodicity}
$(Z_n)_{n\ge 0}$ is uniformly ergodic: there exist $c <\infty $, $\rho\in(0,1)$, and a unique invariant probability measure $\mu_b$ on $\Z$ such that for all $n\ge 0$,
\[
\sup_{z\in\Z}\normTV{P_b^n(z,\cd)-\mu_b(\cd)} \le  c\rho^n.
\]
\end{assumption}

\subsection{Q-Measure-Learning}

We first provide a concrete and intuitive description of our Q-Measure-Learning algorithm. 

We will maintain a probability measure $\mu_n$ and a signed finite measure $\nu_n$ throughout iterations of the algorithm. Initialize $\mu_0:=\delta_{Z_0}$ and $\nu_0:=0$. For $n\ge 0$, define the current iterate approximation for the $q^*$-function as 
\begin{equation}\label{eqn:compute_q}
q_n(z) := \Phi_{\mu_n}[\nu_n](z) = \frac{\int \kappa(z,u)\nu_n(du)}{\int \kappa(z,u)\mu_n(du)},
\end{equation}
$\forall z\in\Z$. Moreover, we choose the stepsize so that the Robbins–Monro condition holds. 
\begin{assumption}\label{assump:stepsize}
    We choose $\set{\alpha_n:n\geq 1}$ s.t. $\alpha_n$ is non-decreasing, $\sum_{n=1}^\infty\alpha_n = \infty$, and $\sum_{n=1}^\infty\alpha_n^2 < \infty$. Moreover, we use $\beta_n = 1/(n+1)$. 
\end{assumption}

Then, conceptually, our algorithm proceeds as follows. 
\begin{enumerate}
\item Sample $(X_{n+1},R_{n+1},A_{n+1})$. 
\item Compute the TD target:
\begin{equation}\label{eqn:compute_y}
Y_{n+1}
:=
R_{n+1} + \gamma \sup_{a\in\A}\Pi\crbk{q_n(X_{n+1},a)}\in\R.
\end{equation}

\item Update $\nu_n$ via stochastic approximation iterate: 
\begin{equation}\label{eqn:nu_update}
\nu_{n+1} := (1-\alpha_{n+1})\nu_n + \alpha_{n+1}Y_{n+1}\delta_{Z_n}.
\end{equation}

\item Update the empirical stationary distribution $\mu_n$:
\begin{equation}\label{eqn:mu_update}
\mu_{n+1} := (1-\beta_{n+1})\mu_n + \beta_{n+1}\delta_{Z_{n+1}}.
\end{equation}

\end{enumerate}
Note that under Assumption \ref{assump:stepsize} and the update \eqref{eqn:mu_update}, $\mu_n$ becomes the empirical measure of $\mu_b$ sampled by the Markov chain $\set{Z_n:n\geq 0}$. 

The update rule \eqref{eqn:nu_update} closely resembles the classical Q-learning algorithm \citep{watkins1992q}. However, instead of directly updating a function value, it augments the empirical measure by assigning weight to the current sample, with magnitude proportional to the one-step Bellman target. Furthermore, the algorithm is constructed so that $\mu_n$, $\nu_n$, $q_n$, and $Y_n$ are all $\cF_n$-measurable.

\subsection{Weight Representation}

\begin{algorithm}[t]
\DontPrintSemicolon
\caption{Q-Measure-Learning for finite $\A$. }\label{alg:Q_meas_learning}
\KwIn{Bandwidth $\sigma>0$, discount factor $\gamma\in(0,1)$, stepsizes $\set{\alpha_n,\beta_n:n\ge 1}$, initial state $X_0$.}

\textbf{Initialize:} $\text{Traj}:=[X_0,A_0]$. \;
Set $\bd{u}\gets [1]$;  $\bd{W}\gets[ ] $.\;

\BlankLine
\For{$n=0,1,2,\dots$}{
Generate $[R_{n+1}, X_{n+1},A_{n+1}]$ and append to the trajectory $\text{Traj}$. \;
    \For{$a\in\A$}{
        $$q(a) \gets
        \dfrac{\sum_{k=1}^{n} \bd W_{k}\kappa((X_{n+1},a),Z_{k-1})}
              {\sum_{k=0}^{n} \bd u_{k}\kappa((X_{n+1},a),Z_{k})}$$
    }
    Compute $Y \gets R_{n+1} + \gamma \max_{a\in\A}\Pi\!\big(q(a)\big)$. \;

    Rescale previous weights: $\bd u \gets (1-\beta_{n+1})\bd u$ and 
        $\bd W \gets (1-\alpha_{n+1})\bd W$.\;
    Append new weights: $\bd u$ appends $\beta_{n+1}$ and $\bd W$ appends $\alpha_{n+1}Y$. 
}
\end{algorithm}

In this section, we demonstrate that the previous abstract version of the Q-Measure-Learning algorithm can be efficiently implemented. Specifically,  we maintain the visited support points $\{Z_0,\dots,Z_{n}\}$ and weights
$\set{u_{n,k}:k=0,\ds,n}$ and $\{W_{n,k}:k=1,\ds ,n\}$ such that
\begin{equation}\label{eqn:weights_rep_mu_nu}
\mu_n = \sum_{k=0}^{n} u_{n,k}\delta_{Z_{k}},
\qquad
\nu_n = \sum_{k=1}^{n} W_{n,k}\delta_{Z_{k-1}}.
\end{equation}
Then, the evaluation for $q_n(X_{n+1},a)$ is just the ratio
$$\begin{aligned}
q_n(X_{n+1},a) = \Phi_{\mu_n}[\nu_n](X_{n+1},a)
= \frac{\sum_{k=1}^{n} W_{n,k}\kappa((X_{n+1},a),Z_{k-1})}
{\sum_{k=0}^{n} u_{n,k}\kappa((X_{n+1},a),Z_{k})}.
\end{aligned}$$

Note that for each $a\in\A$, computing $q_n(X_{n+1},a)$ only need to evaluate $\kappa$ for $n+1$ times and compute two sums. Treating multiplying and summing two numbers and evaluating $\kappa$ as a single operation, evaluating $q_n(X_{n+1},a)$ needs $O(n)$ operations. Hence, if $\A$ is finite, taking maximum and compute $Y_{n+1} = R_{n+1} + \gamma \max_{a\in\A}\Pi\big(q_n(X_{n+1},a)\big)$
can be done with $O(|\A|n)$ operations. 

On the other hand, if $\A$ is continuous, one could apply gradient ascent for $k$ iteration on the reward $q_n(X_{n+1},a_j)$, $j=1,\ds,k$. Then, the maximization procedure will take $O(d_\A   nk)$ CPU time at iteration $n$. 

Next, to update the weights for $\mu_n$ and $\nu_n$, we plug \eqref{eqn:weights_rep_mu_nu} into the measure recursions \eqref{eqn:mu_update} and \eqref{eqn:nu_update}
gives the following weight recursions:
\begin{align*}
u_{n+1,k} &= (1-\beta_{n+1})u_{n,k},\text{ for } k=0,\dots,n; &u_{n+1,n+1} &= \beta_{n+1}. \\
W_{n+1,k} &= (1-\alpha_{n+1})W_{n,k},\text{ for } k=1,\dots,n; &W_{n+1,n+1} &= \alpha_{n+1}Y_{n+1}. 
\end{align*}
This leads to the efficient implementation as described in Algorithm \ref{alg:Q_meas_learning} (we only present the finite action version). In particular, we note that running Algorithm \ref{alg:Q_meas_learning} until iteration $m$, we are only maintaining the variables Traj, $\bd u$ and $\bd{W}$. So, the memory cost is $O(m)$. On the other hand, at the $n$th iterate, the number of operations is $O(|\A|n)$ for finite action settings and $O(d_\A nk)$ for continuous action settings, where the max is approximated by $k$-step gradient ascent. Hence, the total CPU time scales as $O(|\A|m^2)$ and $O(d_\A k m^2)$, respectively.

\section{Convergence}\label{section:convergence}

In this section, we establish the convergence of Q-measure-learning. Specifically, we show that the estimate $q_n$ reconstructed from $\nu_n$ converges to the smooth optimal Q-function $q_{\mu_b}^*$ almost surely (a.s.) in the sup norm. We also show that $\nu_n$ converges to $\nu_{\mu_b}^*$ a.s. in a metric induced by the kernel $\kappa$, which we specify first.

\subsection{Normalized Gaussian-Kernel Metric}\label{section:gaussian_kernel}

Throughout, $\abs{\cd}$ denotes the Euclidean norm on $\R^d$. In this paper, we consider two cases for $\A$, which lead to slightly different choices of the kernel.
\begin{enumerate}[label=(\roman*), leftmargin=2em]
\item If $\A \subset\R^{d_\A}$ is nonempty compact, then $\Z\subset\R^{d_\X + d_\A}$. In this case, we use $\kappa_\sigma(z,u)=\exp\!\big(-\abs{z-u}^2/(2\sigma^2)\big)$.
\item If $\A = \set{a_1,\ds,a_m}$ is finite, for $z=(x,a_i)$ and $u=(y,a_j)$,
\[
\kappa_\sigma(z,u):=\exp\crbk{-\frac{\abs{x-y}^2 + \1_{i\neq j}}{2\sigma^2}}.
\]
\end{enumerate}
We will omit the subscript $\sigma$ when clear.

\begin{definition}\label{def:metric}
For $\kappa = \kappa_\sigma$ as defined earlier, we consider kernel mean distance on $\cM(\cZ)$ by $d(\nu,\nu') := \sup_{z\in\Z}\abs{\int_{\Z} \kappa(z,u)\crbk{\nu-\nu'}(du)}.$ Since $\kappa$ is not normalized, define the stationary-normalized distance
\[
D_{\mu_b}(\nu,\nu')
:=
\sup_{z\in\Z}
\abs{
\frac{\int_{\Z} \kappa(z,u)\crbk{\nu-\nu'}(du)}{\int_{\Z}\kappa(z,u)\mu_b(du)}
}.
\]
Equivalently, $D_{\mu_b}(\nu,\nu')=\norm{\Phi_{\mu_b}(\nu-\nu')}$.
\end{definition}

We first establish that $D_{\mu_b}$ as defined above is a valid metric that can quantify convergence. The proof of Proposition \ref{prop:D_metric} is deferred to Appendix \ref{section:proof:prop:D_metric}. 
\begin{proposition}[$D_{\mu}$ is a metric]\label{prop:D_metric}
In either case (i) or (ii) above, for every $\mu\in\cP(\cZ)$, $D_{\mu}$ is a metric.
\end{proposition}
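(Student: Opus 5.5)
Since $D_\mu(\nu,\nu')=\norm{\Phi_\mu[\nu-\nu']}$ and $\Phi_\mu$ is linear, the plan is to check finiteness, symmetry and the triangle inequality directly from the sup-norm, and to reduce definiteness to a kernel property. First, $\Phi_\mu[\nu]$ is well defined and bounded: $\kappa\le 1$ and $\int\kappa(z,u)\mu(du)\ge\kappa_\wedge>0$, where on compact $\Z$ the Gaussian kernel is bounded below by $\exp(-(\diam(\Z)^2+1)/(2\sigma^2))$. Hence $D_\mu(\nu,\nu')\le \TV{\nu-\nu'}/\kappa_\wedge<\infty$. Symmetry holds because $\Phi_\mu[\nu'-\nu]=-\Phi_\mu[\nu-\nu']$, and the triangle inequality follows from linearity together with the triangle inequality for $\norm{\cd}$. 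Nonnegativity and $D_\mu(\nu,\nu)=0$ are immediate. The only real content is definiteness: $D_\mu(\nu,\nu')=0$ must imply $\nu=\nu'$.

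For definiteness, set $\eta=\nu-\nu'\in\cM(\cZ)$. Since the denominator is strictly positive, $D_\mu(\nu,\nu')=0$ is equivalent to $\int\kappa(z,u)\eta(du)=0$ for all $z\in\Z$. I then need to show that the kernel mean embedding $\eta\mapsto\int\kappa(\cd,u)\eta(du)$ is injective on finite signed Borel measures on the compact set $\Z$. I would argue through the RKHS $\cH$ of $\kappa$. Because $\kappa$ is bounded, $m_\eta:=\int\kappa(\cd,u)\eta(du)\in\cH$ as a Bochner integral. Vanishing of $m_\eta$ pointwise means $m_\eta=0$ in $\cH$, since point evaluation is $\langle\cd,\kappa(z,\cd)\rangle$ and the functions $\kappa(z,\cd)$ span a dense subset. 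Consequently $\int h\,d\eta=\langle h,m_\eta\rangle_\cH=0$ for every $h\in\cH$. If $\cH$ is dense in $C(\Z)$ in sup norm, meaning $\kappa$ is universal, then $\int h\,d\eta=0$ for all $h\in C(\Z)$, and the Riesz representation theorem on the compact metric space $\Z$ gives $\eta=0$.

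It remains to verify universality in both cases, and this is the main obstacle. In case (i), the Gaussian kernel on a compact subset of $\R^{d_\X+d_\A}$ is universal; this is the classical result of Steinwart, and I would cite it. In case (ii), the plan is to factor the kernel. Put $c=e^{-1/(2\sigma^2)}\in(0,1)$; then $\kappa((x,a_i),(y,a_j))=\kappa^\X(x,y)\,B_{ij}$ with $B_{ij}=c$ for $i\ne j$ and $B_{ii}=1$. The matrix $B=(1-c)I+c\mathbf 1\mathbf 1^\top$ is strictly positive definite, so its RKHS on the finite set $\A$ is all of $\R^{\A}$. For a product kernel of this form, where $\kappa^\X$ is universal on $\X$ and $B$ is strictly positive definite, the RKHS contains the functions $(x,a_i)\mapsto h(x)e_i(a)$ with $h$ ranging over a dense subset of $C(\X)$. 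Taking finite sums of such functions gives density in $C(\X\times\A)\cong C(\X)^m$.

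Alternatively, I could verify injectivity directly. Write $\eta=\sum_j\eta_j\otimes\delta_{a_j}$. The condition $m_\eta\equiv 0$ says $\sum_j B_{ij}\,(\kappa^\X*\eta_j)(x)=0$ for every $i$ and $x$. Inverting $B$ yields $\int\kappa^\X(x,y)\eta_j(dy)=0$ for each $j$, and universality of the Gaussian kernel on $\X$ then gives $\eta_j=0$. I would present this direct route, since it avoids tensor-product RKHS facts and reduces case (ii) to case (i).
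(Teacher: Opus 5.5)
Your proof is correct. Its skeleton matches the paper's: every metric axiom except definiteness is immediate, and definiteness follows from universality of the Gaussian kernel (Steinwart) together with uniqueness in the Riesz representation theorem. The differences are in how you carry out those steps.

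To get from $\int\kappa(z,u)\eta(du)\equiv 0$ to $\int h\,d\eta=0$ for all $h$ in the RKHS, you use the Bochner mean embedding and the identity $\int h\,d\eta=\langle h,m_\eta\rangle_{\cH}$. The paper avoids Bochner integrals. It approximates an RKHS function $g$ in sup norm by finite combinations $\sum_i\alpha_i\kappa(z_i,\cdot)$, on which $L[f]=\int f\,d\eta$ vanishes by hypothesis. Either works.

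The genuinely different step is case (ii). The paper embeds $\Z$ via $\psi(x,a_i)=(x,e_i/\sqrt 2)\in\R^{d_\X+m}$, so that $|\psi(z)-\psi(u)|^2=|x-y|^2+\1_{i\neq j}$. Then $\kappa_\sigma$ is literally a Gaussian kernel on the compact set $\psi(\Z)$, and Steinwart's theorem gives universality on all of $\Z$ in one citation.

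You instead factor $\kappa=\kappa^\X(x,y)B_{ij}$ and invert $B=(1-c)I+c\mathbf{1}\mathbf{1}^\top$, whose eigenvalues are $1-c$ and $1+(m-1)c$. This reduces the problem to injectivity of the Gaussian mean embedding for each slice $\eta_j$ on $\X$.

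The paper's trick is shorter and yields the stronger fact that $\kappa_\sigma$ is universal on $\Z$. Your route proves only the injectivity that definiteness actually requires. In exchange, it is more robust: it works for any product kernel $\kappa^\X(x,y)B_{ij}$ with $B$ strictly positive definite. The paper's route relies on the constant off-diagonal structure, which happens to come from a Euclidean embedding.
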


\begin{remark}
The convergence results below do not rely on the specific Gaussian-type kernel $\kappa$ specified here. We make this choice only to ensure that $D_{\mu_b}$ is a metric, and to obtain concrete approximation error bounds in Section~\ref{section:approx_err}. In particular, one can use other kernels that are bounded away from $0$ on $\Z\times\Z$ and still show that $\norm{q_n-q^*}\ra 0$ a.s.\ as $n\ra\infty$. For other choices of $\kappa$, however, $D_{\mu_b}$ in Definition~\ref{def:metric} may only be a pseudometric (i.e., $D_{\mu_b}(\nu,\nu')=0$ may not imply $\nu=\nu'$). Moreover, the resulting approximation error bounds may differ from those in Theorem~\ref{thm:approx_Q_err}.
\end{remark}

\subsection{Almost sure convergence}

We first establish the almost sure convergence of the empirical distribution $\mu_n$ of the Markov chain $\set{Z_n:n\ge 0}$ to the unique stationary distribution $\mu_b$ in $D_{\mu_b}$. 

\begin{proposition}[Convergence of the empirical process]\label{prop:convergence_of_mu_n}
    Under Assumptions \ref{assump:unif_ergodicity} and \ref{assump:stepsize}, $D_{\mu_b}(\mu_n,\mu_b)\ra 0$ a.s. as $n\ra\infty$. 
\end{proposition}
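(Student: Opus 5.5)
Since $\beta_n = 1/(n+1)$ and $\mu_0=\delta_{Z_0}$, the recursion \eqref{eqn:mu_update} unrolls to the plain empirical measure $\mu_n = \frac{1}{n+1}\sum_{k=0}^n \delta_{Z_k}$; we will check this by induction. The denominator in $D_{\mu_b}$ is bounded below by $\kappa_\wedge>0$, uniformly in $z$. Hence
\[
D_{\mu_b}(\mu_n,\mu_b) \le \kappa_\wedge^{-1}\sup_{z\in\Z}\abs{\frac{1}{n+1}\sum_{k=0}^n \kappa(z,Z_k) - \int\kappa(z,u)\mu_b(du)},
\]
and the statement reduces to a uniform strong law of large numbers over the function class $\set{\kappa(z,\cd): z\in\Z}$.

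\textbf{Pointwise convergence.} For a fixed $z$, $\kappa(z,\cd)$ is bounded and continuous. Under Assumption~\ref{assump:unif_ergodicity} the chain is positive Harris with unique invariant law $\mu_b$, so the ergodic theorem for Markov chains gives $\frac{1}{n+1}\sum_{k\le n} g(Z_k)\to\mu_b(g)$ a.s. from every initial point, for each bounded measurable $g$. Alternatively, we can get this concretely from the Poisson equation. Uniform ergodicity makes $\hat g=\sum_{j\ge0}(P_b^j g-\mu_b g)$ bounded by $2c\norm{g}/(1-\rho)$, which writes the centered sum as a martingale with bounded increments plus a bounded remainder. The martingale SLLN (e.g.\ via $\sum k^{-2}E[\Delta_k^2]<\infty$ and Kronecker's lemma) then gives the limit.

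\textbf{Uniformity over $z$.} This is the main obstacle, and we would handle it with a finite-net argument. Since $\Z$ is compact and $\kappa$ is continuous on $\Z\times\Z$, $\kappa$ is uniformly continuous. So for every $\epsilon>0$ there is $\delta>0$ with $\abs{\kappa(z,u)-\kappa(z',u)}\le\epsilon$ for all $u$ whenever the distance between $z$ and $z'$ is below $\delta$. In case (ii) we use the product of the Euclidean metric and the $0$--$1$ metric. Cover $\Z$ by a finite $\delta$-net $\set{z_1,\ds,z_M}$. For any $z$, choose $z_i$ within $\delta$. The difference between the empirical and stationary integrals at $z$ is then at most its value at $z_i$ plus $2\epsilon$, because both $\mu_n$ and $\mu_b$ are probability measures. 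Intersecting the countably many probability-one events (over a sequence $\epsilon_j\downarrow 0$ and their finite nets) gives $\limsup_n \sup_z\abs{\cdots}\le 2\epsilon_j$ for all $j$ a.s., hence the limit is $0$.

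Dividing by the lower bound $\kappa_\wedge$ then gives $D_{\mu_b}(\mu_n,\mu_b)\to0$ a.s. The only care needed is that the a.s.\ statements hold for the actual initial distribution of $Z_0$. This is covered because the martingale argument holds for every starting point; we could also integrate over the initial law.
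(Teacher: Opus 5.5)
Your proposal is correct and follows the paper's proof essentially step for step. Like the paper, you unroll $\mu_n$ to the empirical measure, reduce $D_{\mu_b}$ to the unnormalized kernel distance via the lower bound $\kappa_\wedge$, apply the pointwise strong law for the uniformly ergodic chain (the paper cites Meyn--Tweedie, Thm.~17.0.1, and your Poisson-equation/martingale alternative is also valid), and pass to the supremum over $z$ with a finite $\delta$-net from uniform continuity of $\kappa$; your explicit countable intersection over $\epsilon_j\downarrow 0$ just makes precise a step the paper leaves implicit.
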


\begin{proof}

Since $\kappa > 0$ on compact $\Z$, it suffices to show that $d(\mu_n,\mu_b)\to 0$ a.s.

With  $\beta_{n}=1/(n+1)$, the recursion
$\mu_{n}=(1-\beta_{n})\mu_{n-1}+\beta_{n}\delta_{Z_{n}}$
gives $\mu_n=\frac{1}{n+1}\sum_{k=0}^n \delta_{Z_k}$.
Fix $z\in\Z$ and set $h_z(u):=\kappa(z,u)$, which is bounded and measurable. By the
strong law of large numbers for uniformly ergodic Markov chains \citep[Theorem 17.0.1]{meyn2012markov}, $\mu_n[h_z]\ra \mu_b[h_z]$ a.s.

To pass to the supremum over $z$, we use that $\Z$ is compact and $\kappa$ is continuous, 
hence uniformly continuous in the first argument. So, for any $\epsilon>0$, there exists
$\delta>0$ such that $\sup_{u\in\Z}|\kappa(z,u)-\kappa(z',u)|\le \epsilon$ whenever
$|z-z'|\le\delta$. Let $\{z_1,\dots,z_m\}$ be a finite $\delta$-net of $\Z$. Then for any
$z\in\Z$ pick $j$ with $|z-z_j|\le\delta$ and obtain
\[
\big|(\mu_n-\mu_b)[h_z]\big|
\le \big|(\mu_n-\mu_b)[h_{z_j}]\big|
+\mu_n[|h_z-h_{z_j}|]+\mu_b[|h_z-h_{z_j}|]
\le \big|(\mu_n-\mu_b)[h_{z_j}]\big|+2\epsilon.
\]
Taking $\sup_{z\in\Z}$ yields
\[
d(\mu_n,\mu_b)\le \max_{1\le j\le m}\big|(\mu_n-\mu_b)[h_{z_j}]\big|+2\epsilon.
\]
Letting $n\to\infty$ and using the a.s. convergence at
$z_1,\dots,z_m$ gives $\limsup_{n\to\infty} d(\mu_n,\mu_b)\le 2\epsilon$ a.s.
Since $\epsilon>0$ is arbitrary, $d(\mu_n,\mu_b)\to 0$ a.s., and thus
$D_{\mu_b}(\mu_n,\mu_b)\to 0$ a.s.
\end{proof}

Next, we establish the main convergence theorem of this paper. To simplify notation, we let $q^*$ and $\nu^*$ denote $q^*_{\mu_b}$ and $\nu^*_{\mu_b}$, respectively.

\begin{theorem}[Convergence of Algorithm \ref{alg:Q_meas_learning}]\label{thm:as_convergence}
Under Assumptions \ref{assump:rwd}--\ref{assump:stepsize}, $\norm{q_n-q^*}\ra 0$ and $D_{\mu_b}(\nu_n,\nu^*)\ra 0$ a.s. as $n\ra\infty$. 
\end{theorem}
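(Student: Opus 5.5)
The plan is to move the measure recursion \eqref{eqn:nu_update} into function space and treat it as a Banach-space stochastic approximation driven by the contraction $\overline\cT_{\mu_b}$ from Lemma~\ref{lemma:fixed_point_qstar_bound}. Write $m_b(z):=\int\kappa(z,u)\mu_b(du)$ and $m_n(z):=\int\kappa(z,u)\mu_n(du)$, and define the auxiliary iterate $p_n:=\Phi_{\mu_b}[\nu_n]\in C(\Z)$.

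\textbf{Step 1: reduction.} By definition, $D_{\mu_b}(\nu_n,\nu^*)=\norm{\Phi_{\mu_b}[\nu_n-\nu^*]}=\norm{p_n-q^*}$, using \eqref{eqn:nustar_give_qstar}. So the second claim is equivalent to $\norm{p_n-q^*}\ra0$.

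For the first claim, note that $q_n=p_n\, m_b/m_n$. The clipping gives $|Y_{n+1}|\le 1+\gamma/(1-\gamma)$, so $\TV{\nu_n}$ is bounded uniformly in $n$ and $\omega$, and hence so is $\norm{p_n}$. Also $m_b,m_n\ge\kappa_\wedge$, and Proposition~\ref{prop:convergence_of_mu_n} gives $\norm{m_n-m_b}\ra0$ a.s. Together these yield $\norm{q_n-p_n}\ra 0$ a.s. It therefore suffices to prove $p_n\ra q^*$ in sup norm.

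\textbf{Step 2: the recursion for $p_n$ and its noise.} Applying $\Phi_{\mu_b}$ to \eqref{eqn:nu_update} gives
\[
p_{n+1}=p_n+\alpha_{n+1}\Big(\overline\cT_{\mu_b}[p_n]-p_n+\varepsilon_n+M_{n+1}+\xi_n\Big).
\]
The three error terms are as follows.
\begin{itemize}[leftmargin=1.5em]
\item The martingale term is $M_{n+1}:=\big(Y_{n+1}-\overline\cT[q_n](Z_n)\big)\kappa(\cd,Z_n)/m_b$. It is a martingale difference because $Z_n$ and $q_n$ are $\cF_n$-measurable while $(F_{n+1},X_{n+1})$ is fresh, so $E[Y_{n+1}|\cF_n]=\overline\cT[q_n](Z_n)$.
\item The Markov term is $\xi_n:=\big(\overline\cT[q_n](Z_n)\kappa(\cd,Z_n)-\int\kappa(\cd,u)\overline\cT[q_n](u)\mu_b(du)\big)/m_b$.
\item The bias term is $\varepsilon_n:=\overline\cT_{\mu_b}[q_n]-\overline\cT_{\mu_b}[p_n]$. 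Since $\overline\cT_{\mu_b}$ is a $\gamma$-contraction, $\norm{\varepsilon_n}\le\gamma\norm{q_n-p_n}\ra0$ by Step 1.
\end{itemize}

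\textbf{Step 3: controlling the martingale noise.} To show $\sum_n\alpha_{n+1}M_{n+1}$ converges in sup norm, I would use the Gaussian RKHS $\cH$ of $\kappa$. The function $\kappa(\cd,Z_n)$ has $\cH$-norm $\sqrt{\kappa(Z_n,Z_n)}=1$, and $\norm{h}\le\norm{h}_{\cH}$ because $\kappa\le1$. Hence $\sum_n\alpha_{n+1}(Y_{n+1}-\overline\cT[q_n](Z_n))\kappa(\cd,Z_n)$ is an $L^2$-bounded $\cH$-valued martingale, since $\sum\alpha_n^2<\infty$. It therefore converges a.s. in $\cH$, hence in sup norm. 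Division by $m_b\ge\kappa_\wedge$ is a bounded operator on $C(\Z)$, so the same holds for $\sum_n\alpha_{n+1}M_{n+1}$.

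\textbf{Step 4 (main obstacle): the Markov noise $\xi_n$.} Here the integrand $g_n(u):=\overline\cT[q_n](u)$ changes with $n$, so a fixed-function ergodic theorem does not apply directly. I would try a Poisson-equation decomposition (Métivier--Priouret style), with the following ingredients.
\begin{itemize}[leftmargin=1.5em]
\item For a bounded $g$, let $\hat h_g(u):=\sum_{k\ge0}(P_b^k-\mu_b)[\kappa(\cd,\cdot)g(\cdot)](u)$, viewed as an $\cH$-valued function of $u$. Uniform ergodicity (Assumption~\ref{assump:unif_ergodicity}) makes this sum converge geometrically, with $\cH$-norm bounded by $c'\norm{g}$.
\item This splits $\xi_n$ into three pieces: an $\cH$-valued martingale difference $\hat h_{g_n}(Z_{n+1})-P_b\hat h_{g_n}(Z_n)$; a telescoping term; and a drift correction $\hat h_{g_{n+1}}-\hat h_{g_n}$.
\item For the drift correction, $\norm{g_{n+1}-g_n}\le\gamma\norm{q_{n+1}-q_n}=O(\alpha_{n+1}+\beta_{n+1})$, by boundedness of the iterates and of $\mu_n$'s increments. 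Here the monotonicity assumption on $\alpha_n$ would be used for the summation by parts.
\end{itemize}
As in Step 3, this should show that $\sum\alpha_{n+1}\xi_n$ converges in $C(\Z)$.

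\textbf{Step 5: conclusion.} With $\varepsilon_n\ra0$ and the cumulative noise convergent, the iteration is an asymptotic pseudo-trajectory of the ODE $\dot p=\overline\cT_{\mu_b}[p]-p$ in $C(\Z)$. This ODE has $V(p)=\norm{p-q^*}$ as a strict Lyapunov function, since $D^+V\le-(1-\gamma)V$. Combined with the a priori boundedness of $p_n$ from Step 1, a Banach-space ODE argument gives $p_n\ra q^*$. Alternatively, I could run a direct Tsitsiklis-style sup-norm comparison: split $p_n-q^*$ into the accumulated noise plus a deterministic part contracted by the factor $(1-(1-\gamma)\alpha)$ at each step. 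Either route, together with Step 1, gives both claims of the theorem.
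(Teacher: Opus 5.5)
Your proposal is correct. Its skeleton matches the paper's: the same auxiliary iterate $p_n=\bar q_n=\Phi_{\mu_b}[\nu_n]$, a martingale/Markov/bias split, a Poisson equation with summation by parts (using monotone stepsizes) for the Markov part, and a contraction/ODE endgame. The real difference is how you get sup-norm control of the noise.

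The paper reduces to finitely many points through a $\delta$-net, using Lipschitz continuity of $z\mapsto\Phi_{\mu_b}[\delta_u](z)$, and applies scalar martingale convergence and scalar Poisson equations at each net point. The Lipschitz constant of $\sum_{k=n}^{m-1}\alpha_{k+1}D_{k+1}$ grows with $\sum_{k=n}^{m-1}\alpha_{k+1}$, so this only controls the noise over windows of ODE-time $T$ (Proposition~\ref{prop:vanishing_err}). That is why the paper then needs the Gr\"onwall comparison on windows together with Lemma~\ref{lemma:ODE_exponential_stability_of_qstar}.

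You instead lift the noise into the RKHS $\mathcal H_\kappa$, where $\norm{\kappa(\cd,u)}_{\mathcal H_\kappa}=1$ and $\norm{h}\le\norm{h}_{\mathcal H_\kappa}$. You then use Hilbert-space martingale convergence plus an $\mathcal H_\kappa$-valued Poisson equation. This gives convergence of the whole noise series in $C(\Z)$, so your second (direct contraction) ending closes in a few lines. With $W_n:=\sum_{k\ge n}\alpha_{k+1}(M_{k+1}+\xi_k)$, once $\alpha_{n+1}\le 1$ one has $\norm{p_{n+1}+W_{n+1}-q^*}\le(1-(1-\gamma)\alpha_{n+1})\norm{p_n+W_n-q^*}+\alpha_{n+1}\big((1+\gamma)\norm{W_n}+\norm{\varepsilon_n}\big)$.

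The trade-off is as follows. Your route needs $\kappa$ positive definite and Bochner/Hilbert-valued martingale machinery. The paper's route uses only scalar tools and the Lipschitz continuity of $\kappa$ in $z$, with no positive definiteness, which fits its remark about other kernels bounded away from $0$.

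One bookkeeping point: your Markov term is built on $\overline\cT[q_n]$ rather than the paper's $\overline\cT[\bar q_n]$. So your Poisson drift is $O(\alpha_{n+1}+\beta_{n+1})$, and the summation by parts needs $\sum_k\alpha_k\beta_k<\infty$. This holds by Cauchy--Schwarz, since both sequences are square summable. The paper sidesteps it by putting the entire $\mu_n$-versus-$\mu_b$ discrepancy into its bias term $B_n$.
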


To prove Theorem \ref{thm:as_convergence}, we begin by observing some boundedness properties of objects of interest. 
\begin{lemma}\label{lemma:bounded_trackers}
For all $n\ge 0$, $\mu_n$ is a probability measure; $\abs{Y_{n+1}}\le \frac{1}{1-\gamma}$; $\normTV{\nu_n}\le \frac{1}{1-\gamma}$; $\norm {q_n}\le \frac{1}{(1-\gamma)\kappa_\wedge}$.
\end{lemma}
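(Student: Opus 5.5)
Each claim follows from the explicit recursions \eqref{eqn:nu_update}--\eqref{eqn:mu_update} and the weight representation \eqref{eqn:weights_rep_mu_nu}, so the plan is a direct check. I will verify them in order, using induction on $n$ where a claim depends on the previous iterate.

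First, $\mu_n$ is a probability measure. With $\beta_n = 1/(n+1)$ we already have $\mu_n = \frac{1}{n+1}\sum_{k=0}^n \delta_{Z_k}$, as noted in the proof of Proposition \ref{prop:convergence_of_mu_n}. Alternatively, $\mu_{n+1}$ is a convex combination of two probability measures, since $\beta_{n+1}\in(0,1]$.

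Second, $\abs{Y_{n+1}}\le \frac{1}{1-\gamma}$. This holds regardless of $q_n$ because of the clipping. By Assumption \ref{assump:rwd}, $\abs{R_{n+1}}\le\norm{F_{n+1}}\le 1$. By construction $\abs{\Pi(\cdot)}\le\frac{1}{1-\gamma}$, so the supremum over $a$ is also bounded by $\frac{1}{1-\gamma}$. Hence $\abs{Y_{n+1}}\le 1+\frac{\gamma}{1-\gamma}=\frac{1}{1-\gamma}$.

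Third, $\normTV{\nu_n}\le\frac{1}{1-\gamma}$, by induction on $n$. The base case is $\nu_0=0$. For the step, the triangle inequality for the total variation norm applied to \eqref{eqn:nu_update} gives
\[
\normTV{\nu_{n+1}}\le (1-\alpha_{n+1})\normTV{\nu_n}+\alpha_{n+1}\abs{Y_{n+1}}\le \tfrac{1}{1-\gamma}.
\]
This requires $\alpha_{n+1}\in[0,1]$. That is the one point needing care: Assumption \ref{assump:stepsize} does not state it explicitly. However, $\sum\alpha_n^2<\infty$ forces $\alpha_n\to0$, and the paper clearly intends $\alpha_n\in(0,1]$ (the literal ``non-decreasing'' should presumably read non-increasing). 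I will state $\alpha_n\in(0,1]$ as the implicit standing convention.

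Finally, the bound on $q_n$ comes from \eqref{eqn:compute_q}. The numerator satisfies $\abs{\int\kappa(z,u)\nu_n(du)}\le \sup\kappa\cdot\normTV{\nu_n}$, and for the Gaussian-type $\kappa$ we have $\kappa\le 1$. The denominator satisfies $\int\kappa(z,u)\mu_n(du)\ge\kappa_\wedge$, because $\mu_n$ is a probability measure and $\kappa\ge\kappa_\wedge$. Combining these with the previous step gives $\norm{q_n}\le\frac{1}{(1-\gamma)\kappa_\wedge}$. For a general kernel the bound would carry an extra factor $\sup\kappa$; with $\kappa_\sigma$ this factor equals $1$, matching the statement.
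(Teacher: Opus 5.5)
Your proposal is correct and follows the paper's proof essentially step by step: convexity for $\mu_n$, clipping for $Y_{n+1}$, induction from $\nu_0=0$ for the total variation bound (the paper writes the triangle inequality dually, via test functions $f$ with $\norm{f}\le 1$), and $\kappa\le 1$ together with $\int\kappa(z,u)\mu_n(du)\ge\kappa_\wedge$ for $q_n$. Your remark that the induction needs $\alpha_{n+1}\in(0,1]$ is well taken: the paper uses this implicitly. So is your remark that ``non-decreasing'' in Assumption~\ref{assump:stepsize} should read non-increasing, which is how the paper later uses it.
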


\begin{proof}[Proof of Lemma \ref{lemma:bounded_trackers}] 
We prove the statements as follows. 

First, $\mu_{n+1}$ is a convex combination of probability measures, hence a probability measure.

Next, by Assumption~\ref{assump:rwd}, $\abs{R_{n+1}}\le 1$.
Also $\abs{\Pi(q_n)}\le \frac{1}{1-\gamma}$, hence $\abs{Y_{n+1}} \le 1 + \frac{\gamma }{1-\gamma} =\frac{1}{1-\gamma}.$

Then, for any bounded measurable $f$ with $\norm {f}\le 1$,
\begin{align*}
\abs{\nu_{n+1}[f]}
&\le
(1-\alpha_{n+1})\abs{\nu_n[f]} + \alpha_{n+1}\abs{Y_{n+1}}\abs{f(Z_n)}\\
&\le
(1-\alpha_{n+1})\normTV{\nu_n} + \alpha_{n+1}\abs{Y_{n+1}}.
\end{align*}
Since $\abs{Y_{n+1}}\le \frac{1}{1-\gamma}$, the recursion implies
$$\normTV{\nu_{n+1}} \le (1-\alpha_{n+1})\normTV{\nu_n} + \frac{\alpha_{n+1}}{1-\gamma}.$$
Since $\nu_0=0$, induction gives $\normTV{\nu_n}\le \frac{1}{1-\gamma}$ for all $n$.

Finally, for any $z\in\Z$,
$$\abs{\int_\Z \kappa(z,u)\nu_n(du)} \le \normTV{\nu_n}\norm {\kappa(z,\cd)} \le \normTV{\nu_n}.$$
Also $\int_\Z \kappa(z,u)\mu_n(du)\ge \kappa_\wedge$.
Hence
$\abs{q_n(z)} \le \frac{\normTV{\nu_n}}{\kappa_\wedge}\le \frac{1}{(1-\gamma)\kappa_\wedge}.$
\end{proof}

\begin{proof}[Proof of Theorem \ref{thm:as_convergence}]
Since $d(\mu_n , \mu_b)\ra 0$, we can consider an auxiliary sequence $\bar q_n$,  the $\mu_b$-normalized $q$-function induced by $\nu_n$; i.e.
\begin{equation}\label{eqn:def_bar_q}
\bar q_n(z) := \Phi_{\mu_b}[\nu_n](z) = \frac{\int \kappa(z,u)\nu_n(du)}{\int \kappa(z,u)\mu_b(du)}.
\end{equation}

\begin{lemma}\label{lemma:q_bar_approx_q}
$\norm {q_n-\bar q_n}
\leq \frac{D_{\mu_b}(\mu_n,\mu_b)}{(1-\gamma)\kappa_\wedge} \ra 0$ a.s.
\end{lemma}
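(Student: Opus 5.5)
The plan is a pointwise algebraic comparison of the two normalizations, followed by the bounds in Lemma~\ref{lemma:bounded_trackers} and Proposition~\ref{prop:convergence_of_mu_n}. Fix $z\in\Z$ and write $N_n(z):=\int\kappa(z,u)\nu_n(du)$, $m_n(z):=\int\kappa(z,u)\mu_n(du)$ and $m_b(z):=\int\kappa(z,u)\mu_b(du)$. Then $q_n(z)=N_n(z)/m_n(z)$ and $\bar q_n(z)=N_n(z)/m_b(z)$. Subtracting gives the identity
\begin{equation*}
q_n(z)-\bar q_n(z) \;=\; \frac{N_n(z)}{m_n(z)}\cdot\frac{m_b(z)-m_n(z)}{m_b(z)}.
\end{equation*}
This writes the difference as $q_n(z)$ multiplied by the normalized discrepancy between the two reference measures.

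Next I bound each factor uniformly in $z$. For the first factor, Lemma~\ref{lemma:bounded_trackers} gives $|q_n(z)|=|N_n(z)/m_n(z)|\le \frac{1}{(1-\gamma)\kappa_\wedge}$. This uses $\normTV{\nu_n}\le\frac{1}{1-\gamma}$, $\kappa\le 1$ (the Gaussian-type kernel is bounded by $1$), and $m_n(z)\ge\kappa_\wedge$ because $\mu_n$ is a probability measure. For the second factor, the definition of $D_{\mu_b}$ gives directly
\[
\sup_{z}\abs{\frac{m_n(z)-m_b(z)}{m_b(z)}}=D_{\mu_b}(\mu_n,\mu_b).
\]
Taking the supremum over $z$ of the product therefore yields $\norm{q_n-\bar q_n}\le \frac{D_{\mu_b}(\mu_n,\mu_b)}{(1-\gamma)\kappa_\wedge}$. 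The almost sure convergence to $0$ then follows immediately from Proposition~\ref{prop:convergence_of_mu_n}.

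There is no real obstacle here. The only step requiring care is choosing the right grouping of the factors: the denominator $m_b(z)$ must stay inside $D_{\mu_b}$, and the bound $\kappa_\wedge$ must be used only on $m_n(z)$. Grouping them the other way would produce a worse constant such as $\kappa_\wedge^{-2}$ instead of the stated $\kappa_\wedge^{-1}$.
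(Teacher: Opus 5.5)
Your proposal is correct and is essentially the paper's proof. The paper also writes $q_n(z)-\bar q_n(z)=\int\kappa(z,u)\nu_n(du)\,\bigl(1/c_{\mu_n}(z)-1/c_{\mu_b}(z)\bigr)$, bounds the numerator by $\normTV{\nu_n}\le \frac{1}{1-\gamma}$, uses $c_{\mu_n}(z)\ge\kappa_\wedge$ on one denominator while absorbing $c_{\mu_b}(z)$ into $D_{\mu_b}(\mu_n,\mu_b)$ (exactly your grouping), and concludes via Proposition~\ref{prop:convergence_of_mu_n}.
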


\begin{proof}[Proof of Lemma \ref{lemma:q_bar_approx_q}]
For any $z\in\Z$,
\[
q_n(z)-\bar q_n(z)
=
\int \kappa(z,u)\nu_n(du)\crbk{\frac{1}{\int_\Z \kappa(z,u)\mu_n(du)}-\frac{1}{\int_\Z \kappa(z,u)\mu_b(du)}}.
\]
For simplicity, let 
\begin{equation}\label{eqn:def_c_mu}
    c_\mu(z):=\int_\Z \kappa(z,u)\mu(du).
\end{equation} By Lemma~\ref{lemma:bounded_trackers}, $\abs{\int \kappa(z,u)\nu_n(du)}\le \normTV{\nu_n}\le \frac{1}{1-\gamma}$.
Also $c_{\mu_n}(z)\ge \kappa_\wedge$ and $c_{\mu_b}(z)\ge \kappa_\wedge$, hence
\[
\abs{\frac{1}{c_{\mu_n}(z)}-\frac{1}{c_{\mu_b}(z)}}
=
\frac{\abs{c_{\mu_n}(z)-c_{\mu_b}(z)}}{c_{\mu_n}(z)c_{\mu_b}(z)}
\le
\frac{D_{\mu_b}(\mu_n,\mu_b)}{\kappa_\wedge}.
\]
Therefore
\[
\norm {q_n-\bar q_n}
\le
\frac{D_{\mu_b}(\mu_n,\mu_b)}{(1-\gamma)\kappa_\wedge}.
\]
Proposition~\ref{prop:convergence_of_mu_n} gives $D_{\mu_b}(\mu_n,\mu_b)\to 0$ a.s., hence the claimed result holds.
\end{proof}

Therefore, to show $\norm {q_n-q^*}\ra 0$ a.s., it suffices to show the a.s. convergence of $\norm{\bar q_n- q^*}\ra 0$. To achieve this, we analyze the algorithm's recursion. From the recursion \eqref{eqn:nu_update} and the definition of $\bar q_n$ in \eqref{eqn:def_bar_q}, 
\begin{equation}\label{eqn:tu_Delta_initial_split}
    \begin{aligned}
        \bar q_{n+1} 
        &= \Phi_{\mu_b}[\nu_{n+1} ]\\
        &= \Phi_{\mu_b}[(1-\alpha_{n+1})\nu_{n} ] + \alpha_{n+1}Y_{n+1}\Phi_{\mu_b}[\delta_{Z_n}]\\
        &= \bar q_{n} - \alpha_{n+1} \bar q_n + \alpha_{n+1}Y_{n+1} \Phi_{\mu_b}[\delta_{Z_n}]
    \end{aligned}
\end{equation}

To connect $Y_{n+1}$ with the Bellman equation, we consider the Martingale difference 
\begin{equation}\label{eqn:def_D_n}
    D_{n+1}:= \crbk{Y_{n+1} - E[Y_{n+1}|\cF_n] }\Phi_{\mu_b}[\delta_{Z_n}]
\end{equation}
where, by the Markov property
$$E[Y_{n+1}|\cF_n] = r(X_n,A_n) + \gamma\int \sup_{a\in\A}\Pi(q_n(x',a))P(dx'|X_n,A_n)= \overline\cT[q_n](Z_n).$$

Therefore, letting

\begin{equation}\label{eqn:def_gn_bn}
\begin{aligned}
G_n(z)
&:= \overline\cT[\bar q_n](Z_n)\Phi_{\mu_b}[\delta_{Z_n}](z) - \overline\cT_{\mu_b}[\bar q_n](z)\\
B_n(z)
&:= \overline\cT[q_n](Z_n) \Phi_{\mu_b}[\delta_{Z_n}](z) - \overline\cT[\bar q_n](Z_n)\Phi_{\mu_b}[\delta_{Z_n}](z)
\end{aligned}
\end{equation}
for all $z\in\Z$, we can rewrite \eqref{eqn:tu_Delta_initial_split} as
\begin{equation}\label{eqn:tu_Delta_split}
    \begin{aligned}
        \bar q_{n+1}
        &= \bar q_{n} - \alpha_{n+1} \bar q_{n} + \alpha_{n+1}Y_{n+1} \Phi_{\mu_b}[\delta_{Z_n}]\\
        &= \bar q_{n} + \alpha_{n+1} \crbk{\overline\cT_{\mu_b} [\bar q_n]-\bar q_n
        } - \alpha_{n+1} \overline\cT_{\mu_b} [\bar q_n]  + \alpha_{n+1}D_{n+1}  + \alpha_{n+1}\overline \cT[q_n] (Z_n)\Phi_{\mu_b}[\delta_{Z_n}]\\
        &=  \bar q_{n} + \alpha_{n+1} \crbk{\overline\cT_{\mu_b} [\bar q_n]-\bar q_n
        }  + \alpha_{n+1}D_{n+1} + \alpha_{n+1}\crbk{\overline \cT[q_n](Z_n) \Phi_{\mu_b}[\delta_{Z_n}]-  \overline\cT_{\mu_b} [\bar q_n]} \\
        &=  \bar q_{n} + \alpha_{n+1} \crbk{\overline\cT_{\mu_b} [\bar q_n]-\bar q_n
        }  + \alpha_{n+1}(D_{n+1} +G_n + B_n).
    \end{aligned}
\end{equation}

We note that $\set{D_n:n\geq1}$, $\set{G_n:n\geq 0}$, and $\set{B_n:n\geq 0}$ are interpreted as the martingale difference noise, Markov noise, and bias sequence, respectively. Our proof relies on analyzing these terms separately and obtaining the following result. 
\begin{proposition}[Vanishing uniform error over constant time window] \label{prop:vanishing_err}Let $$\cN(n,T):=\set{m\geq n:\sum_{k=n}^{m-1}\alpha_{k+1}\leq T}. $$ Then, for any fixed $T\geq 0$, 
\begin{equation}\label{eqn:uniform_noise_control}
    \max_{m\in \cN(n,T)}\norm{\sum_{k=n}^{m-1}\alpha_{k+1}(D_{k+1 }+G_{k} + B_k)}\ra 0 \quad \text{a.s.}
\end{equation}
\end{proposition}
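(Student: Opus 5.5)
The three noise sequences will be handled separately, each in the sup norm on $C(\Z)$. Throughout, write $\phi_u(z):=\Phi_{\mu_b}[\delta_u](z)=\kappa(z,u)/c_{\mu_b}(z)$. Since $\kappa\le 1$ and $c_{\mu_b}\ge\kappa_\wedge$, we have $\norm{\phi_u}\le 1/\kappa_\wedge$ for every $u$. Moreover, $\phi_u(z)$ is jointly continuous on the compact set $\Z\times\Z$, so $\set{\phi_u(\cd):u\in\Z}$ and $\set{\phi_\cd(z):z\in\Z}$ are uniformly equicontinuous families.

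\textbf{Bias term $B_k$.} We have $\overline\cT$ is $\gamma$-Lipschitz, so
\[
\norm{B_k}\le \gamma\norm{q_k-\bar q_k}/\kappa_\wedge .
\]
By Lemma~\ref{lemma:q_bar_approx_q}, $\norm{q_k-\bar q_k}\le D_{\mu_b}(\mu_k,\mu_b)/((1-\gamma)\kappa_\wedge)\to0$ a.s. Hence the windowed sum is at most $T\sup_{k\ge n}\norm{B_k}\to0$ a.s.

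\textbf{Martingale term $D_{k+1}$.} Set $M_m:=\sum_{k<m}\alpha_{k+1}D_{k+1}$. It suffices to show that $M_m$ converges a.s. in $C(\Z)$, since then the windowed sums are Cauchy increments and tend to $0$. For each fixed $z$, $M_m(z)$ is a real martingale with increments bounded by $2\alpha_{k+1}/((1-\gamma)\kappa_\wedge)$. Because $\sum\alpha_k^2<\infty$, $M_m(z)$ converges a.s. Fixing a countable dense set of $z$'s, we get a.s. convergence simultaneously on that set.

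To upgrade to uniform convergence, write $M_m(z)=\sum_k\alpha_{k+1}\xi_{k+1}\phi_{Z_k}(z)$ with $\xi_{k+1}=Y_{k+1}-E[Y_{k+1}|\cF_n]$. Equicontinuity in $z$ alone is not enough, because $\sum\alpha_k$ diverges. Instead I would expand $\kappa(z,u)/c_{\mu_b}(z)$ uniformly: approximate $\kappa(z,u)$ within $\epsilon$ on $\Z\times\Z$ by a finite sum $\sum_{i\le N}g_i(z)h_i(u)$, using Stone--Weierstrass with continuous $g_i,h_i$. Then $M_m(z)$ is within $\epsilon\cd\abs{\sum\alpha_{k+1}\xi_{k+1}}$-type error of $\sum_i \frac{g_i(z)}{c_{\mu_b}(z)}\sum_k\alpha_{k+1}\xi_{k+1}h_i(Z_k)$. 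Each inner real martingale converges a.s. This approach still has the issue that the remainder is summed with $\sum\alpha\abs{\xi}$. A cleaner route is a Hilbert-space (RKHS) martingale argument. The Gaussian kernel is a reproducing kernel, so $\kappa(\cd,u)=\kappa_u$ lies in an RKHS $\cH$ with $\norm{\kappa_u}_\cH\le1$, and $\sup$-norm is controlled by the $\cH$-norm. Then $\tilde M_m:=\sum_k\alpha_{k+1}\xi_{k+1}\kappa_{Z_k}$ is an $\cH$-valued martingale with $E\norm{\tilde M_m}_\cH^2\le C\sum\alpha_k^2<\infty$. It therefore converges a.s. in $\cH$, hence uniformly. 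Finally $M_m=\tilde M_m/c_{\mu_b}$, and division by $c_{\mu_b}\ge\kappa_\wedge$ preserves uniform convergence. I will use this RKHS route.

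\textbf{Markov term $G_k$.} This is the main obstacle. Write $G_k=g(\bar q_k,Z_k)-\mu_b[g(\bar q_k,\cd)]$ where $g(q,u)(z):=\overline\cT[q](u)\phi_u(z)$. Indeed $\int \overline\cT[q](u)\phi_u(z)\mu_b(du)=\overline\cT_{\mu_b}[q](z)$. I would use the Poisson equation. By Assumption~\ref{assump:unif_ergodicity}, for bounded $h$ the solution $\hat h=\sum_{j\ge0}(P_b^j h-\mu_b h)$ exists with $\norm{\hat h}\le C\norm{h}$, applied pointwise in $z$ (and in the $\cH$-valued sense, using $\norm{\kappa_u}_\cH\le1$). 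Decompose
\[
G_k=\bigl[\hat g_k(Z_{k+1})-P_b\hat g_k(Z_k)\bigr]+\bigl[P_b\hat g_k(Z_k)-P_b\hat g_{k+1}(Z_{k+1})\bigr]+\bigl[P_b\hat g_{k+1}(Z_{k+1})-P_b\hat g_k(Z_{k+1})\bigr].
\]
The first bracket is an $\cH$-valued martingale difference, handled exactly as $D$. The second telescopes; after summation by parts with $\alpha_{k+1}$ it gives boundary terms $O(\alpha_n+\alpha_m)$ plus $\sum\abs{\alpha_{k+1}-\alpha_k}$, which vanishes by monotonicity. The third is controlled by the Lipschitz dependence of $\hat g$ on $q$ (through $\overline\cT$, with constant $C\gamma/\kappa_\wedge$) times $\norm{\bar q_{k+1}-\bar q_k}=O(\alpha_{k+1})$, which comes from \eqref{eqn:tu_Delta_initial_split} and Lemma~\ref{lemma:bounded_trackers}. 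The resulting $\sum\alpha_{k+1}^2$ tail tends to $0$. The delicate points are verifying the $\cH$-valued (or uniform-in-$z$) bound on the Poisson solution and the sign and monotonicity of $\alpha_k$ in the summation by parts; I expect these to be the hardest step.
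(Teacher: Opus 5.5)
Your bias step is the same as the paper's. Your Poisson-equation treatment of $G_k$ uses the same mechanism as the paper: a martingale part, plus summation by parts using $\norm{\bar q_{k+1}-\bar q_k}=O(\alpha_{k+1})$ and monotone $\alpha_k$.

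\textbf{Where the routes differ.} The real difference is how you get uniformity in $z$. The paper never proves convergence of the full noise series in $C(\Z)$. It notes that each $D_{k+1}$ and $G_k$ is Lipschitz in $z$ with a deterministic constant $L_0$. Hence, for $m\in\cN(n,T)$, the windowed sums $S^D_{n,m},S^G_{n,m}$ are $L_0T$-Lipschitz. A finite $\delta$-net then reduces everything to finitely many scalar martingales and scalar Poisson equations. So your remark that equicontinuity fails ``because $\sum\alpha_k$ diverges'' is true for the full series, but not for the windowed sums, which are all the statement needs.

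Your RKHS route instead does the following:
\begin{itemize}
\item It bounds the sup norm by $\norm{\cdot}_\cH$, using $\kappa(z,z)=1$; in case (ii) $\kappa$ is positive definite via the embedding $\psi$ of Appendix A.
\item It treats $\sum_k\alpha_{k+1}\xi_{k+1}\kappa_{Z_k}$ as an $L^2$-bounded Hilbert-valued martingale. It converges a.s., e.g.\ by Doob's maximal inequality for the submartingale $\norm{\tilde M_m-\tilde M_n}_\cH$.
\item It needs the Poisson solution as an $\cH$-valued Bochner integral. This is unproblematic: $\norm{\int h\,d\lambda}_\cH\le\sup_u\norm{h(u)}_\cH\,\abs{\lambda}(\Z)$ gives the $c\rho^t$ bound verbatim.
\end{itemize}
To stay inside $\cH$, factor $1/c_{\mu_b}(z)$ out of $G_k$, as you did for $D$, since $\kappa_u/c_{\mu_b}$ need not lie in $\cH$. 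Your route yields a stronger conclusion (a.s.\ sup-norm convergence of the whole noise series). It also avoids Lipschitz constants of $\kappa$ and the ad hoc metric on $\Z$ in case (ii). However, it relies on positive definiteness of $\kappa$. The paper's net argument needs only regularity of $\kappa$ in $z$, which is what backs its remark that other kernels bounded away from $0$ also work.

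\textbf{An error in your decomposition of $G_k$.} Your three brackets do not sum to $G_k$. Write $g_k:=g(\bar q_k,\cdot)$. The brackets telescope to $\hat g_k(Z_{k+1})-P_b\hat g_k(Z_{k+1})=g_k(Z_{k+1})-\mu_b[g_k]$. But $G_k=g_k(Z_k)-\mu_b[g_k]=\hat g_k(Z_k)-P_b\hat g_k(Z_k)$. What you wrote is the standard split for Markov noise entering through $Z_{k+1}$, whereas here it enters through $Z_k$. The correct version replaces $P_b\hat g$ by $\hat g$ in the second and third brackets:
\[
G_k=\bigl[\hat g_k(Z_{k+1})-P_b\hat g_k(Z_k)\bigr]+\bigl[\hat g_k(Z_k)-\hat g_{k+1}(Z_{k+1})\bigr]+\bigl[\hat g_{k+1}(Z_{k+1})-\hat g_k(Z_{k+1})\bigr].
\]
This is exactly the paper's split $G_k(z_j)=\eta_{j,k+1}+v_{j,k}(Z_k)-v_{j,k}(Z_{k+1})$. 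Since $\hat g$ satisfies the same uniform bound and the same $O(\alpha_{k+1})$ increment bound as $P_b\hat g$, your later estimates go through unchanged. The summation by parts gives at most $2\alpha_{n+1}\sup_k\norm{\hat g_k}$, and the third bracket gives a $\sum\alpha_{k+1}^2$ tail.

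Also, $\xi_{k+1}$ should be centered by $E[\,\cdot\,|\cF_k]$, not $E[\,\cdot\,|\cF_n]$.
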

To streamline the proof of Theorem \ref{thm:as_convergence}, we defer the proof of Proposition \ref{prop:vanishing_err} to Appendix \ref{section:proof:prop:vanishing_err}.

Equipped with this error bound, we proceed to show the convergence by using the classic ODE approach (c.f. \citet{kushner2003stochastic}). We note that since we are working with Banach-space valued objects $\bar q_n\in C(\Z)$, additional techniques beyond the vector valued case in \citet{kushner2003stochastic} are required. 

To simplify notation, we define $\cH[q] := \overline\cT_{\mu_b} [q]-q$. Notice that since $\overline\cT_{\mu_b}$ is a $\gamma$-contraction, for all $q,q'\in C(\Z)$, \begin{equation}\label{eqn:cH_Lip}
    \norm{\cH[q] - \cH[q']}\leq \norm{\overline\cT_{\mu_b} [q]-\overline\cT_{\mu_b}[q']}+\norm{q-q'}\leq (1+\gamma)\norm{q-q'};
\end{equation}
i.e. $\cH$ is $(1+\gamma)$-Lipschitz. 

With this notation, the recursion of $\bar q_n$ in \eqref{eqn:tu_Delta_split} implies that
$$\bar q_m = \bar q_n + \sum_{k=n}^{m-1}\alpha_{k+1}\cH[q_k] + \sum_{k=n}^{m-1}\alpha_{k+1}(D_{k+1} + G_k + B_k).$$

We first state the (Banach-space valued) ODE of interest and the relevant existence and uniqueness of solution theorem in Lemma \ref{lemma:ODE_EU_solution}. Then, we establish the asymptotic stability of the fixed-point of this ODE in Lemma \ref{lemma:ODE_exponential_stability_of_qstar}. 

\begin{lemma}[\citet{brezis2011functional}, Theorem 7.3]\label{lemma:ODE_EU_solution}
Let $(E,\norm{\cd})$ be a Banach space and let $\cH:E\to E$ be globally Lipschitz; i.e., there exists $\ell <\infty$ such that for all $ q,q'\in E$, $\norm{\cH[q]-\cH[q']}\le \ell\norm{q-q'}$.
Then for every $q_0\in E$ there exists a unique function $q:[0,\infty)\ra E$ such that $q$ is continuously differentiable and for all $t\geq 0$
\begin{equation}\label{eqn:Banach_integral_equation}
q(t)=q_0+\int_0^t \cH(q(s))ds. 
\end{equation}
\end{lemma}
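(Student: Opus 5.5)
The plan is the Picard iteration argument in the Banach space $(E,\norm{\cd})$, which mirrors the classical finite-dimensional Picard--Lindelöf proof; we do not need completeness beyond that of $E$. We work in the space $C([0,T];E)$ of continuous $E$-valued paths on a finite horizon $[0,T]$, equipped with the sup norm $\sup_{t\le T}\norm{q(t)}$, which is again a Banach space. On it we define the map
\[
\Gamma[q](t):=q_0+\int_0^t \cH(q(s))\,ds .
\]
The integral is a Riemann (equivalently Bochner) integral of the continuous $E$-valued function $s\mapsto \cH(q(s))$. This function is continuous because $\cH$ is Lipschitz and $q$ is continuous, and Riemann sums of a continuous function on a compact interval converge in a Banach space by uniform continuity and completeness. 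Hence $\Gamma[q]$ is again continuous, and $\Gamma$ maps $C([0,T];E)$ into itself.

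\textbf{Step 1 (contraction).} The Lipschitz bound gives
\[
\norm{\Gamma[q](t)-\Gamma[q'](t)}\le \ell\int_0^t\norm{q(s)-q'(s)}\,ds .
\]
To get a contraction on all of $[0,T]$ at once, we use the weighted norm $\sup_{t\le T}e^{-2\ell t}\norm{q(t)}$, which is equivalent to the sup norm. In this norm $\Gamma$ is a $\tfrac12$-contraction. (Alternatively, one can iterate the bound above to get $\norm{\Gamma^k q-\Gamma^k q'}\le \frac{(\ell T)^k}{k!}\norm{q-q'}$.) The Banach fixed-point theorem then yields a unique continuous solution of \eqref{eqn:Banach_integral_equation} on $[0,T]$.

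\textbf{Step 2 (global existence and uniqueness).} Uniqueness on $[0,T]$ holds for every $T$, so solutions on $[0,T]$ and $[0,T']$ agree on the overlap. Gluing them defines a unique solution on $[0,\infty)$. Uniqueness within the class of continuous paths also follows directly from Grönwall's inequality applied to $\phi(t)=\norm{q(t)-q'(t)}$, since $\phi(t)\le \ell\int_0^t\phi(s)\,ds$.

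\textbf{Step 3 (regularity).} Since $s\mapsto \cH(q(s))$ is continuous, the fundamental theorem of calculus for Banach-valued Riemann integrals gives that $q$ is differentiable with $q'(t)=\cH(q(t))$, which is continuous. Hence $q$ is $C^1$. The step to check carefully is this vector-valued FTC; it follows from the estimate
\[
\Big\|\tfrac1h\int_t^{t+h}\cH(q(s))\,ds-\cH(q(t))\Big\|\le \sup_{|s-t|\le h}\norm{\cH(q(s))-\cH(q(t))}\ra 0 .
\]
No step is a serious obstacle. The only care needed is to use only Lipschitz continuity and completeness, with no compactness, which is exactly why this ODE statement applies to $E=C(\Z)$.
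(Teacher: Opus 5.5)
Your proposal is correct, and it is essentially the standard Cauchy--Lipschitz--Picard argument that the paper imports by citation (Brezis, Theorem 7.3; the paper gives no proof of its own): a Banach fixed point for the integral map in an exponentially weighted sup norm, uniqueness via Gr\"onwall, and $C^1$ regularity from the vector-valued fundamental theorem of calculus. The only cosmetic difference is that Brezis works directly on $[0,\infty)$ in the space of continuous paths with $\sup_{t\ge 0}e^{-kt}\norm{q(t)}<\infty$ for some $k>\ell$, whereas you work on $[0,T]$ and glue, which is equally valid.
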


Apply Lemma \ref{lemma:ODE_EU_solution} with $(E,\norm\cd) =  (C(\Z),\norm{\cd})$, we define $q(t;q')$ as the unique solution to
\begin{equation}\label{eqn:ode_q_def}
    q(t;q') := q' + \int_0^{t}\cH[q(s;q')]ds.
\end{equation} 

\begin{lemma}[Global exponential stability of $q^*$]\label{lemma:ODE_exponential_stability_of_qstar}
Recall that $q^* = q^*_{\mu_b}$ is the unique fixed point of $\overline\cT_{\mu_b}$ in $C(\Z)$. For all $t\geq 0$ and $q'\in C(\Z)$ 
$$\norm{q(t,q') - q^*} \leq e^{-(1-\gamma)t}\norm{q' - q^*}.$$
\end{lemma}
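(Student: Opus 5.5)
We will prove the estimate with a Grönwall-type argument on the deviation $e(t):=q(t;q')-q^*$. Since $q^*$ is a fixed point of $\overline\cT_{\mu_b}$, we have $\cH[q^*]=0$. Hence, by \eqref{eqn:ode_q_def},
\[
e(t)=e(0)+\int_0^t \crbk{\cH[q(s;q')]-\cH[q^*]}ds
=e(0)-\int_0^t e(s)ds+\int_0^t \crbk{\overline\cT_{\mu_b}[q(s;q')]-\overline\cT_{\mu_b}[q^*]}ds .
\]
Directly applying the $(1+\gamma)$-Lipschitz bound \eqref{eqn:cH_Lip} would only give growth. Instead we separate out the linear $-e$ part with an integrating factor. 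Since $q(\cd;q')$ is continuously differentiable in $C(\Z)$ by Lemma \ref{lemma:ODE_EU_solution}, we have $\frac{d}{dt}e(t)=-e(t)+g(t)$, where $g(t):=\overline\cT_{\mu_b}[q(t;q')]-\overline\cT_{\mu_b}[q^*]$ is continuous in $t$. The Banach-valued product rule then gives $\frac{d}{dt}(e^{t}e(t))=e^{t}g(t)$, so by the fundamental theorem of calculus for $C^1$ Banach-valued maps (Bochner/Riemann integral)
\[
e(t)=e^{-t}e(0)+\int_0^t e^{-(t-s)}g(s)ds .
\]

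Next we take norms. The Riemann integral of a continuous Banach-valued function satisfies $\norm{\int f}\le\int\norm{f}$. By Lemma \ref{lemma:fixed_point_qstar_bound}, $\norm{g(s)}\le\gamma\norm{e(s)}$. Setting $\phi(t):=e^{t}\norm{e(t)}$, this gives
\[
\phi(t)\le \phi(0)+\gamma\int_0^t \phi(s)ds .
\]
The function $\phi$ is continuous because $t\mapsto e(t)$ is continuous in norm. The scalar Grönwall inequality then yields $\phi(t)\le \phi(0)e^{\gamma t}$, that is, $\norm{e(t)}\le e^{-(1-\gamma)t}\norm{e(0)}$, which is the claim.

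The main obstacle is justifying the variation-of-constants formula in the Banach space $C(\Z)$, since the sup norm is not differentiable and we cannot differentiate $\norm{e(t)}^2$ as in Hilbert space. The integrating-factor route avoids this. It needs only that $e^{t}e(t)$ is $C^1$ with the stated derivative, which follows from the product of a scalar $C^1$ function with a $C^1$ Banach-valued function, and that a $C^1$ map equals its initial value plus the integral of its derivative, which is standard (e.g.\ by pairing with continuous linear functionals and using Hahn–Banach). If one prefers to avoid derivatives entirely, an alternative is to verify the variation-of-constants identity directly from the integral equation by a Fubini-type interchange for continuous integrands. Either way, the remaining steps are the elementary norm and Grönwall estimates above.
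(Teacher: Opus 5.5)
Your proposal is correct and follows the same route as the paper: use the integrating factor $e^{t}$ to get $e^{t}\epsilon(t)=\epsilon(0)+\int_0^t e^{s}\bigl(\overline\cT_{\mu_b}[q(s;q')]-\overline\cT_{\mu_b}[q^*]\bigr)\,ds$, then apply the $\gamma$-contraction and scalar Gr\"onwall to $e^{t}\norm{\epsilon(t)}$. Your version also justifies the Banach-valued calculus step that the paper simply calls ``integration by parts,'' and it carries the correct sign $-\int_0^t e(s)\,ds$, whereas the paper's intermediate display has a $+\epsilon(s)$ typo.
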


\begin{proof}[Proof of Lemma \ref{lemma:ODE_exponential_stability_of_qstar}]
Note that when $q' = q^*$ the constant function $t\ra q^*$ solves \eqref{eqn:ode_q_def}. Then
\[
\begin{aligned}
\epsilon(t)&:= q(t;q') - q^* \\
&= (q'-q^*) + \int_0^t\crbk{ \cH[q(s;q')]-\cH[q^*]}ds.\\
&= \epsilon(0) + \int_0^t \crbk{ \overline \cT_{\mu_b}[q(s;q')]-\overline \cT_{\mu_b}[q^*] + \epsilon(s)}ds.
\end{aligned}
\]
Therefore, using integration by parts, we see that
\[
e^t\epsilon(t) = \epsilon(0) + \int_0^te^s\crbk{ \overline \cT_{\mu_b}[q(s;q')]-\overline \cT_{\mu_b}[q^*] }ds.
\]
Taking the sup norm, we have
\[
e^{t}\norm{\epsilon(t)}\leq \norm{\epsilon(0)} + \int_0^t e^{s}\gamma\norm{q(s;q') - q^*}ds = \norm{\epsilon(0)} + \int_0^t e^{s} \gamma\norm{\epsilon(s)}ds
\]
where we used the contraction property of $ \overline \cT_{\mu_b}$. 

Therefore, letting $x(t) = e^t \norm{\epsilon(t)}$, we see that $$x(t)\leq x(0) + \gamma\int_0^t   x(s)ds.$$
By Gr\"onwall's inequality, $x(t)\le \norm{x(0)}e^{\gamma t} = \norm{\epsilon(0)}e^{\gamma t}$, hence $$\norm{\epsilon(t)}\le e^{-t}x(t)\le e^{-(1-\gamma)t}\norm{\epsilon(0)}.$$ This implies Lemma \ref{lemma:ODE_exponential_stability_of_qstar}. 
\end{proof}

Define the continuous time in between $n$ and $m$ by $$ \tau_{n,m}:= \sum_{k=n}^{m-1}\alpha_{k+1}. $$ So, we have that for all $k\geq n$
\begin{equation}\label{eqn:tu_max_nk_property}
\max \cN(n,\tau_{n,k}) = k \quad \text{and hence} \quad \tau_{n,\max\cN(n,T)} \leq T.
\end{equation}

We consider the error between the discrete update and the ODE flow at time $\tau_{n,m}$. Specifically, we subtract and rewrite the error as
\begin{equation}\label{eqn:tu_q_and_ODE_error_split}
    \begin{aligned}
        \bar q_m - q(\tau_{n,m};\bar q_n) &= \underbrace{\sum_{k=n}^{m-1} \alpha_{k+1}\crbk{\cH[\bar q_k] - \cH[q(\tau_{n,k};\bar q_n)]}}_{:=\psi_{n,m}} \\
        &\quad + \underbrace{\sum_{k=n}^{m-1} \crbk{\alpha_{k+1}\cH[q(\tau_{n,k};\bar q_n)]-\int_{\tau_{n,k}}^{\tau_{n,k+1}} \cH[q(s;\bar q_n)]ds }}_{:=\phi_{n,m}}\\
        &\quad+\underbrace{\sum_{k=n}^{m-1} \alpha_{k+1}\crbk{D_{k+1} + G_k+B_k }}_{:=\veps_{n,m}}
    \end{aligned}
\end{equation}

We analyze these terms separately. First, for $\psi_{n,m}$, we have
$$\begin{aligned}\norm{\psi_{n,m}}&\leq \sum_{k=n}^{m-1}\alpha_{k+1}\crbk{\norm{\overline\cT_{\mu_b}[\bar q_k] - \overline\cT_{\mu_b}[q(\tau_{n,k};\bar q_n)]} + \norm{\bar q_k- q(\tau_{n,k};\bar q_n)} }\\
&\leq (1+\gamma)\sum_{k=n}^{m-1}\alpha_{k+1}\norm{\bar q_k-q(\tau_{n,k};\bar q_n)}
\end{aligned}$$
where we used the contraction property of $\overline\cT_{\mu_b}$. 

For $\phi_{n,m}$, note that $\tau_{n,k+1} - \tau_{n,k} = \alpha_{k+1}$. So, 
\begin{equation}\label{eqn:tu_phi_n_m_bd_step}\begin{aligned}\norm{\phi_{n,m}}&\leq \sum_{k=n}^{m-1}\int_{\tau_{n,k}}^{\tau_{n,k+1}} \norm{\cH[q(\tau_{n,k};\bar q_n)] - \cH[q(s;\bar q_n)]}ds\\
&\leq (1+\gamma) \sum_{k=n}^{m-1}\int_{\tau_{n,k}}^{\tau_{n,k+1}} \norm{q(\tau_{n,k};\bar q_n)- q(s;\bar q_n)}ds\\
&\stackrel{(i)}= (1+\gamma) \sum_{k=n}^{m-1}\int_{\tau_{n,k}}^{\tau_{n,k+1}} \norm{\int_{\tau_{n,k}}^s \cH[q(t;\bar q_n)] dt}ds\\
&\leq \frac{(1+\gamma)}{2}\sum_{k=n}^{m-1}\sup_{\tau_{n,k}\leq s\leq \tau_{n,k+1}}\norm{\cH[q(s;\bar q_n)]} \alpha_{k+1}^2\\
&\leq\sup_{0\leq s\leq \tau_{n,m}}\norm{\cH[q(s;\bar q_n)]} \sum_{k=n}^{m-1}\alpha_{k+1}^2
\end{aligned}
\end{equation}
where $(i)$ follows from \eqref{eqn:ode_q_def} and the last equality uses $\gamma < 1$. 

By Lemma \ref{lemma:bounded_trackers} and $\kappa\leq 1$, we have that $$\bar q_n(z)= \frac{\int_{\Z}\kappa(z,u)\nu_n(du)}{\int_{\Z}\kappa(z,u)\mu_b(du)}\leq \frac{\TV{\nu_n}} {\kappa_\wedge}\leq \frac{1}{(1-\gamma)\kappa_\wedge}.$$
So, by Lemma \ref{lemma:ODE_exponential_stability_of_qstar}, we have that
$$\norm{q(s;\bar q_n) - q^*}\leq e^{-(1-\gamma)t} \norm{\bar q_n-q^*}\leq \frac{2}{(1-\gamma)\kappa_\wedge}.$$ Thus, from \eqref{eqn:tu_phi_n_m_bd_step}, we conclude that
\begin{equation}\label{eqn:tu_phi_n_m_bd}\norm{\phi_{n,m}}\leq \frac{2}{(1-\gamma)\kappa_\wedge}\sum_{k=n}^{m-1}\alpha_{k+1}^2. \end{equation}
Since $\sum_{k=0}^\infty\alpha_{k+1}^2 < \infty$, we have that as $n\ra\infty,$
\begin{equation}\label{eqn:tu_phi_unif_interval_bd}\max_{m\in\cN(n,T)}\norm{\phi_{n,m}}\ra 0
\end{equation}everywhere.

Therefore, going back to \eqref{eqn:tu_q_and_ODE_error_split}, we have
\begin{equation}\label{eqn:tu_q_and_ODE_error_norm_split}
\begin{aligned}
    \norm{\bar q_m - q(\tau_{n,m};\bar q_n)}&\leq (1+\gamma)\sum_{k=n}^{m-1}\alpha_{k+1}\norm{\bar q_k-q(\tau_{n,k};\bar q_n)} +\norm{\phi_{n,m}} + \norm{\veps_{n,m}}.
\end{aligned}
\end{equation}
We now strengthen this bound to a uniform version over all time within a constant interval as $n\ra\infty$. In particular, we look at the uniform error defined by $$\Delta_{n}(T):= \max_{m\in\cN(n,T)}\norm{\bar q_m - q(\tau_{n,m};\bar q_n)}.$$

Notice that for all $k\geq n$  and $t\in [\tau_{n,k},\tau_{n,k+1})$ $$\Delta_{n}(t)=\max_{m\in\cN(n,t)}\norm{\bar q_m - q(\tau_{n,m};\bar q_n)} \geq 
\norm{\bar q_k - q(\tau_{n,k};\bar q_n)}.$$
So, $$\int_{\tau_{n,k}}^{\tau_{n,k+1}}\Delta_{n}(t)dt \geq \alpha_{k+1}\norm{\bar q_k - q(\tau_{n,k};\bar q_n)}.$$ Combining these properties with \eqref{eqn:tu_q_and_ODE_error_norm_split}, we have that
\[\begin{aligned}\Delta_n(T)&\leq (1+\gamma)\sum_{k=n}^{\max \cN(n,T)-1}\int_{\tau_{n,k}}^{\tau_{n,k+1}}\Delta_n(t)dt + \max_{m\in\cN(n,m)} (\norm{{\phi_{n,m}}} + \norm{\veps_{n,m}})\\
    &\leq (1+\gamma)\int_{0}^{T}\Delta_n(t)dt + \max_{m\in\cN(n,T)} (\norm{{\phi_{n,m}}} + \norm{\veps_{n,m}})
\end{aligned}\]
where the last inequality follows from $\tau_{n,n} = 0$ and \eqref{eqn:tu_max_nk_property}. Therefore, applying Gr\"onwall's inequality yields
\begin{equation}\label{eqn:tu_Delta_n_T_Gronwall}
    \Delta_n(T)\leq e^{(1+\gamma)T}\crbk{\max_{m\in\cN(n,T)} \norm{{\phi_{n,m}}} + \max_{m\in\cN(n,T)}\norm{\veps_{n,m}}}. 
\end{equation}
Therefore, recalling the definition of $\veps_{n,m}$ and applying Proposition \ref{prop:vanishing_err} and \eqref{eqn:tu_phi_unif_interval_bd}, we conclude that for fixed $T > 0$, 
$\Delta_n(T)\ra 0$ a.s. as $n\ra\infty$. 

We now use the a.s. convergence of $\Delta_n(T)$ together the stability of $q^*$  in Lemma \ref{lemma:ODE_exponential_stability_of_qstar} to conclude $\bar q_n\to q^*$ a.s. Specifically, fix $T>0$ we recursively define
$m_0 := 0$ and
$$m_{k+1} := \max\cN(m_k,T),\quad k\geq 0.$$
By \eqref{eqn:tu_max_nk_property}, $\tau_{m_k,m_{k+1}}\leq T. $ Moreover, since $\alpha_k\ra 0,$ it is easy to see that $\tau_{m_k,m_{k+1}}\ra T$ as $k\ra\infty$. In particular, there exists $k_T$ sufficiently large so that for all $k\geq k_T$, $\tau_{m_k,m_{k+1}}\geq T/2$.

We first control the error at iteration $\set{m_k:k\geq 0}$. Let $E_{k+1}:=\norm{\bar q_{m_{k+1}} - q^*}$, then
\[
\begin{aligned}
    E_{k+1} &\leq \norm{\bar q_{m_{k+1}} - q(\tau_{m_k,m_{k+1}};\bar q_{m_k})} + \norm{q(\tau_{m_k,m_{k+1}};\bar q_{m_k}) - q^*}\\
    &\stackrel{(i)}\leq \Delta_{m_k}(T) + \exp(-(1-\gamma)\tau_{m_k,m_{k+1}})\norm{\bar q_{m_k} - q^*}\\
    &= \Delta_{m_k}(T) + \exp(-(1-\gamma)\tau_{m_k,m_{k+1}})E_k
\end{aligned}
\]
where $(i)$ used the definition of $m_{k+1}$ so that $m_{k+1}\in\cN(m_k,T)$, and Lemma \ref{lemma:ODE_exponential_stability_of_qstar}. Therefore for $k\ge k_T$
\begin{equation}\label{eqn:tu_mk_q_err_bd}
E_{k+1} \leq e^{-\frac{(1-\gamma)T}{2}}E_k + \Delta_{m_k}(T) 
\end{equation}

Since $\Delta_n(T)\ra 0$ a.s., for a.s.$\omega\in\Omega$ and any $\epsilon > 0$, there exists $K_T(\epsilon,\omega)\geq k_T$ s.t. for all $k\geq K_T(\epsilon,\omega)$, $$\Delta_{m_k}(T,\omega)\leq (1-e^{-(1-\gamma)T/2}) \epsilon. $$
Therefore, for such $\omega$ and all $k\geq K_T(\epsilon,\omega)$, $$E_{k+1}(\omega)\leq e^{-(1-\gamma)T/2} E_k(\omega) + (1-e^{-(1-\gamma)T/2})\epsilon.$$
It is not hard to see that 
\begin{align*}E_{n} &\leq  e^{-(1-\gamma)T(n-k)/2}E_k + (1-e^{-(1-\gamma)T/2})\sum_{j=0}^{n-k-1}e^{-(1-\gamma)Tj/2}\epsilon \\
&\leq e^{-(1-\gamma)T(n-k)/2}E_k + \epsilon   
\end{align*}
for all $n\geq k\geq K_T(\epsilon,\omega)$. Since $\epsilon$ is arbitrary, we conclude that $E_n\ra 0$ a.s. 

Finally, for any $m\in\cN(m_k,T)$, we have that
\begin{align*}
\norm{\bar q_m - q^*} &\leq \norm{\bar q_m-q(\tau_{m_k,m},\bar q_{m_k})} + \norm{q(\tau_{m_k,m},\bar q_{m_k}) - q^*} \\
&\stackrel{(i)}\leq \Delta_{m_k}(\tau_{m_k,m_{k+1}}) + \norm{\bar q_{m_k} - q^*}\\
&\stackrel{(ii)}\leq \Delta_{m_k}(T) + E_k
\end{align*}
where $(i)$ applies max over $m = m_k,\ds,m_{k+1}$ and Lemma \ref{lemma:ODE_exponential_stability_of_qstar}, and $(ii)$ uses the monotonicity of $\Delta_{m_k}(\cd)$ and \eqref{eqn:tu_max_nk_property}. Since $\sum_{k}\alpha_{k+1} = \infty$, $m_k\ra\infty$ as $k\ra\infty$, we have 
$$ \limsup_{k\ra\infty}\max_{m\in\cN(m_k,T)}\norm{\bar q_m - q^*} = \limsup_{k\ra\infty}[\Delta_{m_k}(T) + E_k] = 0 \quad \text{a.s.}$$
Therefore, a.s.
$$0 = \lim_{k\ra\infty}\sup_{n\ge k}\max_{m\in\cN(m_n,T)}\norm{\bar q_m - q^*} = \lim_{k\ra\infty}\sup_{n\geq m_k}\norm{\bar q_n - q^*},$$
which implies that $\norm{\bar q_n - q^*}\ra 0$ and hence $\norm{ q_n - q^*}\ra 0$ a.s. by Lemma \ref{lemma:q_bar_approx_q}.

Finally, note that $\norm{\bar q_n - q^*} = \norm{\Phi_{\mu_b}[\nu_n - \nu^*]} = D_{\mu_b}(\nu_n,\nu^*).$
Therefore, $D_{\mu_b}(\nu_n,\nu^*)\ra 0$ a.s. 
\end{proof}

\section[Approximating Qstar with qstar]{Approximating $Q^*$ with $q^*$}\label{section:approx_err}

We have established that $q_n$ consistently approximates $q^*$ as $n\ra\infty$. We now quantify the bias from smoothing by bounding the deterministic approximation error between the optimal Q-function $Q^*$ (the fixed point of $\overline\cT$) and the stationary-smoothed fixed point $q^*=q^*_{\mu_b}$ (the fixed point of $\overline\cT_{\mu_b}$).

To simplify notation, we use $\mathsf d$ to denote the following metric
\[
\mathsf d(z,u):=
\begin{cases}
\abs{z-u}, & \text{case (i): if $\A\subset\R^{d_\A}$ is compact; }\\
\sqrt{\abs{x-y}^2+\1_{\{i\neq j\}}}, & \text{case (ii): if $\A=\{a_1,\dots,a_m\}$}. 
\end{cases}
\]
In either cases, we consider a reference measure 
\[
\lambda :=
\begin{cases}
\leb \text{ on }\Z\subset\R^{d_\X+d_\A}, & \text{case (i)},\\
\leb \times \text{(counting measure on $\A$)} \text{ on }\Z=\X\times\A, & \text{case (ii)},
\end{cases}
\]
where $\leb$ denotes the Lebesgue measure. Also, we write the volume of a Euclidean ball in $\R^d$ as 
\[
V_d(r): = \leb\big(\{v\in\R^{d}:\abs{v}\le r\}\big)=\frac{\pi^{d/2}r^{d}}{\Gamma(d/2+1)}. 
\]

We show that the following approximation error bounds hold. 

\begin{theorem}[Approximation error of smoothed Q-function]\label{thm:approx_Q_err}
Assume that $Q^*$ is $\alpha$-H\"older in $\mathsf d$ for some $\alpha\in(0,1]$, i.e.\ there exists $L_Q<\infty$ such that
$\abs{Q^*(z)-Q^*(u)}\le L_Q\,\mathsf d(z,u)^{\alpha}$ for all $z,u\in\Z$. Then, for any $\sigma>0$,
\begin{equation}\label{eqn:bias_via_xi}
\norm{Q^*-q^*}
\le \frac{L_Q}{1-\gamma}\,\xi_{\mu_b}(\sigma),
\end{equation}
where
\begin{equation}\label{eqn:def_xi_sigma_both}
\xi_{\mu_b}(\sigma)
:=
\sup_{z\in\Z}
\frac{\int_{\Z}\mathsf d(z,u)^{\alpha}\,\kappa_\sigma(z,u)\,\mu_b(du)}
{\int_{\Z}\kappa_\sigma(z,u)\,\mu_b(du)}.
\end{equation}

In addition, assume that $\mu_b\ll \lambda$ admits a density $p_b=\frac{d\mu_b}{d\lambda}$ such that $0<\underline p_b = \essinf_\lambda p_b\le \overline p_b = \esssup_\lambda p_b<\infty$. Assume also the following uniform local volume condition: There exist constants $0 < v_\wedge\leq 1$ and $\sigma_\vee>0$ such that for all $z\in\Z$

\begin{equation}\label{eqn:local_volume_ball}
\begin{aligned}
&\text{case (i):}
&&\leb\big(\{u\in\Z:\abs{z-u}\le r\}\big)\;\ge\; v_\wedge\,V_{d_\X+d_\A}(r), &&\forall r\in (0,\sigma_\vee\sqrt{d_{\X}+d_\A}];\\
&\text{case (ii):} &&\leb\big(\{y\in\X:\abs{x-y}\le r\}\big)\;\ge\; v_\wedge\,V_{d_\X}(r),&&\forall r\in (0,\sigma_\vee\sqrt{d_{\X}}].
\end{aligned}
\end{equation}
Then, for all $\sigma\in(0,\sigma_\vee]$,
\begin{equation}\label{eqn:xi_rate}
\begin{aligned}
&\text{case (i):}
&&\xi_{\mu_b}(\sigma)\le \frac{2}{v_\wedge}\frac{\overline p_b}{\underline p_b}\,(d_\X+d_\A)^{\alpha/2}\,\sigma^{\alpha},\\
&\text{case (ii):} &&\xi_{\mu_b}(\sigma)\le\frac{2}{v_\wedge}\frac{\overline p_b}{\underline p_b}
\Big(d_\X^{\alpha/2} \sigma^{\alpha}+(m-1)(\diam(\X)+1)^{\alpha}e^{-1/(2\sigma^2)}\Big).
\end{aligned}
\end{equation}
\end{theorem}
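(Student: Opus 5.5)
The proof has two parts: a contraction-based bias bound giving \eqref{eqn:bias_via_xi}, followed by an estimate of $\xi_{\mu_b}(\sigma)$ giving \eqref{eqn:xi_rate}.

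\textbf{Bias bound.} Since $Q^*=\overline\cT[Q^*]$ and $q^*=\cK_{\mu_b}[\overline\cT[q^*]]$, write
\[
Q^*-q^* = \crbk{Q^*-\cK_{\mu_b}[Q^*]} + \crbk{\cK_{\mu_b}[\overline\cT[Q^*]]-\cK_{\mu_b}[\overline\cT[q^*]]}.
\]
By Lemma~\ref{lemma:fixed_point_qstar_bound}, $\cK_{\mu_b}$ is nonexpansive, and $\overline\cT$ is a $\gamma$-contraction. Hence the second term has norm at most $\gamma\norm{Q^*-q^*}$. Rearranging gives $\norm{Q^*-q^*}\le \frac{1}{1-\gamma}\norm{Q^*-\cK_{\mu_b}[Q^*]}$.

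For each $z$, $\cK_{\mu_b}[Q^*](z)$ is an average of $Q^*(u)$ against the probability weights $\kappa_\sigma(z,u)\mu_b(du)/c_{\mu_b}(z)$. Therefore
\[
\abs{Q^*(z)-\cK_{\mu_b}[Q^*](z)}\le L_Q\,\frac{\int \mathsf d(z,u)^\alpha\kappa_\sigma(z,u)\mu_b(du)}{\int\kappa_\sigma(z,u)\mu_b(du)}.
\]
Taking the supremum over $z$ yields \eqref{eqn:bias_via_xi}. This part is routine.

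\textbf{Rate for $\xi_{\mu_b}(\sigma)$.} Replace $\mu_b$ by its density. The numerator is at most $\overline p_b\int \mathsf d^\alpha\kappa_\sigma\,d\lambda$ and the denominator is at least $\underline p_b\int\kappa_\sigma\,d\lambda$, which accounts for the factor $\overline p_b/\underline p_b$. It then remains to bound the ratio
\[
N(z)/M(z) := \int_\Z \mathsf d(z,u)^\alpha \kappa_\sigma(z,u)\,\lambda(du)\Big/\int_\Z\kappa_\sigma(z,u)\,\lambda(du).
\]

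\emph{Case (i), $d=d_\X+d_\A$.} For the numerator, extend the domain to all of $\R^d$ and use Jensen's inequality ($\alpha\le1$):
\[
N\le \int_{\R^d}\abs{v}^\alpha e^{-\abs v^2/2\sigma^2}dv \le (2\pi\sigma^2)^{d/2}\,\crbk{E\abs{G}^2}^{\alpha/2},\qquad G\sim N(0,\sigma^2 I_d),
\]
so $N\le (2\pi\sigma^2)^{d/2}(d\sigma^2)^{\alpha/2}$. For the denominator, restrict to the ball of radius $r=\sigma\sqrt d\le\sigma_\vee\sqrt d$ and apply \eqref{eqn:local_volume_ball}:
\[
M\ge e^{-d/2}\,v_\wedge V_d(\sigma\sqrt d).
\]

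\textbf{Main obstacle.} Case (i) now reduces to the dimension-free inequality
\[
(2\pi)^{d/2}\le 2\,e^{-d/2}\,V_d(\sqrt d) = 2\,e^{-d/2}\,\frac{\pi^{d/2}d^{d/2}}{\Gamma(d/2+1)},
\]
equivalently $\Gamma(d/2+1)\le 2(d/(2e))^{d/2}$. By Stirling this is false for large $d$: the left side exceeds the right by a factor of order $\sqrt{\pi d}$. So the constant $2$ does not follow from a single ball cut at $r=\sigma\sqrt d$.

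I would first try a better lower bound on $M$. One option is to integrate $e^{-s^2/2\sigma^2}$ against the radial volume lower bound $v_\wedge V_d(s)$ for $s\le\sigma\sqrt d$, via the layer-cake formula. The other is to compare $M$ with the full Gaussian mass on $[0,\sigma\sqrt d]$, which is at least a constant fraction of the total (roughly $1/2$ by concentration of the chi distribution). The second route should give $M\ge \tfrac12 v_\wedge(2\pi\sigma^2)^{d/2}$ up to the volume condition, which yields the factor $2/v_\wedge$. Justifying this step, specifically that the local volume condition up to radius $\sigma\sqrt d$ captures half the Gaussian mass uniformly in $d$, is the delicate point. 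If it fails, I would accept a slightly worse constant.

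\emph{Case (ii).} Split the integral over $u=(y,a_j)$ into the term $j=i$ and the terms $j\ne i$.
\begin{itemize}
\item The $j=i$ term is exactly case (i) in dimension $d_\X$, giving $d_\X^{\alpha/2}\sigma^\alpha$.
\item Each of the $m-1$ terms with $j\neq i$ has $\mathsf d^\alpha\le(\diam(\X)+1)^\alpha$ and carries an extra factor $e^{-1/(2\sigma^2)}$.
\end{itemize}
For the off-diagonal terms, dividing $e^{-1/(2\sigma^2)}\int_\X \kappa(x,y)\,dy$ by the $j=i$ denominator $M_i=\int_\X e^{-\abs{x-y}^2/2\sigma^2}dy$ gives exactly $e^{-1/(2\sigma^2)}$. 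Summing the two contributions gives \eqref{eqn:xi_rate}.
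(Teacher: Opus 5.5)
Your proposal follows the paper's route. You use the same decomposition $Q^*-q^*=(Q^*-\cK_{\mu_b}[Q^*])+\cK_{\mu_b}(\overline\cT[Q^*]-\overline\cT[q^*])$ for \eqref{eqn:bias_via_xi}, Jensen for the Gaussian numerator, and a local-volume lower bound for the denominator. The one step you leave open, and hedge on, is not a real obstacle. Your two ``options'' are not alternatives: together they are exactly the paper's argument.

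Set $d=d_\X+d_\A$, $R=\sigma\sqrt d$, $\phi(s)=e^{-s^2/(2\sigma^2)}$ and $B(z,R)=\{u\in\Z:\abs{z-u}\le R\}$. The layer-cake formula gives
\[
\int_{B(z,R)}\phi(\abs{z-u})\,du=\int_0^1\leb\{u\in B(z,R):\phi(\abs{z-u})\ge t\}\,dt .
\]
Each superlevel set here is $\{u\in\Z:\abs{z-u}\le\min(R,\sigma\sqrt{-2\log t})\}$, a ball around $z$ of radius at most $R\le\sigma_\vee\sqrt d$. So \eqref{eqn:local_volume_ball} applies at every level. Undoing the layer cake on $\R^d$ then gives
\[
\int_{B(z,R)}\phi(\abs{z-u})\,du\;\ge\; v_\wedge\int_{\abs{v}\le R}\phi(\abs{v})\,dv\;=\;v_\wedge(2\pi\sigma^2)^{d/2}\,P(\chi^2_d\le d)\;\ge\;\tfrac12\,v_\wedge(2\pi\sigma^2)^{d/2}.
\]
The volume condition only supplies the factor $v_\wedge$. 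The factor $\tfrac12$ is a purely Gaussian fact: the median of $\chi^2_d$ does not exceed its mean $d$. Concentration of the chi distribution alone would give only ``roughly $\tfrac12$'' for large $d$, not a bound valid for every $d$.

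Combined with your numerator bound $(2\pi\sigma^2)^{d/2}(d\sigma^2)^{\alpha/2}$, this yields exactly the constant $\frac{2}{v_\wedge}\frac{\overline p_b}{\underline p_b}$, so you need not settle for a worse one. The same bound with $d=d_\X$ handles the $j=i$ term in case (ii).

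In case (ii), your handling of the $j\neq i$ terms differs slightly from the paper's and is a bit sharper. You cancel $\int_\X e^{-\abs{x-y}^2/(2\sigma^2)}\,dy$ directly against the $j=i$ part of the denominator. This gives $\frac{\overline p_b}{\underline p_b}(m-1)(\diam(\X)+1)^\alpha e^{-1/(2\sigma^2)}$ without using the volume condition. The paper instead extends that integral to $\R^{d_\X}$ and divides by the Gaussian lower bound, which introduces the extra factor $2/v_\wedge\ge 2$. Your bound therefore implies the stated one.
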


\begin{remark}
The optimal Q-function is H\"older continuous when the dynamics $f$ in Assumption~\ref{assump:dynamics} is uniformly Lipschitz on $\Z$ and the reward is H\"older continuous; see, e.g., \citet{harder2024continuity}.

Note that it is necessary that $v_\wedge\leq 1$ by the definition of $V_d(r)$. Further, we remark that the local volume condition rules out cases where $\X\times \A$ (in case (i)) or $\X$ (in case (ii)) is ``hollow" near any point $z\in\Z$. In particular, if $\Z/\X$ is a convex body with non-empty interior, then the local volume condition is satisfied. Therefore, the approximation error goes to 0 at rate $\lesssim \sigma^\alpha$ as $\sigma\da 0$ if $\mu_b$ has a bounded density and $\Z$ is, e.g., a full-dimensional sphere or rectangle. 

We also remark that the technical requirement that the local volume condition holds up to radii $r\le O(\sqrt d)$ arises from matching spherical and Gaussian volume in $\R^d$. 
\end{remark}

\begin{proof}[Proof of Theorem \ref{thm:approx_Q_err}]
We first assume only that $Q^*$ is $\alpha$-H\"older in $\mathsf d$. Using fixed-point identities
$Q^*=\overline\cT[Q^*]$ and $q^*=q^*_{\mu_b}=\cK_{\mu_b}[\overline\cT[q^*]]$,
\[
Q^*-q^*
=\overline\cT[Q^*]-\cK_{\mu_b}[\overline\cT[q^*]]
=\big(Q^*-\cK_{\mu_b}[Q^*]\big)+\cK_{\mu_b}\!\big(\overline\cT[Q^*]-\overline\cT[q^*]\big).
\]
Taking $\norm{\cd}$, using that $\cK_{\mu_b}$ is non-expansive (Lemma~\ref{lemma:fixed_point_qstar_bound})
and $\overline\cT$ is a $\gamma$-contraction, yields
\[
\norm{Q^*-q^*}
\le \norm{Q^*-\cK_{\mu_b}[Q^*]}+\gamma\norm{Q^*-q^*},
\]
hence $\norm{Q^*-q^*}\le \frac{1}{1-\gamma}\norm{Q^*-\cK_{\mu_b}[Q^*]}$.

Fix $z\in\Z$ and define the probability measure
\[
w_{z,\sigma}(A):=\frac{\int_A\kappa_\sigma(z,u)\mu_b(du)}{\int_\Z \kappa_\sigma(z,v)\mu_b(dv)}.
\]
Then $\cK_{\mu_b}[Q^*](z)=\int_\Z Q^*(u)w_{z,\sigma}(du)$, and by H\"older regularity,
\[
\abs{Q^*(z)-\cK_{\mu_b}[Q^*](z)}
\le \int_\Z \abs{Q^*(z)-Q^*(u)}\,w_{z,\sigma}(du)
\le L_Q \int_\Z \mathsf d(z,u)^\alpha\,w_{z,\sigma}(du).
\]
Taking $\sup_{z\in\Z}$ gives $\norm{Q^*-\cK_{\mu_b}[Q^*]}\le L_Q\,\xi_{\mu_b}(\sigma)$, and thus
\eqref{eqn:bias_via_xi} follows.

\medskip
We now prove \eqref{eqn:xi_rate} under the additional density and local volume assumptions. For case (i), set $d:=d_\X+d_\A$ and for case (ii), set $d:=d_\X$.

\textbf{Denominator lower bound: cases (i).}
Fix $\sigma\in(0,\sigma_\vee]$, let $R:=\sigma\sqrt{d}$. For any $z\in\Z$,
\[
c_{\mu_b}(z)=\int_\Z \kappa_\sigma(z,u)\mu_b(du)
=\int_\Z e^{-\abs{z-u}^2/(2\sigma^2)}\,p_b(u)\,du.
\]
Using $p_b\ge \underline p_b$ and restricting to $B(z,R):=\{u\in\Z:\abs{z-u}\le R\}$ gives
\begin{equation}\label{eqn:tu_c_lb_init}
c_{\mu_b}(z)\ge \underline p_b\int_{B(z,R)} e^{-\abs{z-u}^2/(2\sigma^2)}\,du.    
\end{equation}

Define $\phi(r):=e^{-r^2/(2\sigma^2)}\in(0,1]$ . We observe that for all $z\in\Z$,
\begin{equation}\label{eqn:tu_int_phi_using_exp}
\begin{aligned}
\frac{1}{\leb(B(z,R))}\int_{B(z,R)} \phi(\abs{z-u})\,du &= E \phi(\abs{z-U})\\
&=\int_{0}^{1}P\crbk{\phi(\abs{z-U})\ge t}dt.\\
&=\int_{0}^{1}\frac{\leb\big(\{u\in B(z,R):\phi(\abs{z-u})\ge t\}\big)}{\leb(B(z,R))} dt.
\end{aligned}
\end{equation}
where $U$ is a uniform r.v. on $B(z,R)$. Note that for $t\in[0,\phi(R))$,  $$\set{u\in\Z:\phi(|z-u|) \ge t}\supset B(z,R).$$ So, for all $z\in\Z$ 
\begin{equation}\label{eqn:tu_t_small_vol_bd}
\begin{aligned}
    \leb\big(\{u\in B(z,R):\phi(\abs{z-u})\ge t\}\big)& = \leb\big(B(z,R)\cap \{u\in\Z:\phi(\abs{z-u})\ge t\}\big)\\
    &=\leb(B(z,R))\\
    &\geq v_\wedge V_d\crbk{R}
\end{aligned}
\end{equation}
where the last equality follows from the local volume condition and $R\leq \sigma_\vee \sqrt{d}$. 

On the other hand, for $t\in[\phi(R),1]$, we have  $\sigma\sqrt{-2\log t} \leq R \leq \sigma_\wedge \sqrt{d}$ and $$\{u\in B(z,R):\phi(\abs{z-u})\ge t\}=\set{u\in\Z:\abs{z-u}\le \sigma\sqrt{-2\log t}}.$$

So the local volume condition yields
\begin{equation}\label{eqn:tu_t_large_vol_bd}
\leb\big(\{u\in B(z,R):\phi(\abs{z-u})\ge t\}\big)\ge v_\wedge V_d\crbk{\sigma\sqrt{-2\log t}}.
\end{equation}

Combining \eqref{eqn:tu_int_phi_using_exp}, \eqref{eqn:tu_t_small_vol_bd}, and \eqref{eqn:tu_t_large_vol_bd}, we have that for all $z\in\Z$
$$\begin{aligned}
\int_{B(z,R)} \phi(\abs{z-u})\,du 
&\ge v_\wedge\crbk{\phi(R)V_d\crbk{R} +  \int_{\phi(R)}^{1}V_d\crbk{\sigma\sqrt{-2\log t}} dt} \\
&=  v_\wedge\int_{0}^{\phi(R) }\leb\big(\set{u:|u-z|\leq R}\big)dt +  \int_{\phi(R)}^1\leb\big(\{u:\phi(\abs{z-u})\ge t\}\big) dt\\
&\stackrel{(i)}= v_\wedge\int_{0}^{\phi(R) }\leb\big(\set{u:|u|\leq R}\big)dt +  \int_{\phi(R)}^1\leb\big(\{u:\phi(\abs{u})\ge t\}\big) dt\\
&= v_\wedge \int_{|u|\leq R}\phi(|u|)du. 
\end{aligned}$$
where $(i)$ uses the invariance of Lebesgue measure under shifts, and the last inequality follows from a similar argument as in \eqref{eqn:tu_int_phi_using_exp}.

Let $G\sim N(0,I_d)$. Then
\begin{align*}
\int_{|u|\le R} \phi(|u|)du
&=(2\pi)^{d/2}\sigma^d\,P(\abs{G}\le R/\sigma)\\
&=(2\pi)^{d/2}\sigma^d\,P(\abs{G}^2\le d) \\
&\ge\tfrac12(2\pi)^{d/2}\sigma^d, 
\end{align*}
where the last inequality uses the fact that $|G|^2\sim\chi^2_d$ with median$(|G|^2)\leq d$.
Therefore, going back to \eqref{eqn:tu_c_lb_init}
\begin{equation}\label{eqn:denom_lower_final_case_i}
\inf_{z\in\Z}c_{\mu_b}(z)\ge \tfrac12\,\underline p_b\,v_\wedge\,(2\pi)^{d/2}\sigma^{d}.
\end{equation}

\textbf{Denominator lower bound, cases (ii).}
Fix $z=(x,a_i)\in\X\times\A$. Writing $\mu_b(dy, a_j)=p_b(y,a_j)\,dy$,
\[
c_{\mu_b}(x,a_i)=\sum_{j=1}^m\int_{\X}\exp\!\Big(-\frac{\abs{x-y}^2+\1_{\{i\neq j\}}}{2\sigma^2}\Big)\,p_b(y,a_j)\,dy
\;\ge\;\underline p_b\int_{\X}e^{-\abs{x-y}^2/(2\sigma^2)}\,dy.
\]
Applying the same argument as in case (i) on $\X\subset\R^{d_\X}$ (with the Euclidean local volume condition on $\X$)
yields
\begin{equation}\label{eqn:denom_lower_final_case_ii}
\inf_{(x,a)\in\Z}c_{\mu_b}(x,a)\ge \tfrac12\,\underline p_b\,v_\wedge\,(2\pi)^{d_\X/2}\sigma^{d_\X}.
\end{equation}

\textbf{Numerator upper bound, case (i).}
Here $\mathsf d(z,u)=\abs{z-u}$ on $\R^{d}$ with $d=d_\X+d_\A$. For any $z\in\Z$,
\[
\int_\Z \abs{z-u}^{\alpha}\kappa_\sigma(z,u)\mu_b(du)
\le \overline p_b \int_{\R^{d}}\abs{v}^{\alpha}\exp\!\Big(-\frac{\abs{v}^2}{2\sigma^2}\Big)\,dv.
\]
Let $G\sim N(0,I_d)$. Then the Gaussian moment gives
\begin{align*}
\frac{1}{(2\pi)^{d/2}\sigma^{d}}\int_{\R^{d}}\abs{v}^{\alpha}e^{-\abs{v}^2/(2\sigma^2)}dv
&=\sigma^\alpha\,\E[\abs{G}^{\alpha}]\\
&\le \sigma^\alpha\,(\E[\abs{G}^2])^{\alpha/2}\\
&=\sigma^\alpha\,d^{\alpha/2},   
\end{align*}
where we used Jensen's inequality. So, $$\int_{\R^{d}}\abs{v}^{\alpha}e^{-\abs{v}^2/(2\sigma^2)}dv\leq (2\pi)^{d/2}\sigma^{d} \sigma^\alpha\,d^{\alpha/2}$$

Combining this and \eqref{eqn:denom_lower_final_case_i} yields $$\xi_{\mu_b}(\sigma)\leq d^{\alpha/2} \frac{2\overline p_b}{v_\wedge\underline p_b} \sigma^\alpha.$$ This shows the case (i) in\eqref{eqn:xi_rate}.

\textbf{Numerator upper bound, case (ii).}
Let $\A=\{a_1,\dots,a_m\}$, $m:=|\A|$, and write $z=(x,a_i)$.  Set $D_\X:=\diam(\X):=\sup_{x,y\in\X}|x-y|<\infty$.
Using $p_b\le \overline p_b$ and splitting $j=i$ and $j\neq i$,
\begin{align*}
&\int_{\Z}\mathsf d(z,u)^{\alpha}\kappa_\sigma(z,u)\mu_b(du)\\
&\le \overline p_b\int_{\X}|x-y|^{\alpha}e^{-|x-y|^2/(2\sigma^2)}dy
\;+\;\overline p_b\sum_{j\neq i}\int_{\X}(|x-y|^2+1)^{\alpha/2}e^{-(|x-y|^2+1)/(2\sigma^2)}dy\\
&\le \overline p_b\int_{\R^{d}}|v|^{\alpha}e^{-|v|^2/(2\sigma^2)}dv
\;+\;\overline p_b(m-1)(D_\X^2+1)^{\alpha/2}e^{-1/(2\sigma^2)}\int_{\R^{d_\X}}e^{-|v|^2/(2\sigma^2)}dv\\
&= \overline p_b(2\pi)^{d_\X/2}\sigma^{d_\X}\sqbk{\sigma^{\alpha}\E|G|^{\alpha}+(m-1)(D_\X^2+1)^{\alpha/2}e^{-1/(2\sigma^2)}}\\
&\le \overline p_b(2\pi)^{d_\X/2}\sigma^{d_\X}\sqbk{\sigma^{\alpha}d_\X^{\alpha/2}+(m-1)(D_\X^2+1)^{\alpha/2}e^{-1/(2\sigma^2)}}
\end{align*}
where $G\sim N(0,I_{d_\X})$.

Dividing by the denominator lower bound
$\int_\Z\kappa_\sigma(z,u)\mu_b(du)\ge \underline p_b e^{-1/2}v_\wedge\sigma^{d}$ shown above yields, for all
$\sigma\in(0,\sigma_\vee]$,
\[
\xi_{\mu_b}(\sigma)\le \frac{2}{v_\wedge}\frac{\overline p_b}{\underline p_b}
\Big(d_\X^{\alpha/2} \sigma^{\alpha}+(m-1)(D_\X^2+1)^{\alpha/2}e^{-1/(2\sigma^2)}\Big).
\]
This and the inequality $a^2 + b^2\leq (|a|+|b|)^2$ implies case (ii) in \eqref{eqn:xi_rate}.
\end{proof}

\section{Numerical Experiment}\label{section:numerical_experiment}

We test Q-Measure-Learning (Algorithm~\ref{alg:Q_meas_learning}) on a two-item lost-sales inventory control problem with a continuous state space and a finite action set. To streamline the presentation, we defer some numerical and implementation details to Appendix~\ref{section:additional_numerial}.

\subsubsection*{The Inventory Model}
First, we specify the controlled Markov chain $\{(X_t,A_t):t\ge 0\}$ on $\Z=\X\times\A$. The state is the on-hand inventory vector $$X_t=(X_{t,1},X_{t,2})\in \X:=[0,I_{\max}]^2,$$ where $I_{\max}>0$ is the maximum inventory level (capacity) per item. The action is the order quantity vector $$A_t=(A_{t,1},A_{t,2})\in \A:=\{0,1,\dots,A_{1,\max}\}\times \{0,1,\dots,A_{2,\max}\},$$ so $\A$ is finite. The plots are constructed using an instance with $I_{\max}=15$ and $(A_{1,\max},A_{2,\max})=(10,8)$.

Let $\set{D_t=(D_{t,1},D_{t,2}):t\geq 0}$ be an i.i.d.\ sequence of nonnegative demand vectors. We induce correlation across coordinates by taking the absolute value of a correlated Gaussian vector, namely $D_t = |G_t|$ where $G_t\sim N(\mu,\Sigma)$. The lost-sales inventory dynamics then evolve as
\begin{equation}\label{eq:inventory_dynamics_capacity}
X_{t+1} = \min\set{(X_t+A_t - D_t)_+,I_{\max}},
\end{equation}
where $(\cdot)_+$ denotes the componentwise positive part.

Moreover, let $h,p,c\in\R^2_{>0}$ be holding, lost-sales, and unit ordering cost vectors, and let $k>0$ be a fixed ordering cost. The one-step cost is
\begin{equation}\label{eq:inventory_cost}
C_{t+1}
=
k\,\1\{A_{t,1}+A_{t,2}>0\}
+c^\top A_t+h^\top X_{t+1}
+p^\top(D_t-X_t-A_t)_+.
\end{equation}
Since $D$ is unbounded, to roughly match Assumption~\ref{assump:rwd}, we normalize costs to obtain bounded rewards; that is, $R_{t+1} = -C_{t+1}/C_{\max}$ where $C_{\max}$ satisfies $C_t\le C_{\max}$ with high probability.

In this setting, the dynamic programming principle holds with reward function $r(x,a) = E[R_{1}| X_0 = x, A_0 = a]$, and the optimal $Q^*$ is the unique solution to $\overline \cT [Q^*] = Q^*$, where $\overline \cT$ is given in \eqref{eqn:def_clipped_Bellman} with controlled transition kernel
$P(A|x,a) = E[\1\set{X_1\in A}|X_0 = x,A_0 = a]$ for all $A\in\cX$. 

\subsubsection*{Algorithm Setup and Benchmark}
We implement the weight-based Algorithm~\ref{alg:Q_meas_learning} using data generated from a single trajectory under a fully exploratory behavior policy. Specifically, we use the uniform behavior policy on the finite action set; that is, $\pi_b(a| x)=1/|\A|$ for all $a\in\A$ and $x\in\X$. Under this exploration policy, we empirically verify that the state space is well explored, which supports satisfaction of the density assumptions in Theorem~\ref{thm:approx_Q_err} (see Figure~\ref{fig:visitation} in Appendix~\ref{section:additional_numerial}).

Even though $\A$ is finite, we view it as a compact subset of $\R^2$ and implement the case (i) kernel in Section~\ref{section:gaussian_kernel}. For the plots shown in this paper, we set $\sigma=1$. We remark that during additional experimentation, we obtain similar convergence and policy error behavior for $\sigma\in[0.5,2]$. In contrast, choosing $\sigma$ too small (e.g.\ $\sigma=0.1$) under-smooths $q_n$, yielding a substantially rougher estimate and slower convergence. Moreover, we choose the stepsizes following Assumption~\ref{assump:stepsize}, with $\beta_n=1/(n+1)$ and $\alpha_n\sim 1/n$.

Since we have correlated demand and a positive fixed ordering cost, we are not aware of a closed-form solution. Instead, we compute an approximation $Q_\mrm{DP}$ to the optimal $Q^*$ using dynamic programming on a quantized state space, with a Monte Carlo estimate of the demand distribution.

\subsubsection*{Convergence and Policy Behavior}
\begin{figure}[tb]
\centering
\includegraphics[width=0.98\textwidth]{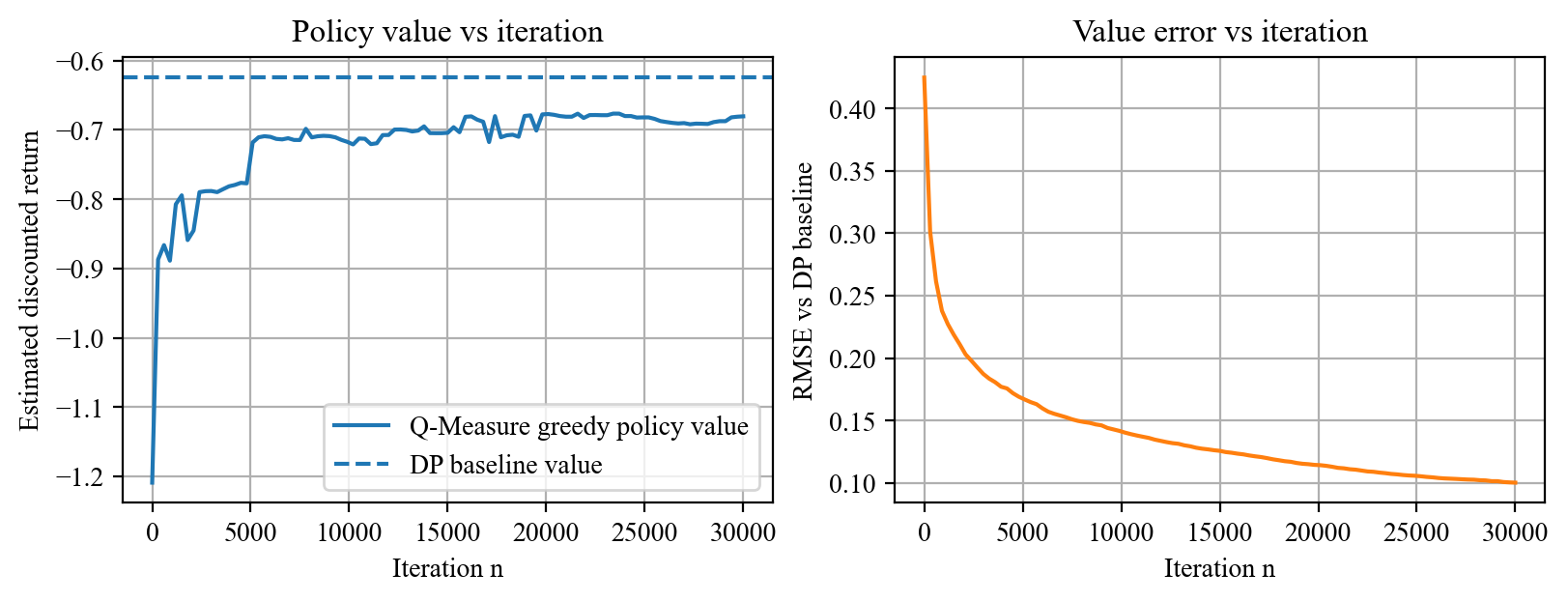}
\caption{Left: estimated discounted return of the greedy policy induced by $q_n$. Right: $Q^*$ estimation error, where the curves exhibit stabilization/decrease consistent with convergence.}
\label{fig:inventory_value_rmse}
\end{figure}

Figure~\ref{fig:inventory_value_rmse} reports two convergence diagnostics computed during training: the estimated discounted return of the greedy policy induced by $q_n$, and a root mean squared error (RMSE) curve that tracks the error $q_n-Q_{\mathrm{DP}}$ on a fixed evaluation grid of state-action pairs. We note that both plots are generated from a \textit{single training trajectory}. The plots show that the return increases and the RMSE decreases with the number of completed iterations $n$. This is consistent with the convergence mechanism in Section~\ref{section:convergence}: as the coupled trackers $(\mu_n,\nu_n)$ stabilize, the induced function $q_n=\Phi_{\mu_n}[\nu_n]$ approaches its limiting fixed point $q^*$, and the greedy policy correspondingly stabilizes.

We also note that both plots suggest a persistent gap to the optimal return and to $0$ RMSE. This is consistent with our theory: the smoothing parameter $\sigma>0$ induces a strictly positive approximation error, so that $\norm{q^*-Q^*}>0$.

\begin{figure}[htb]
\centering
\includegraphics[width=0.98\textwidth]{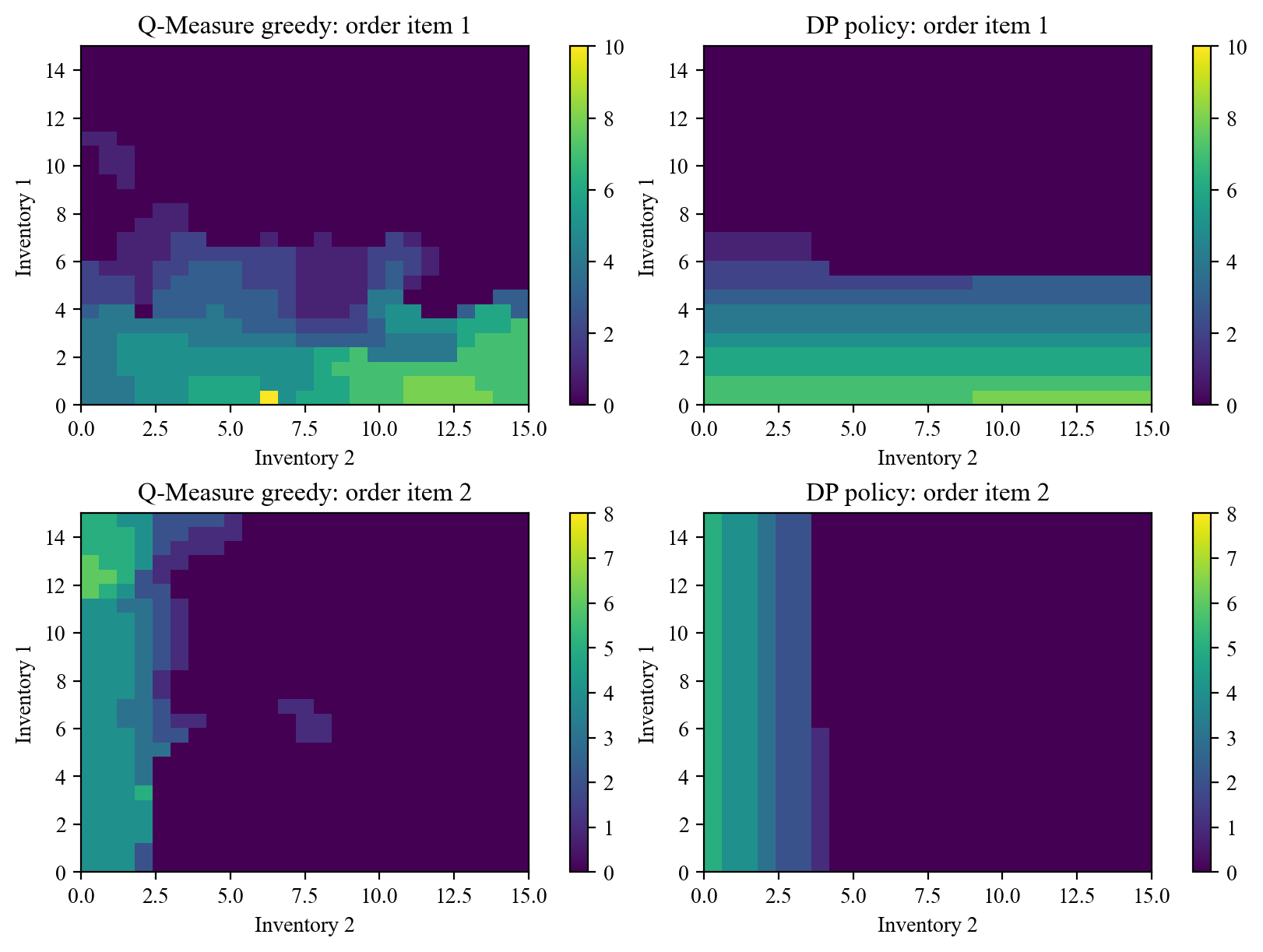}
\caption{Greedy policy induced by $q_n$ compared with the DP policy on a grid.}
\label{fig:inventory_policy_compare}
\end{figure}

Figure~\ref{fig:inventory_policy_compare} visualizes the learned greedy policy over the inventory state space and compares it to the same offline benchmark policy. Both policies exhibit a similar structure, as is typical for lost-sales systems: when inventory is low, the policy places positive orders; when inventory is sufficiently high, the policy tends to place no order. The close agreement in the induced partition of $\X$ into ordering and non-ordering regions supports the qualitative correctness of the learned policy.

\FloatBarrier

\bibliographystyle{apalike}
\bibliography{ref}

\newpage
\appendix
\appendixpage

\section{Proof of Proposition \ref{prop:D_metric}}\label{section:proof:prop:D_metric}

We will use the following auxiliary results from functional analysis. 
\begin{lemma}[\citet{rudin1974R_and_C}, Theorem 6.19]\label{lemma:riesz}
Let $\Z$ be compact Hausdorff and let $\mu\in\cM(\cZ)$. If $\int_\Z fd\mu=0$ for all $f\in C(\Z)$, then $\mu=0$.
\end{lemma}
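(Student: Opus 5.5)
The plan is to reduce to equality of two finite positive measures and then identify them through continuous approximations of indicator functions. In the setting of this paper $\Z$ is a compact subset of a Euclidean space (or such a set times a finite set with the $0$--$1$ metric), so it is a compact metric space. I will give the argument in that case, which is all we need later, and comment on the general compact Hausdorff case at the end.

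First, take the Jordan decomposition $\mu=\mu^+-\mu^-$, where $\mu^\pm$ are finite positive Borel measures on $\Z$. The hypothesis becomes $\int_\Z f\,d\mu^+=\int_\Z f\,d\mu^-$ for all $f\in C(\Z)$. Taking $f\equiv 1$ gives $\mu^+(\Z)=\mu^-(\Z)<\infty$.

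Next, fix a closed set $F\subset\Z$ with metric $\mathsf d$, and define
\[
f_k(z):=\max\{0,\,1-k\,\mathsf d(z,F)\}.
\]
Each $f_k$ is continuous with $0\le f_k\le 1$, and $f_k\downarrow \1_F$ pointwise because $F$ is closed. By dominated convergence under the finite measures $\mu^\pm$, this gives $\mu^+(F)=\mu^-(F)$ for every closed $F$.

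Then, the closed sets form a $\pi$-system that generates the Borel $\sigma$-algebra $\cZ$. The class $\{A\in\cZ:\mu^+(A)=\mu^-(A)\}$ is a $\lambda$-system, since the total masses agree and are finite. Dynkin's $\pi$--$\lambda$ theorem then yields $\mu^+=\mu^-$ on $\cZ$, i.e.\ $\mu=0$.

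The only delicate point is the passage from continuous test functions to indicators. In the metric case this is handled by the explicit functions $f_k$ together with dominated convergence. For a general compact Hausdorff space the same step requires Urysohn's lemma together with regularity of $\mu^\pm$, which is part of Rudin's definition of $\cM$. One approximates $\1_K$ for compact $K$ from above by continuous functions supported in open sets $V\supset K$ with $\mu^\pm(V\setminus K)$ small. Outer and inner regularity then extend the equality $\mu^+(K)=\mu^-(K)$ from compact sets to all Borel sets. Since the paper only uses the lemma on Euclidean $\Z$, the metric argument suffices.
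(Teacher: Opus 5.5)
Your proof is correct, but it takes a different route from the one behind the paper's citation. The paper gives no proof of its own; it defers to the uniqueness half of the Riesz representation theorem in Rudin. There one writes $d\mu=h\,d|\mu|$ with $|h|=1$ (polar decomposition) and uses density of continuous functions in $L^1(|\mu|)$, via Lusin's theorem, to get $|\mu|(\Z)=\int_\Z(\bar h-f_n)\,d\mu\le\int_\Z|\bar h-f_n|\,d|\mu|\to 0$. You instead use the Jordan decomposition, the explicit Lipschitz approximants $f_k\downarrow\1_F$ of closed-set indicators, and Dynkin's $\pi$--$\lambda$ theorem.

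Rudin's argument is one line once the $L^1$-density machinery is available, and it covers complex measures on any locally compact Hausdorff space. It requires $|\mu|$ to be regular, which is built into Rudin's definition of $M(X)$. Your argument is elementary and self-contained and needs no regularity hypothesis, because on a metric space the indicator of every closed set is a decreasing limit of continuous functions.

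This difference matters for the paper. Its $\cM(\cZ)$ consists of arbitrary finite signed measures, and for a non-metrizable compact Hausdorff space the lemma as literally stated is false. The Dieudonné measure $\mu$ on $[0,\omega_1]$ satisfies $\int f\,d\mu=f(\omega_1)$ for every continuous $f$, so $\mu-\delta_{\omega_1}\neq 0$ annihilates $C([0,\omega_1])$. Your caveat that regularity is needed in the general case is therefore substantive, not cosmetic. Your metric-case proof is exactly the version the paper actually uses in Proposition~\ref{prop:D_metric}.

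One implicit step deserves a sentence: $\cZ=\cX\times\cA$ coincides with the Borel $\sigma$-algebra of $\Z$ because $\X$ and $\A$ are separable metric spaces. This is what guarantees that the closed sets generate $\cZ$.
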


Note that this is the uniqueness statement in the Riesz representation theorem for $C(\Z)$. 

\begin{lemma}[\citet{steinwart2001influence}, Example 1]\label{lemma:steinwart}
Fix $\sigma>0$. For every nonempty compact $K\subset\R^d$, the Gaussian kernel
$\kappa(x,y)=\exp(-\abs{x-y}^2/(2\sigma^2))$ is universal on $K$; i.e. for every $f\in C(K)$ and $\epsilon> 0$, there exists a function $g$ induced by $\kappa$ with $\norm{f-g}\leq\epsilon$.
\end{lemma}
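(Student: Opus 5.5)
The plan is to reduce universality of the Gaussian kernel to the Stone--Weierstrass theorem for polynomials, via an explicit feature map. I read ``a function $g$ induced by $\kappa$'' as an element of the sup-norm closure of $\spn\{\kappa(\cd,y):y\in K\}$. Since $\kappa(x,x)=1$, the reproducing property gives $\abs{h(x)}\le \norm{h}_{\cH_\kappa}$ for every $h$ in the RKHS $\cH_\kappa$. Hence RKHS-norm approximation implies sup-norm approximation, and because finite spans are dense in $\cH_\kappa$, it suffices to approximate $f$ in sup norm by elements of $\cH_\kappa$.

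\textbf{Step 1: explicit feature map.} Factor the kernel as
\[
\kappa(x,y)=e^{-\abs{x}^2/(2\sigma^2)}\,e^{-\abs{y}^2/(2\sigma^2)}\,e^{\angbk{x,y}/\sigma^2}.
\]
Expanding the last factor with the multinomial theorem gives $e^{\angbk{x,y}/\sigma^2}=\sum_{\beta\in\mathbb N^d} c_\beta x^\beta y^\beta$, where $c_\beta=\sigma^{-2\abs{\beta}}/\beta!>0$. Define $\Phi:K\to\ell^2(\mathbb N^d)$ by $\Phi(x)_\beta:=e^{-\abs{x}^2/(2\sigma^2)}c_\beta^{1/2}x^\beta$. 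This lies in $\ell^2$ because $\sum_\beta c_\beta x^{2\beta}=e^{\abs{x}^2/\sigma^2}$, and it satisfies $\kappa(x,y)=\angbk{\Phi(x),\Phi(y)}$. By the standard feature-space characterization of an RKHS, $\cH_\kappa=\{x\mapsto\angbk{w,\Phi(x)}:w\in\ell^2\}$.

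\textbf{Step 2: weighted monomials lie in $\cH_\kappa$.} Take $w=c_\beta^{-1/2}e_\beta$. This shows $x\mapsto e^{-\abs{x}^2/(2\sigma^2)}x^\beta\in\cH_\kappa$ for every multi-index $\beta$. By linearity, $e^{-\abs{x}^2/(2\sigma^2)}p(x)\in\cH_\kappa$ for every polynomial $p$. This step is where positivity of all the coefficients $c_\beta$ is essential, and I expect it to be the main point requiring care. In particular, one must justify that the feature-space description of $\cH_\kappa$ is valid, i.e.\ that the map $w\mapsto\angbk{w,\Phi(\cd)}$ gives exactly the RKHS with the quotient norm. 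I would cite this standard fact rather than reprove it.

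\textbf{Step 3: approximation.} Given $f\in C(K)$ and $\epsilon>0$, set $\tilde f:=f\,e^{\abs{x}^2/(2\sigma^2)}\in C(K)$. By Stone--Weierstrass on the compact set $K$, choose a polynomial $p$ with $\norm{\tilde f-p}\le\epsilon/2$. Then $g_0:=e^{-\abs{x}^2/(2\sigma^2)}p\in\cH_\kappa$ and $\norm{f-g_0}\le\epsilon/2$, since the weight is at most $1$. Finally, choose a finite combination $g=\sum_i a_i\kappa(\cd,y_i)$ with $\norm{g-g_0}_{\cH_\kappa}\le\epsilon/2$. By the sup-norm bound above, $\norm{f-g}\le\epsilon$, which proves the lemma.
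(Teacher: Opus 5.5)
Your proof is correct, and it is not really a different route. The paper gives no argument for this lemma; it simply cites \citet{steinwart2001influence}, and your Steps 1--3 are essentially the standard proof behind that reference. You expand $e^{\langle x,y\rangle/\sigma^2}$ as a power series with strictly positive coefficients, so that every weighted monomial $e^{-|x|^2/(2\sigma^2)}x^\beta$ is an induced function. You then apply Stone--Weierstrass and deal with the nonvanishing factor $e^{-|x|^2/(2\sigma^2)}e^{-|y|^2/(2\sigma^2)}$. Approximating $f\,e^{|x|^2/(2\sigma^2)}$ instead of $f$ is the usual way to pass from the exponential kernel to the Gaussian.

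Beyond the bare citation, your write-up adds self-containment. It also ends with an explicit finite combination $g=\sum_i a_i\kappa(\cdot,y_i)$, which is an induced function under either reading of ``induced by $\kappa$''. That is exactly the object the proof of Proposition~\ref{prop:D_metric} needs. The paper obtains it there by a separate step: it projects $w$ onto $H_0$ and approximates in $H$ using $\sup_z\|\Psi(z)\|_H=1$. Your bound $\sup_x|h(x)|\le\|h\|_{\mathcal{H}_\kappa}$, together with density of finite spans in $\mathcal{H}_\kappa$, builds this into the lemma directly.
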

Equipped with these results, we prove Proposition \ref{prop:D_metric}.
\begin{proof}[Proof of Proposition \ref{prop:D_metric}]
 First, we note that it is straightforward to show that $D_\mu$ is non-negative, symmetric, and satisfies the triangle inequality. We check that $D_{\mu}(\nu,\nu')=0$ implies $\nu = \nu'$.
Assume $D_{\mu}(\nu,\nu')=0$ and set $m:=\nu-\nu'$.

Next, we check the universality of $\kappa$ on $\Z$.
In case (i), $\Z\subset\R^{d_\X+d_\Z}$ is compact, so Lemma~\ref{lemma:steinwart} applies with $d=d_\X+d_\Z$. In case (ii), define the invertible map $\psi:\X\times\A\to\R^{d_\X+m}$ by $\psi(x,a_i)=(x,e_i/\sqrt{2})$, where $(e_i)$ is the standard basis of $\R^m$.
Then $\psi(\cZ)$ is compact and for all $z,u\in\cZ$ and
\[
\kappa(z,u)=\exp\crbk{-\frac{\abs{\psi(z)-\psi(u)}^2}{2\sigma^2}}
\]
Thus $\kappa_\sigma$ is the Gaussian kernel on the compact set $\psi(\Z)\subset\R^{d+m}$,
which is universal on $\psi(\Z)$.

Define $L:C(\Z)\to\R$ by $L[f]:=\int_{\Z} fdm$; then $\abs{L[f]}\le \norminf{f}\normTV{m}$, so $L$ is continuous.

To argue $m = 0$, we prove by contradiction and assume that $\TV{m}>0$. Consider arbitrary $f\in C(\Z)$ and $\eta>0$. We show that $L[f] <\eta$. By universality, there exist a Hilbert space $(H,\angbk{\cd,\cd}_H)$, a feature map $\Psi:\Z\to H$
with $\kappa(x,y)=\angbk{\Psi(x),\Psi(y)}_H$, and an induced function $g(\cdot)=\angbk{w,\Psi(\cdot)}_H$ such that
\[
\norm{f-g}<\frac{\eta}{2\TV{m}}.
\]
Let $H_0:=\overline{\spn\{\Psi(z):z\in\Z\}}$ and write $H\ni w=w_0+w_\perp$ with $w_0\in H_0$ and $w_\perp\in H_0^\perp$.
Then $g(\cdot)=\angbk{w_0,\Psi(\cdot)}_H$.
Since $w_0\in H_0$, we can choose $$w_n=\sum_{i=1}^n \alpha_i\Psi(z_i)\in \spn\{\Psi(z):z\in\Z\}$$ such that
$\norm{w_0-w_n}_H<\eta/(2M\normTV{m})$, where $M:=\sup_{z\in\Z}\norm{\Psi(z)}_H=\sup_{z}\sqrt{\kappa(z,z)}= 1$.
Then the function $h(\cdot):=\angbk{w_n,\Psi(\cdot)}_H=\sum_{i=1}^n \alpha_i\kappa(z_i,\cdot)$ satisfies
$$\norm{g-h}\le M\norm{w_0-w_n}_H<\eta/(2\normTV{m}).$$

Since $\kappa$ is positive on compact $\Z$, $D_{\mu}(\nu,\nu')=0$ implies that
$\int_{\Z}\kappa(z,u)m(du)=0,$ for all $z\in\Z$. So, $L[\kappa(z_i,\cdot)]=0$ for all $i$, and we have $L[h]=0$. Therefore
\[
\abs{L[f]}
\le \abs{L[f-g]}+\abs{L[g-h]}+\abs{L[h]}
\le \normTV{m}\crbk{\norm{f-g}+\norm{g-h}}
<\eta.
\]
As $\eta>0$ is arbitrary, $L[f]=0$ for all $f\in C(\Z)$. But by Lemma~\ref{lemma:riesz}, this can only happen if $m=0$, contradicting our assumption that $\TV{m} >0$. Hence we must have that $m = 0$; i.e. $\nu=\nu'$.
\end{proof}

\section{Proof of Proposition \ref{prop:vanishing_err}}\label{section:proof:prop:vanishing_err}
We first introduce two classical results for martingales and Markov chains.
\begin{lemma}[\citet{Doob1953}, Chapter VII, Theorem 4.1]\label{lemma:Doob_L2_conv}
Let $(M_n,\cF_n)_{n\ge 0}$ be a real-valued martingale such that
$\sup_{n\ge 0}E[M_n^2]<\infty$. 
Then $M_n$ converges a.s. and in $L^2$ as $n\ra\infty$ to a square integrable random variable $M_\infty$. In particular, converging real sequences are Cauchy, so $\sup_{m\ge n}\abs{M_m-M_n}\ra 0$ a.s. as $n\ra\infty$. 
\end{lemma}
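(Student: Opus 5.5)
The plan is to work with the martingale differences $d_k:=M_k-M_{k-1}$ and get $L^2$ convergence from orthogonality of increments. Almost sure convergence will then follow from Doob's maximal inequality applied to the tail of the martingale. The final claim, $\sup_{m\ge n}\abs{M_m-M_n}\ra 0$ a.s., is obtained along the way as the key step rather than deduced at the end.

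\textbf{Step 1 ($L^2$ convergence).} First, for $j<k$ we have $E[d_jd_k]=E[d_jE[d_k|\cF_{k-1}]]=0$, and similarly $E[M_0d_k]=0$. Hence
\[
E[M_n^2]=E[M_0^2]+\sum_{k=1}^n E[d_k^2],
\qquad
E[(M_N-M_n)^2]=\sum_{k=n+1}^N E[d_k^2].
\]
Since $\sup_n E[M_n^2]<\infty$, the series $\sum_k E[d_k^2]$ converges. So $(M_n)$ is Cauchy in $L^2$ and has an $L^2$ limit $M_\infty$.

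\textbf{Step 2 (uniform tail control in probability).} Fix $n$. The process $(M_{n+j}-M_n)_{j\ge0}$ is a square-integrable martingale with respect to $(\cF_{n+j})_{j\ge 0}$, so its square is a nonnegative submartingale. Doob's maximal inequality then gives, for every $N>n$ and $\epsilon>0$,
\[
P\Big(\max_{n\le m\le N}\abs{M_m-M_n}>\epsilon\Big)\le \frac{1}{\epsilon^2}\sum_{k=n+1}^N E[d_k^2].
\]
Letting $N\ra\infty$ by monotone continuity of $P$ yields
\[
P\Big(\sup_{m\ge n}\abs{M_m-M_n}>\epsilon\Big)\le \frac{1}{\epsilon^2}\sum_{k>n}E[d_k^2]\ra 0 .
\]
This is the step I expect to carry the weight of the argument. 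It is the only place where genuine martingale structure, beyond orthogonality, is used.

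\textbf{Step 3 (upgrade to almost sure).} To pass from convergence in probability to almost sure convergence, I will use monotonicity. Set $W_n:=\sup_{m,m'\ge n}\abs{M_m-M_{m'}}$. Then $W_n$ is nonincreasing in $n$ and $W_n\le 2\sup_{m\ge n}\abs{M_m-M_n}$, so by Step 2, $P(W_n>2\epsilon)\ra0$. Since $W_n$ decreases to a limit $W_\infty$, we have $P(W_\infty>2\epsilon)\le \inf_n P(W_n>2\epsilon)=0$ for every $\epsilon$, hence $W_\infty=0$ a.s.

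Consequently $(M_n)$ is a.s. Cauchy and converges a.s. Moreover $\sup_{m\ge n}\abs{M_m-M_n}\le W_n\ra0$ a.s., which is the final claim of the lemma. Finally, I will identify the a.s. limit with the $L^2$ limit $M_\infty$: $L^2$ convergence gives a subsequence converging a.s. to $M_\infty$, and along that subsequence the a.s. limit must coincide with $M_\infty$ almost surely.
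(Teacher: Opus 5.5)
Your proof is correct, but it differs from the paper, which gives no argument for the convergence itself. The paper simply cites Doob's martingale convergence theorem and then gets the final claim pathwise from the fact that a convergent real sequence is Cauchy: on the convergence event, $\sup_{m\ge n}\abs{M_m-M_n}\le \sup_{m\ge n}\abs{M_m-M_\infty}+\abs{M_n-M_\infty}\ra 0$. You reverse the order of deduction, and every step checks out:
\begin{itemize}
\item Orthogonality of increments gives $\sum_k E[d_k^2]<\infty$ and the $L^2$ limit.
\item Doob's (Kolmogorov's) maximal inequality, applied to the shifted martingale $(M_{n+j}-M_n)_{j\ge 0}$, gives the uniform tail estimate $P\big(\sup_{m\ge n}\abs{M_m-M_n}>\epsilon\big)\le\epsilon^{-2}\sum_{k>n}E[d_k^2]$. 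The passage $N\ra\infty$ is valid because the events $\set{\max_{n\le m\le N}\abs{M_m-M_n}>\epsilon}$ increase to the supremum event.
\item The monotone quantity $W_n$ upgrades this to almost sure Cauchyness, since $\set{W_\infty>2\epsilon}\subset\set{W_n>2\epsilon}$ for every $n$. A.s.\ convergence and the identification with the $L^2$ limit via an a.s.\ convergent subsequence then follow.
\end{itemize}
The citation buys brevity and generality: Doob's upcrossing-based theorem gives a.s.\ convergence under mere $L^1$-boundedness. Your argument buys a self-contained, upcrossing-free proof tailored to the $L^2$ setting the paper actually uses, plus a quantitative maximal tail bound. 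For the paper's martingales $S^D_{0,m}(z_j)$ and $M_{j,0,m}$, that bound reads $P\big(\sup_{m\ge n}\abs{S^D_{n,m}(z_j)}>\epsilon\big)\lesssim \epsilon^{-2}\sum_{k\ge n}\alpha_{k+1}^2$. This is more than the qualitative a.s.\ statement the paper needs, but it could serve as a starting point for convergence rates.
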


\begin{lemma}[Poisson's equation]\label{lemma:poisson_eqn}
Assume Assumption~\ref{assump:unif_ergodicity}. Let $f\in C(\Z)$ and set $ f_c:=f-\mu_b[f]$.
Define 
\begin{equation}\label{eqn:Poisson_solution_def_short}
v_f(z):=\sum_{t=0}^{\infty} P_b^t [f_c](z),\qquad z\in\Z.
\end{equation}
Then, the series \eqref{eqn:Poisson_solution_def_short} converges absolutely and uniformly on $\Z$,
$v_f$ is bounded measureable, and it solves the Poisson equation $v_f - P_b [v_f] = f_c$
Moreover,
\begin{equation}\label{eqn:Poisson_sup_bound_short}
\norm{v_f}\le \frac{2c}{1-\rho}\norm{f}.
\end{equation}
\end{lemma}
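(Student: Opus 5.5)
The plan is to verify directly that the explicit series is well defined, then check the Poisson identity term by term, and read off the sup bound from Assumption~\ref{assump:unif_ergodicity}.

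\textbf{Step 1: geometric decay of each term.} Since $f_c = f-\mu_b[f]$, invariance gives $\mu_b[f_c]=0$. Hence, for each $z\in\Z$ and $t\ge 0$,
\[
P_b^t[f_c](z)=\int (f(u)-\mu_b[f])\,P_b^t(z,du) = \int f(u)\,\crbk{P_b^t(z,\cd)-\mu_b}(du).
\]
The signed measure $P_b^t(z,\cd)-\mu_b$ has total mass zero. The total-variation bound of Assumption~\ref{assump:unif_ergodicity} therefore gives
\[
\abs{P_b^t[f_c](z)}\le \norm{f}\,\normTV{P_b^t(z,\cd)-\mu_b}\le c\rho^t\norm{f}.
\]
Absorbing a factor of $2$ here covers either total-variation normalization convention (half the variation or the full variation). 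This is the step that produces the constant $2c$ in \eqref{eqn:Poisson_sup_bound_short}.

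\textbf{Step 2: convergence and the sup bound.} By Step 1, the terms of the series are bounded uniformly in $z$ by a summable geometric sequence. The Weierstrass M-test then gives absolute and uniform convergence on $\Z$. Each $P_b^t[f_c]$ is bounded and measurable, because $P_b$ is a Borel kernel and $f$ is continuous. A uniform limit of bounded measurable functions is bounded and measurable, so $v_f$ is. Summing the bounds,
\[
\norm{v_f}\le \sum_{t\ge 0}2c\rho^t\norm{f}=\frac{2c}{1-\rho}\norm{f}.
\]

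\textbf{Step 3: the Poisson equation.} Write $S_N := \sum_{t=0}^{N}P_b^t[f_c]$ for the partial sums. Bounded convergence lets $P_b$ pass through the uniform limit: since $\norm{S_N - v_f}\to 0$ and $P_b$ is a sup-norm contraction, $P_b[S_N]\to P_b[v_f]$ uniformly. The partial sums telescope:
\[
S_N - P_b[S_N] = f_c - P_b^{N+1}[f_c].
\]
The last term is bounded by $2c\rho^{N+1}\norm{f}$, which tends to $0$. Letting $N\to\infty$ yields $v_f - P_b[v_f] = f_c$.

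\textbf{Expected obstacle.} None of these steps is a real obstacle. The only point needing care is the total-variation convention in Step 1, where the factor $2$ accounts for either normalization. One could also note that continuity of $f$ is not actually needed; boundedness and measurability of $f$ suffice for every step.
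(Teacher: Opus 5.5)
Your proof is correct and follows the paper's argument. Both use a geometric bound on $\norm{P_b^t[f_c]}$ from uniform ergodicity, then the M-test for absolute and uniform convergence and the sup bound, and then pass $P_b$ through the series; your telescoping of partial sums is a more explicit version of the paper's termwise application. The only cosmetic difference is where the factor $2$ comes from: the paper uses the full-variation convention and obtains it from $\norm{f_c}\le 2\norm{f}$, whereas you integrate $f$ itself against the zero-mass measure $P_b^t(z,\cd)-\mu_b$, which under that convention gives the sharper constant $c/(1-\rho)$ (and $\mu_b[f_c]=0$ needs only that $\mu_b$ is a probability measure, not invariance).
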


\begin{proof}[Proof of Lemma \ref{lemma:poisson_eqn}]
By Assumption~\ref{assump:unif_ergodicity}, for any bounded $h$ and any $z\in\Z$,
\[
\abs{P_b^t [h](z)-\mu_b[h]}
\le \norm{h}\normTV{P_b^t(z,\cd)-\mu_b(\cd)}
\le c\rho^t \norm{h}.
\]
Apply this with $h=f_c$ and use $\mu_b[f_c]=0$ to get
$\norm{P_b^t[f_c]}\le c\rho^t\norm{f_c}\le 2c\rho^t\norm{f}$.
Hence $\sum_{t\ge 0}\norm{P_b^tf_c}<\infty$, which implies absolute and uniform convergence of
\eqref{eqn:Poisson_solution_def_short} and yields \eqref{eqn:Poisson_sup_bound_short}.
Then, absolute and uniform convergence justify the termwise application of $P_b$, giving
$P_b [v_f]=\sum_{t\ge 0}P_b^{t+1}[f_c]=\sum_{t\ge 1}P_b^{t}[f_c]$ and thus
$v_f-P_b[v_f]=f_c$.
\end{proof}

\begin{proof}[Proof of Proposition~\ref{prop:vanishing_err}]
Define for $m\ge n$,
\[
S^{D}_{n,m}:=\sum_{k=n}^{m-1}\alpha_{k+1}D_{k+1},\qquad
S^{G}_{n,m}:=\sum_{k=n}^{m-1}\alpha_{k+1}G_{k},\qquad
S^{B}_{n,m}:=\sum_{k=n}^{m-1}\alpha_{k+1}B_{k}.
\]
We will show the a.s. convergence for each of these sums. 
\paragraph{The bias term $S^B_{n,m}$:}
By definition of $\cN(n,T)$, for every $m\in\cN(n,T)$ one has
$\sum_{k=n}^{m-1}\alpha_{k+1}\le T$, hence
\begin{equation}\label{eqn:B_window_bound}
\max_{m\in\cN(n,T)}\norm{S^{B}_{n,m}}
\le
\sum_{k=n}^{\max\cN(n,T)-1}\alpha_{k+1}\norm{B_k}
\le T\sup_{k\ge n}\norm{B_k}.
\end{equation}
For each $k$, using the definition of $B_k$ in \eqref{eqn:def_gn_bn}, the bound
$\norm{\Phi_{\mu_b}[\delta_{Z_k}]}\le \kappa_\wedge^{-1}$,
and the $\gamma$-contraction of $\overline\cT$,
\[
\norm{B_k}
\le \frac{1}{\kappa_\wedge}\norm{\overline\cT[q_k]-\overline\cT[\bar q_k]}
\le \frac{\gamma}{\kappa_\wedge}\norm{q_k-\bar q_k}.
\]
Lemma~\ref{lemma:q_bar_approx_q} gives $\norm{q_k-\bar q_k}\to 0$ a.s., hence
$\sup_{k\ge n}\norm{B_k}\to 0$ a.s. Therefore, from \eqref{eqn:B_window_bound}, we have
\begin{equation}\label{eqn:B_window_vanish}
\max_{m\in\cN(n,T)}\norm{S^{B}_{n,m}}\ra 0
\end{equation}
a.s. as $n\ra\infty$. 

\paragraph{A $\delta$-net argument:} 
Our strategy to control the sup error of $S^D_{m,n}$ and $S^G_{m,n}$ starts from observing that these errors are uniformly Lipschitz in $z$. Therefore, for any $\delta > 0$ we can construct a finite $\delta$-net so that we only need to track the errors at those points within the net.  

Specifically, since $\Z$ is compact, for all $z,z',u\in\Z$ the mean value theorem gives
\[
\abs{\kappa(z,u)-\kappa(z',u)}
\le \frac{\diam(\Z)}{\sigma^2}\abs{z-z'}.
\]
Here we abuse the notation when $\A$ is finite to denote $|(x,a_i)-(x',a_j)| = |x-x'| + \1_{i\neq j}$. 

Set $L_\kappa:=\diam(\Z)/\sigma^2$. Recall that in \eqref{eqn:def_c_mu}, we define $c_{\mu_b}(z) = \int_\Z \kappa(z,u)\mu_b(du)$. So, $z\ra c_{\mu_b}(z)$ is $L_\kappa$-Lipschitz.
Therefore, for every $u,z,z'\in\Z$, 
$$\begin{aligned}\abs{\Phi_{\mu_b}[\delta_u](z) -\Phi_{\mu_b}[\delta_u](z')} &=\abs{\frac{\kappa(z,u)}{c_{\mu_b}(z)} - \frac{\kappa(z',u)}{c_{\mu_b}(z')}}\\
    &\leq \abs{\frac{\kappa(z,u) - \kappa(z',u)}{c_{\mu_b}(z)}}+\abs{ \frac{\kappa(z',u)(c_{\mu_b}(z)-c_{\mu_b}(z'))}{c_{\mu_b}(z)c_{\mu_b}(z')}}\\
    &\leq L_\kappa \crbk{\frac{1}{\kappa_\wedge} + \frac{1}{\kappa_\wedge^2}}|z-z'|;
\end{aligned}$$
i.e. $\Phi_{\mu_b}[\delta_u]$ is Lipschitz with constant $L_\Phi:=L_\kappa(\frac{1}{\kappa_\wedge}+\frac{1}{\kappa_\wedge^2})$. Moreover, $\norm{\Phi_{\mu_b}[\delta_u]}\le \frac{1}{\kappa_\wedge}.$

\begin{comment}Thus, for any bounded measurable $f$,
\begin{equation}\label{eqn:K_Lipschitz_short}
\abs{\cK_{\mu_b}[f](z)-\cK_{\mu_b}[f](z')}
=\abs{\int_\Z f(u)\big(\Phi_u(z)-\Phi_u(z')\big)\mu_b(du)}
\le \norm{f}L_\Phi\norm{z-z'}.
\end{equation}
\end{comment}

Also, by Lemma~\ref{lemma:bounded_trackers}, $\abs{Y_{k+1}}\le \frac{1}{1-\gamma}$ and hence $\abs{E[Y_{k+1}| \cF_k]}\le \frac{1}{1-\gamma}$. Thus $D_{k+1}=\big(Y_{k+1}-E[Y_{k+1}| \cF_k]\big)\Phi_{\mu_b}[\delta_{Z_k}]$ satisfies
\begin{equation}\label{eqn:D_uniform_and_Lip_short}
\norm{D_{k+1}}\le \frac{2}{(1-\gamma)\kappa_\wedge},
\qquad
\abs{D_{k+1}(z)-D_{k+1}(z')}\le \frac{2L_\Phi}{1-\gamma}\abs{z-z'}.
\end{equation}
Moreover, with $\norm{\overline\cT_{\mu_b}[\bar q_k]}\le \frac{1}{1-\gamma}$ and $\norm{\overline\cT[\bar q_k]}\le \frac{1}{1-\gamma}$, 
the Markov term
$G_k=\overline\cT[\bar q_k](Z_k)\Phi_{\mu_b}[\delta_{Z_{k}}]-\overline\cT_{\mu_b}[\bar q_k]$
satisfies the uniform Lipschitz bound
\begin{equation}\label{eqn:G_Lip_short}
\begin{aligned}
\abs{G_k(z)-G_k(z')}
&\le \frac{L_\Phi}{1-\gamma}\abs{z-z'} + \cK_{\mu_b}[\overline\cT[\bar q_k]](z) -\cK_{\mu_b}[\overline\cT[\bar q_k]](z')\\
&\leq \frac{L_\Phi}{1-\gamma}\abs{z-z'} + \frac{\int_\Z \abs{c_{\mu_b}(z')\kappa(z,u) -c_{\mu_b}(z)\kappa(z',u)} \overline\cT[\bar q_k](u)\mu_b(du)}{c_{\mu_b}(z)c_{\mu_b}(z')}\\
&\leq \frac{2L_\Phi}{1-\gamma}|z-z'|.
\end{aligned}
\end{equation}

Fix $\epsilon>0$.
Let $L_0:=\frac{2}{1-\gamma}L_\Phi,$ and $
\delta:=\frac{\epsilon}{L_0 T}.$ Since $\Z$ is compact, there exists a finite $\delta$-net $\{z_1,\dots,z_l\}\subset\Z$ such that for any $z\in\Z$, there exists $j\leq N$ with $|z - z_j|\leq\delta$. Then, for any function $h:\Z\to\R$ that is $L$-Lipschitz,
\begin{equation}\label{eqn:net_reduction}
\norm{h}
=\sup_{z\in\Z}\abs{h(z)}
\le \max_{1\le j\le l}\abs{h(z^{(j)})}+L\delta.
\end{equation}

If $m\in\cN(n,T)$, then $\sum_{k=n}^{m-1}\alpha_{k+1}\le T$, hence by \eqref{eqn:D_uniform_and_Lip_short}--\eqref{eqn:G_Lip_short}
the functions $S^{D}_{n,m}$ and $S^{G}_{n,m}$ are both $L_0T$-Lipschitz. Therefore, for all $n$,
\begin{equation}\label{eqn:net_reduction_window}
\begin{aligned}
\max_{m\in\cN(n,T)}\norm{S^{D}_{n,m}}
&\le \max_{1\le j\le l}\max_{m\in\cN(n,T)}\abs{S^{D}_{n,m}(z_j)}+\epsilon,\\
\max_{m\in\cN(n,T)}\norm{S^{G}_{n,m}}
&\le \max_{1\le j\le l}\max_{m\in\cN(n,T)}\abs{S^{G}_{n,m}(z_j)}+\epsilon.
\end{aligned}
\end{equation}
Thus, it suffices to control the error processes at the finitely many points $z_1,\ds, z_l$.

\paragraph{The martingale term $D_{k+1}$.}
Fix $j\in\{1,\dots,l\}$ and consider the scalar process
$\set{S^{D}_{0,m}(z_j):m\geq 0}$. 
Since $E[D_{k+1}(z_j)|\cF_k]=0$ by definition of $D_{k+1}$, $\set{S^{D}_{0,m}(z_j):m\geq 0}$ is a $\cF_m$-martingale.
Moreover, using \eqref{eqn:D_uniform_and_Lip_short} and orthogonality of martingale differences,
\begin{align*}
E\big[(S^{D}_{0,m}(z_j))^2\big]
&=\sum_{k=0}^{m-1}\alpha_{k+1}^2 E\big[D_{k+1}(z_j)^2\big]\\
&\le \frac{4}{(1-\gamma)^2\kappa_\wedge^2}\sum_{k=0}^{\infty}\alpha_{k+1}^2<\infty.
\end{align*}
Hence, Lemma~\ref{lemma:Doob_L2_conv} holds and we conclude that
\[
\max_{m\in\cN(n,T)}\abs{S^{D}_{n,m}(z_j)}
\le \sup_{m\ge n}\abs{S^{D}_{0,m}(z_j) - S^{D}_{0,n}(z_j)}\ra 0
\]
a.s. as $n\ra\infty$. 

Taking the maximum over the finite set $j=1,\dots,l$ preserves almost sure convergence. Hence
\eqref{eqn:net_reduction_window} yields that for any $\epsilon > 0$, there exists $\Omega_\epsilon$ for s.t. $\Omega^c_\epsilon$ is $P$-null and for all $\omega\in\Omega_\epsilon$
\[
\limsup_{n\ra\infty}\max_{m\in\cN(n,T)}\norm{S^{D}_{n,m}}\leq\epsilon.
\]
Let $\Omega_* = \bigcap_{k=1}^\infty\Omega_{1/k}$. Clearly, $P(\Omega_*=1)$ and for all $\omega\in\Omega_*$
\begin{equation}\label{eqn:D_window_vanish}
\limsup_{n\ra\infty}\max_{m\in\cN(n,T)}\norm{S^{D}_{n,m}}=0
\end{equation}
showing a.s. convergence. 

\paragraph{The Markov term $G_k$ via Poisson's equation.} Define $\forall u\in\Z$, 
\[
f_{j,k}(u):=\overline\cT[\bar q_k](u)\Phi_{\mu_b}[\delta_u](z_j).
\]
Notice that $$\mu_b[f_{j,k}] = \int_{\Z}\overline\cT[\bar q_k](u)\frac{\kappa (u,z_j)}{c_{\mu_b}(z_j)}\mu_b(du) =\overline{\cT}_{\mu_b}[\bar q_k](z_j). $$
Then, by the definition of $G_k$,
\begin{equation}\label{eqn:G_as_centered_f_simplified}
G_k(z_j)=f_{j,k}(Z_k)-\mu_b[f_{j,k}].
\end{equation}
Moreover, using $\norm{\overline\cT[\bar q_k]}\le \frac{1}{1-\gamma}$ and
$\norm{\Phi_{\mu_b}[\delta_u]}\le \kappa_\wedge^{-1}$, we have
\begin{equation}\label{eqn:f_uniform_bound_simplified}
\norm{f_{j,k}}\le \frac{1}{(1-\gamma)\kappa_\wedge}.
\end{equation}

Let $v_{j,k}:\Z\to\R$ be the solution to Poisson's equation associated with $f_{j,k}$. By Lemma \ref{lemma:poisson_eqn},
\begin{equation}\label{eqn:vk_def}
v_{j,k}=\sum_{t=0}^{\infty}P_b^t[f_{j,k}-\mu_b[f_{j,k}]]
\end{equation}
where the sum converges absolutely. Since $f_{j,k}$ is $\cF_k$-measurable, so is $v_{j,k}$ is also $\cF_k$-measurable.

Also by Lemma~\ref{lemma:poisson_eqn},
$$v_{j,k}-P_bv_{j,k}=f_{j,k}-\mu_b[f_{j,k}],$$ and 
\begin{equation}\label{eqn:vk_uniform_bound}
\begin{aligned}
\norm{v_{j,k}}&\le \frac{2c}{1-\rho}\norm{f_{j,k}}\\
&\le \frac{2c}{1-\rho}\cdot\frac{1}{(1-\gamma)\kappa_\wedge}\\
&=:b_v.
\end{aligned}
\end{equation}
Therefore, by \eqref{eqn:G_as_centered_f_simplified} and the Markov property of $Z_k$, 
\begin{align*}
G_k(z_j)&=v_{j,k}(Z_k)-P_b[v_{j,k}](Z_k) \\
&= v_{j,k}(Z_k)-E\!\big[v_{j,k}(Z_{k+1})| \cF_k\big].   
\end{align*}
Define the martingale difference
$\eta_{j,k+1}:=v_{j,k}(Z_{k+1})-E\sqbkcond{v_{j,k}(Z_{k+1})}{ \cF_k}.$ Then $E[\eta_{j,k+1}| \cF_k]=0$, $\abs{\eta_{j,k+1}}\le 2\norm{v_{j,k}}\le 2b_v$, and
\begin{equation}\label{eqn:G_decomposition_simplified}
G_k(z_j)=v_{j,k}(Z_k)-v_{j,k}(Z_{k+1})+\eta_{j,k+1}.
\end{equation}
Consequently, for $m\ge n$,
\begin{equation}\label{eqn:G_sum_split_simplified}
\begin{aligned}
S^{G}_{n,m}(z_j)
&=\sum_{k=n}^{m-1}\alpha_{k+1}G_k(z_j)\\
&=
\underbrace{\sum_{k=n}^{m-1}\alpha_{k+1}\eta_{j,k+1}}_{=:M_{j,n,m}}
+
\underbrace{\sum_{k=n}^{m-1}\alpha_{k+1}\big(v_{j,k}(Z_k)-v_{j,k}(Z_{k+1})\big)}_{=:T_{j,n,m}}
\end{aligned}
\end{equation}
where $M_{j,n,m}$ is the martingale part and $T_{j,n,m}$ is the telescoping sum part.

\textit{The martingale part:} Note that $M_{j,0,m}$ is a $\cF_m$-martingale and
\[
E[M_{j,0,m}^2]=\sum_{k=0}^{m-1}\alpha_{k+1}^2E[\eta_{j,k+1}^2]
\le 4b_v^2\sum_{k=0}^{\infty}\alpha_{k+1}^2<\infty.
\]
Hence by Lemma~\ref{lemma:Doob_L2_conv}
\[
\max_{m\in\cN(n,T)}\abs{M_{j, n,m}}
\le \sup_{m\ge n}\abs{M_{j,0,m}-M_{j,0,n}}\ra0\quad\text{a.s.}
\]

\emph{The telescoping part:} For all $m\ge n$,
\begin{equation}\label{eqn:T_expand_simplified}
\begin{aligned}
T_{j,n,m}
&=\alpha_{n+1}v_{j,n}(Z_n)-\alpha_m v_{j,m-1}(Z_m)
+\sum_{k=n+1}^{m-1}(\alpha_{k+1}-\alpha_k)v_{j,k}(Z_k)\\
&\quad+\sum_{k=n+1}^{m-1}\alpha_k(v_{j,k}(Z_k)-v_{j,k-1}(Z_k)).
\end{aligned}
\end{equation}
Since $\abs{v_{j,k}(Z_k)}\le \norm{v_{j,k}}\le b_v$ and $\alpha_k$ is non-increasing, the sum of the first three terms in \eqref{eqn:T_expand_simplified} is bounded by $3b_v\alpha_{n+1}$.

For the last sum, we use the representation of $v_{j,k}$ in Lemma~\ref{lemma:poisson_eqn} and \eqref{eqn:vk_def}
\begin{align*}
\norm{v_{j,k}-v_{j,k-1}}&=\norm{\sum_{t=0}^{\infty}P_b^t[f_{j,k}- f_{j,k-1}-\mu_b[f_{j,k}-f_{j,{k-1}}]]}\\
&\le \frac{4c}{1-\rho}\norm{f_{j,k}-f_{j,k-1}}.
\end{align*}
Moreover,
\begin{align*}
\norm{f_{j,k}-f_{j,k-1}}
&\le \sup_{u\in\Z}\abs{\Phi_{\mu_b}[\delta_u](z_j)}
\norm{\overline\cT[\bar q_k]-\overline\cT[\bar q_{k-1}]}\\
&\le \frac{\gamma}{\kappa_\wedge}\norm{\bar q_k-\bar q_{k-1}}.   
\end{align*}
From \eqref{eqn:tu_Delta_initial_split},
$\bar q_k-\bar q_{k-1}=\alpha_k\big(Y_k\Phi_{\mu_b}[\delta_{Z_{k-1}}]-\bar q_{k-1}\big)$. Hence, by Lemma \ref{lemma:bounded_trackers},
\[
\norm{\bar q_k-\bar q_{k-1}}
\le \alpha_k\crbk{\frac{1}{(1-\gamma)\kappa_\wedge}+\frac{1}{(1-\gamma)\kappa_\wedge}}
=\frac{2\alpha_k}{(1-\gamma)\kappa_\wedge}.
\]
Combining the last three displays yields
\begin{equation}\label{eqn:vk_diff_alpha_simplified}
\norm{v_{j,k}-v_{j,k-1}}
\le
\frac{8c\gamma}{(1-\rho)(1-\gamma)\kappa_\wedge^2}\,\alpha_k.
\end{equation}

Therefore, for all $m\ge n$,
\[
\abs{T_{j,n,m}}
\le 3b_v\alpha_{n+1}
+\frac{8c\gamma}{(1-\rho)(1-\gamma)\kappa_\wedge^2}\sum_{k=n+1}^{\infty}\alpha_k^2
\ra0.
\]
as $n\ra\infty$. In particular, $\max_{m\in\cN(n,T)}\abs{T_{j,n,m}}\to 0$ everywhere.  Combining this with the martingale part and recalling \eqref{eqn:G_sum_split_simplified} yields $\max_{m\in\cN(n,T)}\abs{S^G_{n,m}(z_j)}\to 0$ a.s.

Applying the same strategy as in \eqref{eqn:D_window_vanish}, we convert this a.s. convergence $\max_{m\in\cN(n,T)}\abs{S^G_{n,m}(z_j)}\to 0$ to 
\begin{equation}\label{eqn:G_window_vanish}
\limsup_{n\ra\infty}\max_{m\in\cN(n,T)}\norm{S^{G}_{n,m}}=0 \quad \text{w.p.1. }
\end{equation}

\smallskip
\noindent\textbf{Step 5: conclude the proposition.}
Combining \eqref{eqn:B_window_vanish}, \eqref{eqn:D_window_vanish}, and \eqref{eqn:G_window_vanish} we obtain
\begin{align*}
&\max_{m\in\cN(n,T)}\norm{\sum_{k=n}^{m-1}\alpha_{k+1}(D_{k+1}+G_k+B_k)}\\
&\quad \leq \max_{m\in\cN(n,T)}\norm{S^{D}_{n,m}}+\max_{m\in\cN(n,T)}\norm{S^{G}_{n,m}}+\max_{m\in\cN(n,T)}\norm{S^{B}_{n,m}} \ra 0
\end{align*}
a.s. as $n\ra\infty$, proving \eqref{eqn:uniform_noise_control}.
\end{proof}

\section{Numerical Experiment Details and Additional Testing}\label{section:additional_numerial}

\subsection{Parameter Specification and Visualization Details}

We choose the discount factor $\gamma=0.7$, the bandwidth $\sigma=1$, and the initial state $X_0=(0,0)$.

Recall that $c,h,p,k$ are the ordering, holding, lost-sale, and fixed ordering costs, respectively. For the experiment presented in Section~\ref{section:numerical_experiment}, we use
$$
c=(0.3,0.35),\quad h=(0.05,0.04),\quad p=(0.8,0.9),\quad \text{and}\quad k=0.2.
$$

The demand is given by $D=|G|$ with $G\sim N(\mu,\Sigma)$, where
$$
\mu=(5,4),\quad \text{and}\quad \Sigma=\bmx{3&-0.21\\ -0.21 & 1}.
$$
It is not hard to see that the demand in this setup is negatively correlated.

The benchmark approximation of $Q^*$, denoted by $Q_{\mrm{DP}}$, is computed via dynamic programming on a discretized state space. We uniformly partition the interval $[0,I_{\max}=15]$ into $25$ sub-intervals, resulting in $625$ squares, each of which is represented as a state. The actions are $\A=\set{0,1,\ds,A_{1,\max}=10}\times\set{0,1,\ds,A_{2,\max}=8}$, so that $|\A|=99$. The DP procedure uses value iteration to compute $Q_{\mrm{DP}}$ on these $625\times 99=61875$ points, and then holds the value constant within each square corresponding to a state.

For Figure~\ref{fig:inventory_value_rmse}, the policy value is computed via Monte Carlo by averaging $256$ value estimates, each constructed by simulating the Markov chain induced by the current policy
\begin{equation}\label{eqn:greedy_policy}
    \pi_n(x) := \argmax{(a_1,a_2)\in\A} q_n(x,(a_1,a_2)) = \argmax{(a_1,a_2)\in\A} \Phi_{\mu_n}[\nu_n](x,(a_1,a_2))
\end{equation}
for $T=200$ steps; that is,
$$
\bar V=\frac{1}{256}\sum_{j=1}^{256} V_j;\qquad 
V_j:=\sum_{t=0}^T \gamma^t R_{j,t+1}.
$$

For Figure~\ref{fig:inventory_policy_compare}, we compute and plot the last-iterate policy (at $n=30000$) as in \eqref{eqn:greedy_policy}, and compare it against the greedy policy induced by $Q_{\mrm{DP}}$. The plots separately visualize the ordering quantities $a_1$ and $a_2$ for both policies on the $(x_1,x_2)$ plane.

\subsection{Additional Diagnostic Figures for Section \ref{section:numerical_experiment}}

\begin{figure}[ht]

\centering
\includegraphics[width=0.6\textwidth]{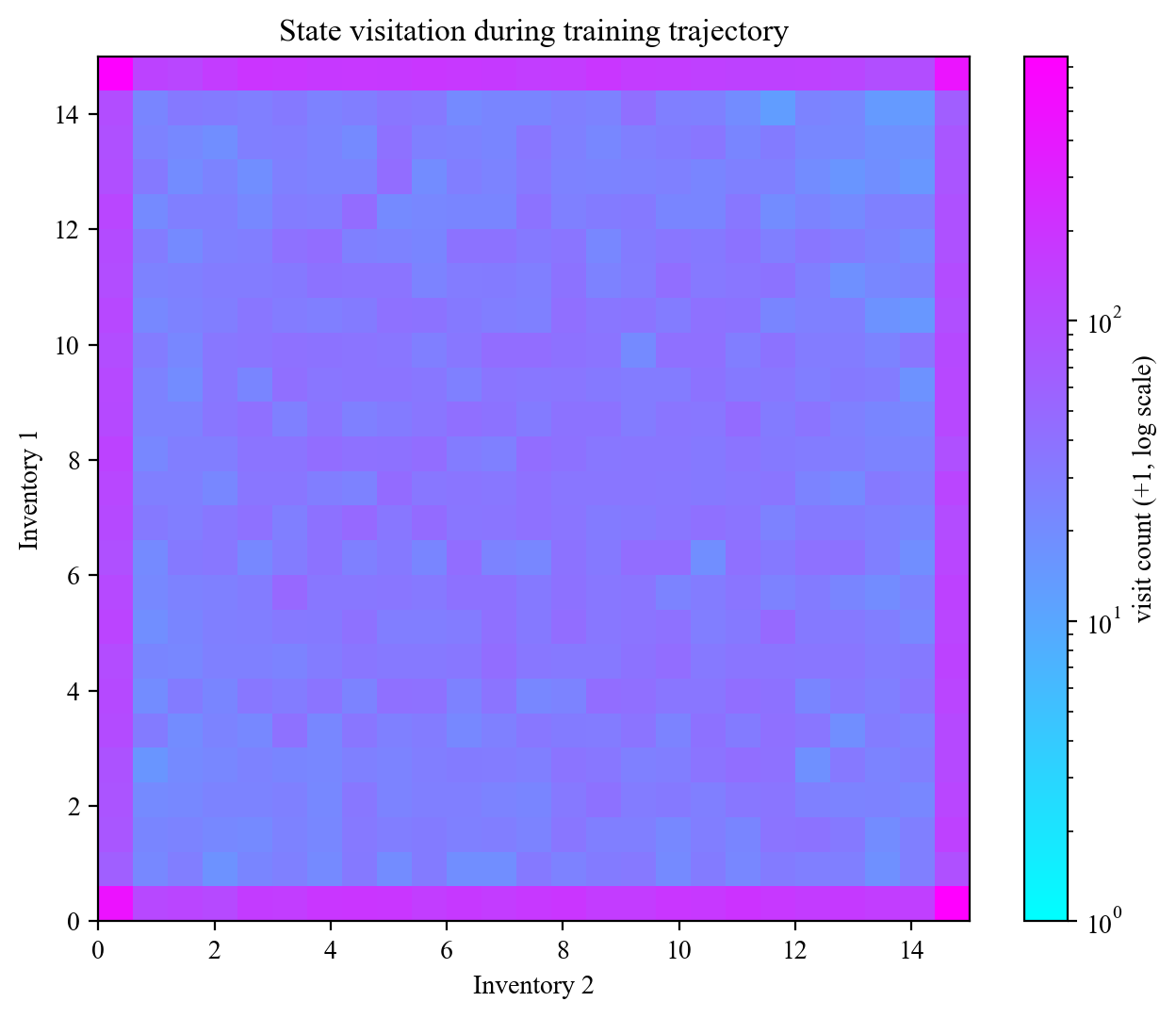}
\caption{Visitation count under the behavior policy: the entire state space is well-covered. }
\label{fig:visitation}
\end{figure}

Figure~\ref{fig:visitation} shows the visitation of inventory states under the numerical experiment settings documented in Section~\ref{section:numerical_experiment}. We observe that the uniformly randomized behavior policy explores the entire state space.

\begin{figure}[ht]

\centering
\includegraphics[width=\textwidth]{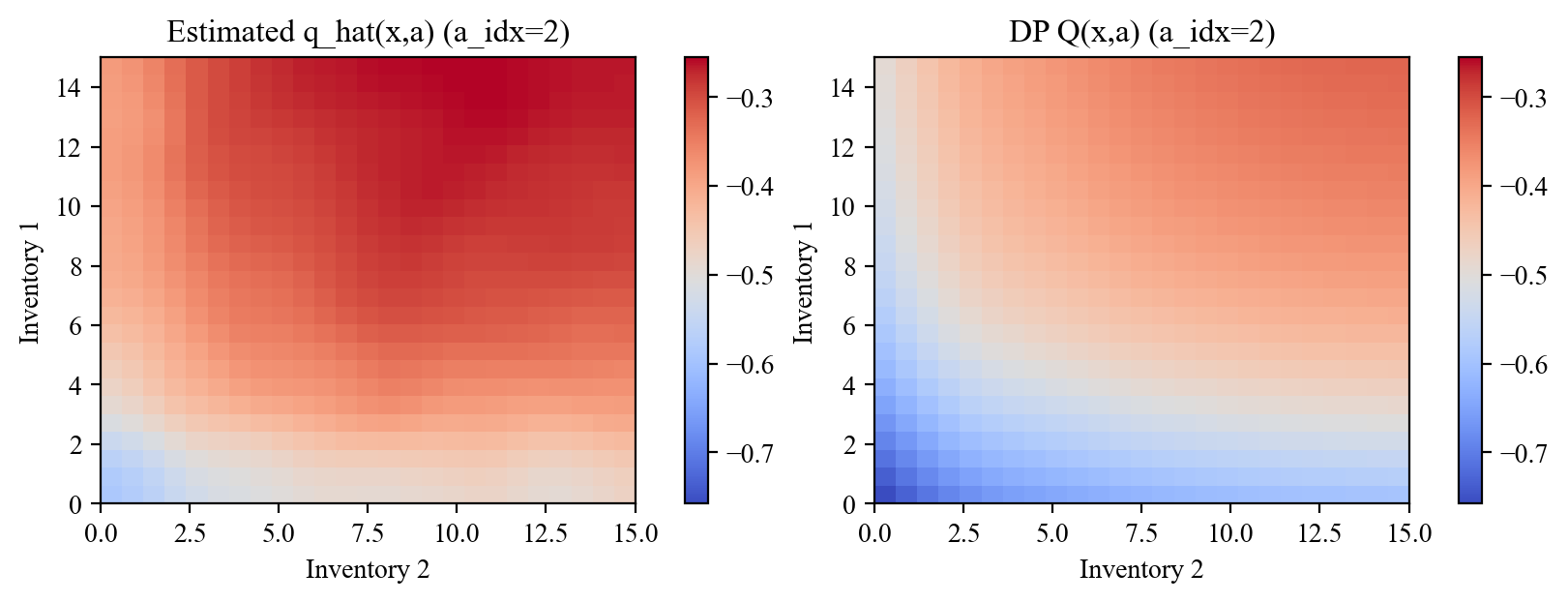}
\includegraphics[width=\textwidth]{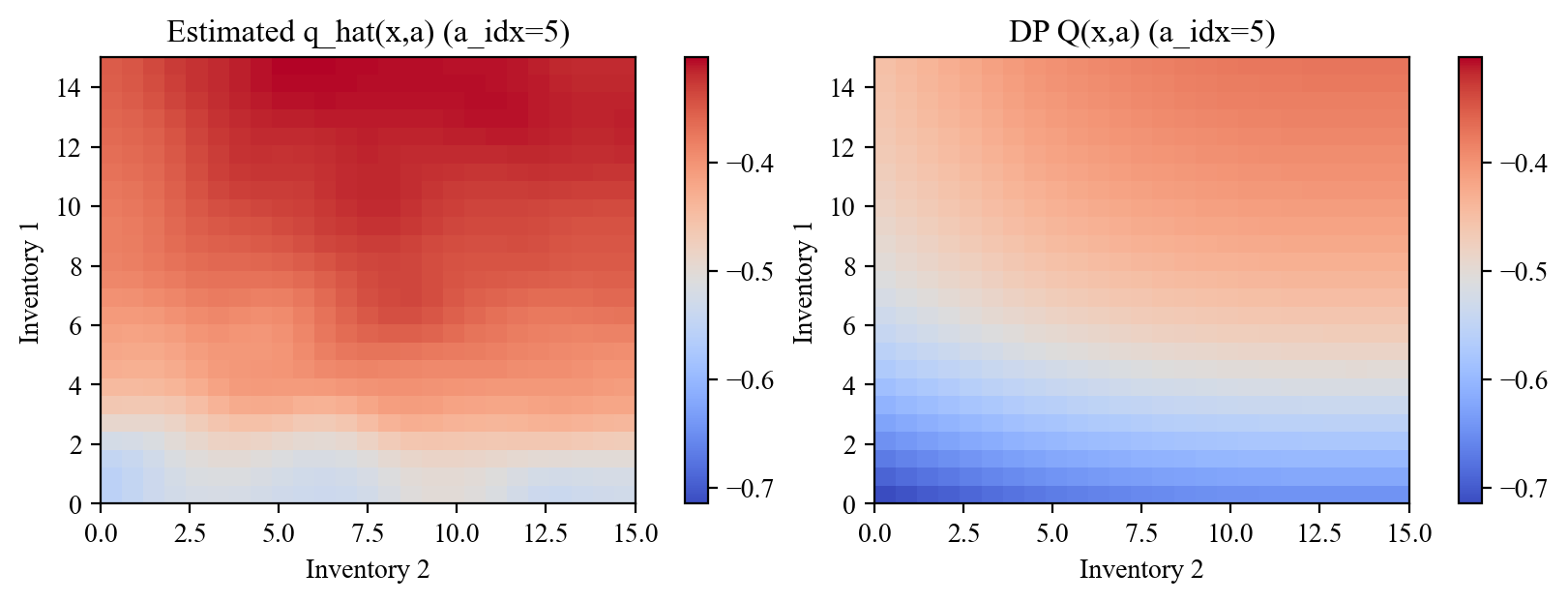}
\caption{Q-Measure estimate $q_n(\cdot,a)$ versus the DP benchmark, $a = (0,2)$ and $(0,5)$.}
\label{fig:inventory_q_slices}
\end{figure}

Figure~\ref{fig:inventory_q_slices} compares slices of the Q-functions $q_n(\cd,a)$ to the benchmark $Q_\mrm{DP}$ for two representative discrete actions $a=(0,2)$ and $a=(0,5)$. The learned surfaces match the benchmark qualitatively, but there is a visible global value shift of approximately $+0.1$, with $q_n$ overestimating $Q^*$. This indicates that the residual error of approximately $0.1$ in the RMSE curve in Figure~\ref{fig:inventory_value_rmse} is primarily driven by this global shift.

\FloatBarrier

\subsection{Learning Outcome Under Partial Coverage}
\begin{figure}[htbp]

\centering
\includegraphics[width=0.6\textwidth]{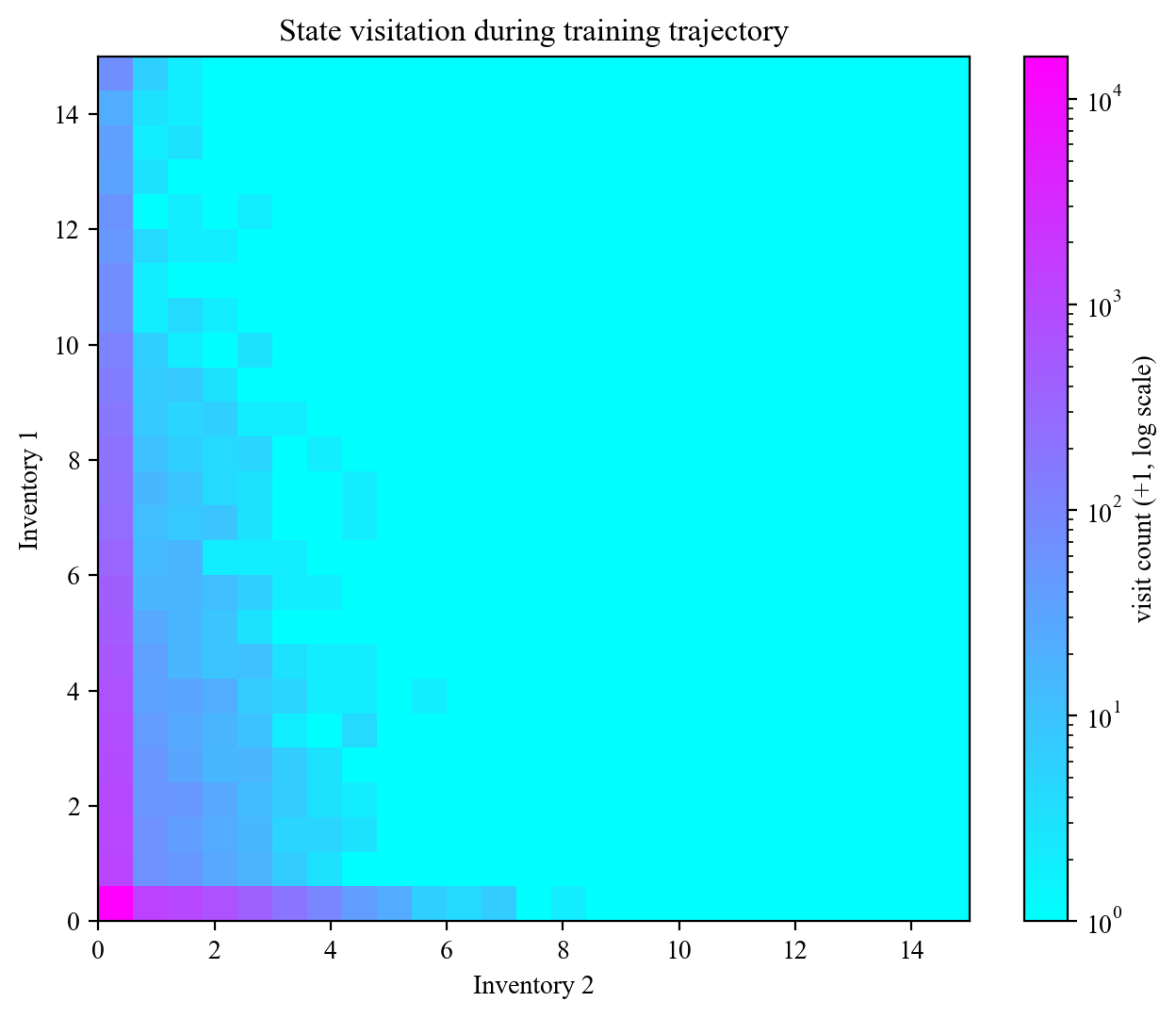}
\includegraphics[width=0.95\textwidth]{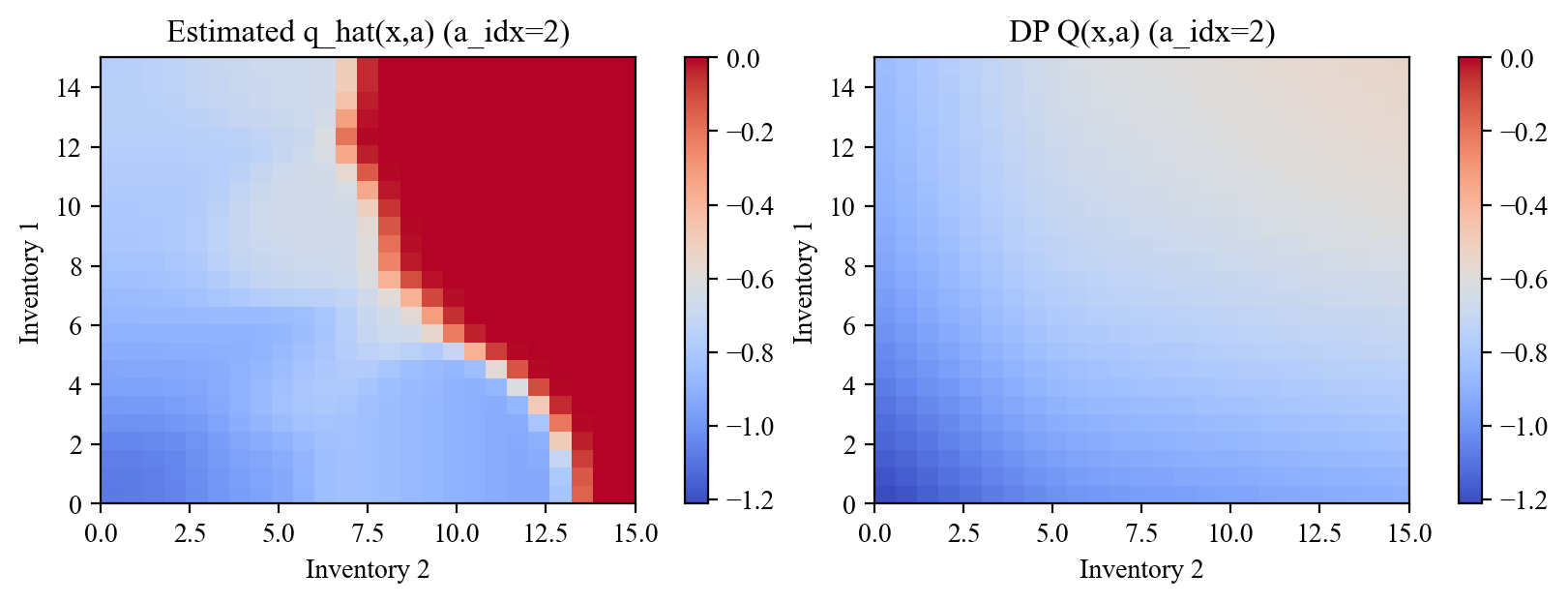}
\includegraphics[width=0.95\textwidth]{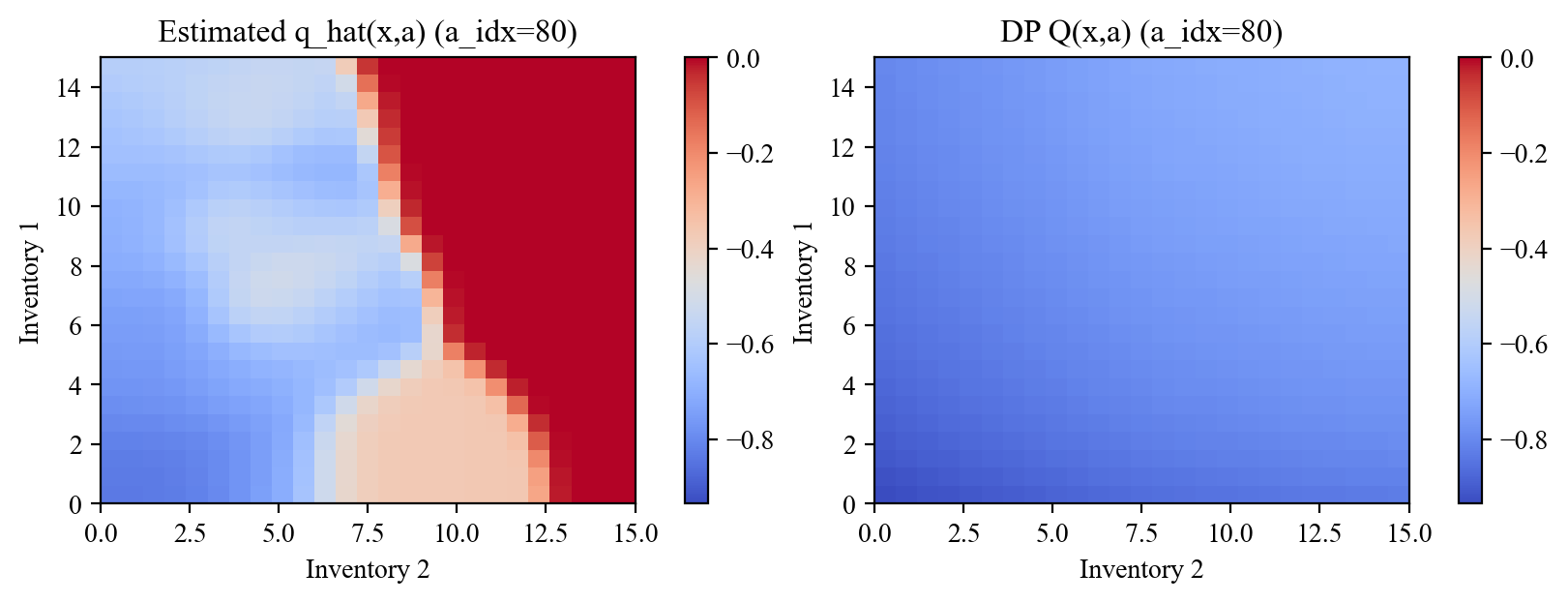}
\caption{Partial visitation of inventory states: the behavior policy explores only the bottom-left corner of the state space. The Q-measure estimates of $Q^*$ are accurate in the bottom-left region, but are off in the top-right region.}
\label{fig:partial_cov}
\end{figure}

On the other hand, we also implement our algorithm in a setting where the exploration policy is not able to explore the whole state space; see Figure~\ref{fig:partial_cov}. In this case, the demand $D = |G'|$ where $G'\sim N((8, 7),\Sigma)$ for the same $\Sigma$ as before. Thus, the demand is greater than in the earlier experimental setting.  We observe that the policy explores only the bottom-left corner of the state space. Correspondingly, the estimate $q_n$ of $Q^*$ remains accurate in regions that the behavior policy visits, but becomes unreliable in regions that the behavior policy is unable to reach.

\end{document}